\let\oldvec\vec%
\let\vec\oldvec%
\newcommand{\struc}[1]{\mathsf #1}
\newcommand{\ie}{\mbox{i.\,e.}} 
\newcommand{\CSP}{{\sf\upshape CSP}}
\newcommand{\kcons}{$k$-{\sf\upshape Cons}}
\newcommand{\kCons}{$k$-{\sf\upshape Cons}}
\newcommand{\twoCons}{$2$-{\sf\upshape Cons}}
\newcommand{\width}{\operatorname{width}}
\newcommand{\depth}{\operatorname{depth}}
\newcommand{\Prop}{\operatorname{Prop}}
\newcommand{\INC}{\text{\upshape INC}}
\newcommand{\WIN}{\text{\upshape WIN}}
\newcommand\restr[2]{{
  \left.\kern-\nulldelimiterspace 
  #1 
  \vphantom{\big|} 
  \right|_{#2} 
  }}
\newcommand{\kmo}{\mathsf k}
\colorlet{acolor}{Apricot!30}
\colorlet{bcolor}{blue}
\colorlet{ccolor}{cyan}
\colorlet{dcolor}{Dandelion}
\tikzset{
a/.style={->},
aa/.style={<-},
aaa/.style={->},
aarrow/.style={draw=blue, ultra thick,<-},
barrow/.style={draw=Red!70, ultra thick,<-},
  avertex/.style={circle,draw,inner sep=2pt,fill=acolor},
  aavertex/.style={circle,draw,inner sep=2.5pt,fill=Apricot},
bvertex/.style={circle,draw,inner sep=2pt,fill=bcolor},
rvertex/.style={regular polygon,regular polygon sides=6,draw,inner sep=2pt,fill=Red},
cvertex/.style={circle,draw,inner sep=2pt,fill=ccolor},
dvertex/.style={regular polygon,regular polygon sides=6,draw,inner sep=2.1pt,fill=dcolor},
  uvertex/.style={circle,draw=Black,inner sep=2pt,fill=white},
  schvertex/.style={regular polygon,regular polygon sides=6,draw,inner sep=2pt,fill=Red},
  vertex/.style={circle,draw=Gray,inner sep=2pt,fill=Gray}
}
\newcommand{\defi}{:=}
\newcommand{\deffi}{:=}
\renewcommand{\defi}{\deffi}
    \newcommand{\Kommentar}[1]{}
    \renewcommand{\epsilon}{\varepsilon}
    \newcommand{\posq}{\operatorname{\mathfrak{q}}}
    \newcommand{\posa}{\operatorname{\mathfrak{a}}}
    \newcommand{\posb}{\operatorname{\mathfrak{b}}}
    \newcommand{\cl}{\wp}
    \newcommand{\crit}{\operatorname{crit}}
    \newcommand{\whitenode}{\tikz{\node[circle,draw=black,fill=white,inner  sep=0pt,minimum  size=1mm] at (0,0){};} }
    \newcommand{\blacknode}{\tikz{\node[circle,draw=black,fill=black,inner  sep=0pt,minimum  size=1mm] at (0,0){};} }
    \newcommand{\xnode}{\mathsf x}
    \newcommand{\ynode}{\mathsf y}
    \renewcommand{\circ}{\uplus}
    \newcommand{\LOGSPACE}{{\upshape LOGSPACE}}
    \newcommand{\EXPTIME}{{\upshape EXPTIME}}
		\newcommand{\PTIME}{{\upshape PTIME}}
    \newcommand{\NCclass}{{\upshape NC}}
    \newcommand{\XP}{{\upshape XP}}
		\newcommand{\problembox}[3]{\smallskip\par
			\begin{center}
				\fbox{\parbox{.9\columnwidth}{
					#1\par \vspace{2mm}
					\begin{tabular}{rl}
							\textit{Input}: & #2\\
							\textit{Question}: & #3 
					\end{tabular}
				}}
			\end{center}\smallskip
		}
    \newcommand{\dom}{\operatorname{Dom}}
    \newenvironment{innerproof}{\begin{proof}}{\end{proof}}
\newenvironment{IEEEproof}{\proof}{\qed}
\newcommand{\insubmission}[1]{#1}
\renewcommand{\insubmission}[1]{}
\newlength\tdima
\newcommand\tabright[1]{%
      \setlength\tdima{\linewidth}%
      \addtolength\tdima{\@totalleftmargin}%
      \addtolength\tdima{-\dimen\@curtab}%
      \makebox[\tdima][r]{#1}}
\newcommand{\cref}{\simeq}
\def\comm#1{}
\def\incomm#1{}
\newcommand{\spaceconstraints}[1]{}
\newcommand{\confORarxiv}[2]{#2}
		\newcommand{\input{}}[1]{\includegraphics{TikZ/keinBild.png}}
		\renewcommand{\input{}}[1]{\input{#1}}
		\newcounter{todos}
		\newcommand{\input{}}[1]{\input{#1}}
\title{The Propagation Depth of Local Consistency}
\author{Christoph Berkholz}
\institute{RWTH Aachen University, Aachen, Germany}
\begin{document}

\frontmatter          %

\maketitle

\begin{abstract}
We establish optimal bounds on the number of nested propagation steps in $k$-consistency tests. 
It is known that local consistency algorithms such as arc-, path- and $k$-consistency are not efficiently parallelizable. 
Their inherent sequential nature is caused by long chains of nested propagation steps, which cannot be executed in parallel. 
This motivates the question ``What is the minimum number of nested propagation steps that have to be performed by $k$-consistency algorithms on (binary) constraint networks with $n$ variables and domain size $d$?'' 

It was known before that 2-consistency requires $\Omega(nd)$ and 3-consistency requires $\Omega(n^2)$ sequential propagation steps. 
We answer the question exhaustively for every $k\geq 2$: there are binary constraint networks where any $k$-consistency procedure has to perform $\Omega(n^{k-1}d^{k-1})$ nested propagation steps before local inconsistencies were detected. 
This bound is tight, because the overall number of propagation steps performed by $k$-consistency is at most $n^{k-1}d^{k-1}$. 
\end{abstract}

\section{Introduction}
\label{sec:CSP-Intro}
A constraint network $(X,D,C)$ consists of a set $X$ of $n$ variables over a domain $D$ of size $d$ and a set of constraints $C$ that restrict possible assignments of the variables. 
The \emph{constraint satisfaction problem} (\CSP{}) is to find an assignment of the variables with values from $D$ such that all constraints are satisfied. 
The constraint satisfaction problem can be solved in exponential time by exhaustive search over all possible assignments.
\emph{Constraint propagation} is a technique to speed up the exhaustive search by restricting the search space in advance. This is done by iteratively propagating new constraints that follow from previous ones. 
Most notably, in local consistency algorithms the overall goal is to propagate new constraints to achieve some kind of consistency on small parts of the constraint network. Additionally, if local inconsistencies were detected, it follows that the constraint network is also globally inconsistent and hence unsatisfiable.

The $k$-consistency test \cite{Freuder.1978} is a well-known local consistency technique, which enforces that every satisfying $(k-1)$-partial assignment can be extended to a satisfying $k$-partial assignment. 
At the beginning, all partial assignments that violate a constraint were marked as inconsistent. Then the following inference rule is applied iteratively:

\begin{quote}\label{quote:Consistency-rule}
If $h$ is a consistent $\ell$-partial assignment ($\ell<k$) for which there exists a variable $x\in X$ such that $h\cup\{x\mapsto a\}$ is inconsistent for all $a\in D$, then mark $h$ and all its extensions as inconsistent. 
\end{quote}

After at most $n^{k-1}d^{k-1}$ propagation steps this procedure stops. If the empty assignment becomes inconsistent, we say that (strong) $k$-consistency \emph{cannot be established}. In this case we know that the constraint network is globally inconsistent. 
Otherwise, if $k$-consistency \emph{can be established}, we can use the propagated constraints to restrict the search space for a subsequent exhaustive search. 
There are several different $k$-consistency algorithms in the literature, especially for $k=2$ (arc consistency) and $k=3$ (path consistency), which all follow this propagation scheme.
The main difference between these algorithms are the underlying data structure and the order in which they apply the propagation rule.
It seems plausible to apply the propagation rule in parallel in order to detect local inconsistencies in different parts of the constraint network at the same time. 
Indeed, this intuition has been used to design parallel arc and path consistency algorithms \cite{Samal.1987,Susswein.1991}. On the other hand, the $k$-consistency test is known to be \PTIME{}-complete \cite{Kasif.1990,Kolaitis.2003} and hence not efficiently parallelizable (unless \NCclass{}=\PTIME). 
The main bottleneck for parallel approaches are the sequential dependencies in the propagation rule: some assignments will be marked as inconsistent after some other assignments became inconsistent.

For 2-consistency the occurrence of long chains of sequential dependencies has been observed very early \cite{Dechter.1985} and was recently studied in depth in \cite{BerVer13}. There are simple constraint networks for which 2-consistency requires $\Omega(nd)$ nested propagation steps. 
Ladkin and Maddux \cite{Ladkin.1994} used algebraic techniques to show that 3-consistency requires $\Omega(n^{2})$ nested propagation steps on binary constraint networks with constant domain.
We extend these previous results and obtain a complete picture of the propagation depth of $k$-consistency.
Our main result (Theorem~\ref{thm:kConsDepthLowerBound}) states that for every constant $k\geq 2$ and given integers $n$, $d$ there is a constraint network with $n$ variables and domain size $d$ such that every $k$-consistency algorithm has to perform $\Omega(n^{k-1}d^{k-1})$ nested propagation steps.
This lower bound is optimal as it is matched by the trivial upper bound $n^{k-1}d^{k-1}$ on the overall number of propagation steps. It follows that every parallel propagation algorithm for $k$-consistency has a worst case time complexity of $\Omega(n^{k-1}d^{k-1})$. Since the best-known running time of a sequential algorithm for $k$-consistency is $O(n^kd^k)$ \cite{Cooper.1989} it follows that no significant improvement over the sequential algorithm is possible. 

\section{Preliminaries}
As first pointed out by Feder and Vardi \cite{Feder.1998} the \CSP{} is equivalent to the structure homomorphism problem where two finite relational structures $\struc{A}$ and $\struc{B}$ are given as input. 
The universe $V(\struc{A})$ of structure $\struc{A}$ corresponds to the set of variables $X$ and the universe $V(\struc{B})$ of structure $\struc{B}$ corresponds to the domain $D$. 
The constraints are encoded into relations such that every homomorphism from $\struc{A}$ to $\struc{B}$ corresponds to a solution of the \CSP{}. 
For the rest of this paper we mainly stick to this definition as it is more convenient to us. 
In fact, our main result benefits to a large extend from the fruitful connection between these two viewpoints. 

In the introduction we have presented $k$-consistency as a propagation procedure on constraint networks. 
Below we restate the definition in terms of a formal inference system (which is inspired by the proof system in \cite{Atserias.2004} and is a generalization of \cite{BerVer13}). 
This view allows us to gain insight into the structure of the propagation process and to formally state our main theorem afterwards. 
At the end of this section we provide a third characterization of $k$-consistency in terms of the existential pebble game, which is the tool of our choice in the proof of the main theorem.

\subsection{CSP-refutations}

Given two $\sigma$-structures $\struc{A}$ and $\struc{B}$, every line of our derivation system is a partial mapping from $V(\struc{A})$ to $V(\struc{B})$. The axioms are all partial mappings $p\colon{}V(\struc{A})\to V(\struc{B})$ that are not partial homomorphisms. We have the following derivation rule to derive a new inconsistent assignment $p$. For all partial mappings $p'_i\subseteq p$, $x\in V(\struc{A})$ and $V(\struc{B})=\{a_1,\ldots,a_n\}$:
\begin{align}
	&\frac{p'_1\cup\{x\mapsto a_1\}\quad \cdots \quad p'_n\cup\{x\mapsto a_n\}}{p}
	\label{eq:CSPrule1}
\end{align}
A \emph{CSP-derivation} of $p$ is a sequence $(p_1,\ldots,p_\ell=p)$ such that every $p_i$ is either an axiom or derived from lines $p_j$, $j<i$, via the derivation rule \eqref{eq:CSPrule1}. A \emph{CSP-refutation} is a CSP-derivation of $\emptyset$. Every derivation of $p$ can naturally be seen as a directed acyclic graph (dag) where the nodes are labeled with lines from the derivation, one node of in-degree 0 is labeled with $p$ and all nodes of out-degree 0 are labeled with axioms. If $p_i$ is derived from $p_{j_1},\ldots,p_{j_n}$ using \eqref{eq:CSPrule1}, then there is an arc from $p_i$ to each $p_{j_1},\ldots,p_{j_n}$. 

Given a CSP-derivation $P$, we let $\Prop(P)$ be the set of propagated mappings $p\in P$, \ie{} all lines in the derivation that are not axioms. We define the \emph{width} of a derivation $P$ to be $\width(P)=\max_{p\in \Prop(P)}|p|$.\footnote{Note that this implies $|p|\leq \width(P)+1$ for all axioms $p$ used in the derivation $P$. However, the size of the axioms can always be bounded by the maximum arity of the relations in $\struc{A}$ and $\struc{B}$.} 
Furthermore, $\depth(P)$ denotes the \emph{depth} of $P$ which is the number of edges on the longest path in the dag associated with $P$. This measure characterizes the maximum number of nested propagation steps in $P$. 
Since CSP-derivations model the propagation process mentioned in the introduction, there is a CSP-refutation of width $k-1$ if and only if $k$-consistency cannot be established.
Furthermore, every propagation algorithm produces some CSP-derivation $P$. 
The total number of propagation steps performed by this algorithm is $|\Prop(P)|$ and the maximum number of nested propagation steps is $\depth(P)$. 
Let $\struc{A}$ and $\struc{B}$ be two relational structures such that $k$-consistency cannot be established. We define the \emph{propagation depth} $\depth^k(\struc{A}, \struc{B}) \defi \min_P \depth(P)$
where the minimum is taken over all CSP-refutations $P$ of width at most $k-1$.  
Hence, the $\depth^k(\struc{A}, \struc{B}) \leq |V(\struc{A})|^{k-1}|V(\struc{B})|^{k-1}$
 is the number of sequential propagation steps that have to be performed by any sequential or parallel propagation algorithm for $k$-consistency. 

\subsection{Results and Related Work}
\label{sec:CSP-Intro-Results}

Our main theorem is a tight lower bound on the propagation depth. 
\begin{theorem}\label{thm:kConsDepthLowerBound}
	For every integer $k\geq 2$ there exists a constant $\epsilon>0$ and two positive integers $n_0$, $m_0$ such that for every $n\geq n_0$ and $m\geq m_0$ there exist two binary structures $\struc{A}_n$ and $\struc{B}_m$ with $|V(\struc{A}_n)|=n$ and $|V(\struc{B}_m)|=m$ such that $\depth^{k}(\struc{A}_n,\struc{B}_m)\geq \epsilon n^{k-1}m^{k-1}$.
\end{theorem}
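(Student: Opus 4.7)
The plan is to route the proof through the existential $k$-pebble game that the preliminaries promise as the main tool. As a first step I would make the bridge between CSP-refutations and the game quantitative: show that a width-$(k-1)$ CSP-refutation of depth $t$ is essentially the same object as a winning strategy for Spoiler in the existential $k$-pebble game in which Spoiler commits in advance to a decision tree of height $t$ over partial maps $V(\struc{A})\to V(\struc{B})$ of size $\le k-1$. The equivalence is by induction on the derivation: an application of rule \eqref{eq:CSPrule1} at a partial map $p$, with branching variable $x$ and children $p_i'\cup\{x\mapsto a_i\}$, is exactly a Spoiler move that places a new pebble on $x$ and reads Duplicator's answer. With this correspondence, $\depth^k(\struc{A},\struc{B})$ equals (up to constants) the minimum over Spoiler strategies of the maximum number of rounds Spoiler needs to trap an adversarial Duplicator.

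Given the game characterization, the theorem reduces to exhibiting, for each $k\ge 2$, binary structures $\struc{A}_n$ and $\struc{B}_m$ for which (i) Spoiler wins the existential $k$-pebble game, so $k$-consistency fails, yet (ii) every Spoiler strategy needs $\Omega(n^{k-1}m^{k-1})$ rounds. The guiding design principle is to encode a numerical progress measure $\Phi$ on partial homomorphisms that can change by at most $1$ per Spoiler move, with $\Phi$ ranging over an interval of length $\Theta(n^{k-1}m^{k-1})$, so that Duplicator's strategy is simply to keep $\Phi$ pinned at one end while Spoiler has to drag it across. The natural construction is a $(k-1)$-dimensional analogue of the long-path instance that yields $\Omega(nd)$ for $k=2$ and of the algebraic gadget Ladkin and Maddux use for $k=3$: let $\struc{A}_n$ be a rigid $(k-1)$-dimensional grid on $n$ vertices whose binary relations enforce that any two pebbled tuples differ in at most one coordinate, and let $\struc{B}_m$ be a similarly shaped grid of size $m$ on which Duplicator maintains an image tuple that differs from the adversary's by a single local shift per round.

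The main obstacle I expect is the binary-arity restriction in (ii): with only binary constraints it is tempting to collapse the $k-1$ coordinates and hand Spoiler a cheap win. The remedy I would pursue is to exploit the asymmetric roles of $\struc{A}_n$ and $\struc{B}_m$ — place all the global rigidity on $\struc{A}_n$, forcing Spoiler to visit many distinct partial maps before any contradiction surfaces, while keeping $\struc{B}_m$ locally homogeneous enough that Duplicator can always answer a single new pebble by a small local adjustment of the current image tuple and hence drop $\Phi$ by at most one. Once the right invariant is identified, the lower bound itself should follow by a clean pigeonhole on the game decision tree: each internal Spoiler node decreases $\Phi$ by at most $1$, the root has $\Phi_{\max}=\Omega(n^{k-1}m^{k-1})$, and winning leaves have $\Phi=0$, so the tree, and therefore the refutation, has depth $\Omega(n^{k-1}m^{k-1})$, matching the trivial upper bound $n^{k-1}m^{k-1}$ and subsuming the previously known $k=2$ and $k=3$ cases.
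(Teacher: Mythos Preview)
Your high-level framework matches the paper's: reduce to the existential $k$-pebble game via the derivation/game correspondence, encode a counter of length $\Theta(n^{k-1}m^{k-1})$ that Spoiler must traverse, and read off the depth lower bound from the counter length. The paper does exactly this, with $\alpha(\posq)\in\{0,\ldots,n^{k-1}m^{k-1}-1\}$ playing the role of your $\Phi$.

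The gap is in the construction and in the invariant that $\Phi$ changes by at most one per round. A ``rigid $(k-1)$-dimensional grid'' for $\struc{A}_n$ together with a ``locally homogeneous'' $\struc{B}_m$ is not enough: on a plain grid there is nothing stopping Spoiler from removing all $k-1$ pebbles and replanting them at an arbitrary configuration, so Duplicator cannot pin $\Phi$ at one end by local adjustments alone. The paper's actual machinery is substantially heavier than what you sketch. The counter lives on a single block of $\xnode$-vertices, but Spoiler is forced to route every increment through a \emph{one-way switch} gadget: Duplicator has a winning ``output strategy'' with $h^x_{\boldsymbol 0}$ on the input, so once Spoiler has moved a configuration to the output he cannot recover it on the input; and Duplicator has a ``restart strategy'' that blocks any invalid configuration from passing through. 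These switch properties, together with increment gadgets that only fire for the unique applicable configuration, are what make the progress monotone. The lower bound is then not a potential argument per se but an explicit sequence of \emph{critical strategies} $\mathcal G^{\text{start}},\mathcal F_1,\mathcal G_1,\ldots$ (Lemma~\ref{lem:CSP_winning_Spoiler}) satisfying the chaining condition of Lemma~\ref{lem:SequenceCriticalStrategies}, which directly yields the round lower bound.

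So your plan is on the right track conceptually, but the step ``Duplicator can always answer a single new pebble by a small local adjustment and hence drop $\Phi$ by at most one'' hides the entire difficulty. You would need to invent something equivalent to the switch (with its input/output/restart strategies and critical positions) to make that step go through, and that is the technical heart of the paper rather than a detail to be filled in.
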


We are aware of two particular cases that have been discovered earlier. 
First, for the case $k=2$ (arc consistency) the theorem can be shown by rather simple examples that occurred very early in the AI-community. The structure of this exceptional case is discussed in deep in a joint work of Oleg Verbitsky and the author of this paper \cite{BerVer13}. 
Second, for $k=3$ Ladkin and Maddux \cite{Ladkin.1994} showed that there is a fixed finite binary structure $\struc{B}$ and an infinite sequence of binary structures $\struc{A}_i$ such that  $\depth^3(\struc{A_i},\struc{B})=\Omega(|V(\struc{A}_i)|^2)$. They used this result to argue that every parallel propagation algorithm for path consistency needs at least a quadratic number of steps. This is tight only for fixed structures $\struc{B}$, Theorem \ref{thm:kConsDepthLowerBound} extends their result to the case when $\struc{B}$ is also given as input.

Other related results investigate the decision complexity of the $k$-consistency test. 
To address this more general question one analyzes the computational complexity of the following decision problem.
\problembox{\kcons}{Two binary relational structures $\struc{A}$ and $\struc{B}$.}{Can $k$-consistency be established for $\struc{A}$ and $\struc{B}$?}
Kasif \cite{Kasif.1990} showed that \twoCons{} is complete for \PTIME{} under \LOGSPACE{} reductions. 
Kolaitis and Panttaja \cite{Kolaitis.2003} extended this result to every fixed $k\geq 2$.
Moreover, they established that the problem is complete for \EXPTIME{} if $k$ is part of the input. 
In \cite{Ber13} the author showed that \kcons{} cannot be decided in $O(n^{\frac{k-3}{12}})$ on deterministic multi-tape Turing machines, where $n$ is the overall input size. Hence, any algorithm solving \kcons{} (regardless of whether it performs constraint propagation or not) cannot be much faster than the standard propagation approach. 
It also follows from this result that,
parameterized by the number of pebbles $k$, \kCons{} is is complete for the parameterized complexity class \XP. It is also worth noting that Gaspers and Szeider \cite{Gaspers.2011} investigated the parameterized complexity of other parameterized problems related to $k$-consistency.

\subsection{The Existential Pebble Game}
\label{CSP-Intro-ExPebbleGame}

In this paragraph we introduce a third view on the $k$-consistency heuristic in terms of a combinatorial pebble game.
The \emph{existential $k$-pebble game} \cite{Kolaitis.1995} is played by two players \emph{Spoiler} and \emph{Duplicator} on two relational structures $\struc{A}$ and $\struc{B}$. There are $k$ pairs of pebbles $(p_1,q_1),\ldots,(p_k,q_k)$ and during the game Spoiler moves the pebbles $p_1,\ldots,p_k$ to elements of $V(\struc{A})$ and Duplicator moves the pebbles $q_1,\ldots,q_k$ to elements of $V(\struc{B})$.
At the beginning of the game, Spoiler places pebbles $p_1,\ldots,p_{k}$ on elements of $V(\struc{A})$ and Duplicator answers by putting pebbles $q_1,\ldots,q_{k}$ on elements of $V(\struc{B})$. In each further round Spoiler picks up a pebble pair $(p_i,q_i)$ and places $p_i$ on some element in $V(\struc{A})$. Duplicator answers by moving the corresponding pebble $q_i$ to one element in $V(\struc{B})$. Spoiler wins the game if he can reach a position where the mapping defined by $p_i\mapsto q_i$ is not a partial homomorphism from $\struc{A}$ to $\struc{B}$. 

The connection between the existential $k$-pebble game and the $k$-consistency heuristic was made by Kolaitis and Vardi \cite{Kolaitis.2000a}. They showed that one can establish $k$-consistency by computing a winning strategy for Duplicator.
Going a different way, the next lemma states that there is also a tight correspondence between Spoiler's strategy and CSP-refutations. 
\confORarxiv{The proof is a straightforward induction over the depth and included in the full version of the paper.}{The proof is a straightforward induction over the depth and given in Appendix~\ref{sec:Alemma}.}
\begin{lemma}\label{lem:CSPrefutaionEXpebblegame}
	Let $\struc{A}$ and $\struc{B}$ be two relational structures. There is a CSP-refutation for $\struc{A}$ and $\struc{B}$ of width $k-1$ and depth $d$ if and only if Spoiler has a strategy to win the existential $k$-pebble game on $\struc{A}$ and $\struc{B}$ within $d$ rounds.
\end{lemma}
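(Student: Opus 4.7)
My plan is to prove both directions simultaneously by induction on the depth $d$, after strengthening the statement so that it refers to a derivation of an arbitrary line $p$ (not just $\emptyset$) and to Spoiler's strategy from the game position in which exactly the pebbles inducing $p$ are already on the board. The lemma then follows as the special case $p=\emptyset$.

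For the direction ``derivation $\Rightarrow$ strategy'', the base case $d=0$ is immediate: $p$ must be an axiom, i.e., not a partial homomorphism, so Spoiler has already won. For the inductive step, suppose $p$ was derived from $p'_1\cup\{x\mapsto a_1\},\ldots,p'_n\cup\{x\mapsto a_n\}$ via rule \eqref{eq:CSPrule1}, each premise having a sub-derivation of depth at most $d-1$. Since $|p|\leq k-1$, at least one pebble pair is free; Spoiler places it on $x$. When Duplicator responds with $a=a_i$, the board induces a mapping that contains $p'_i\cup\{x\mapsto a_i\}$, so the inductive hypothesis applied to the corresponding sub-derivation yields a winning continuation in $d-1$ additional rounds.

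For the converse ``strategy $\Rightarrow$ derivation'', the base case is again immediate: if Spoiler wins in $0$ rounds, then $p$ itself is not a partial homomorphism and so qualifies as an axiom. For the inductive step, Spoiler's first move can be parsed as picking up a pebble pair (leaving some $p'\subseteq p$ on the board) and placing the freed pair on some $x\in V(\struc{A})$. For each possible Duplicator answer $a\in V(\struc{B})$, Spoiler wins in $d-1$ further rounds from $p'\cup\{x\mapsto a\}$; applying the inductive hypothesis yields, for each such $a$, either an axiom or a sub-derivation of depth at most $d-1$ and width at most $k-1$. Combining all of them via rule \eqref{eq:CSPrule1} with the common $p'_i=p'$ gives a derivation of $p$ of the required depth and width.

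The only real subtlety is the mismatch between rule \eqref{eq:CSPrule1}, which allows a different $p'_i$ for each premise, and Spoiler's strategy, which must commit to which pebbles remain on the board before seeing Duplicator's response. This is resolved asymmetrically in the two directions: the game-to-refutation direction produces derivations with the same $p'_i=p'$ in every branch, which is a legitimate special case of the rule; and in the refutation-to-game direction, a simple monotonicity observation (if Spoiler wins from a partial position, he wins from any position that extends it) lets Spoiler ignore the ``extra'' pebbles of $p\setminus p'_i$ that linger on the board after Duplicator's response, since those pebble pairs can simply be re-used in subsequent rounds without penalty.
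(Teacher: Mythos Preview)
Your overall approach---induction on $d$ after generalising from $\emptyset$ to an arbitrary position $p$---is the same as the paper's, and your refutation-to-game direction is correct, including the upward-monotonicity observation that lets Spoiler ignore the surplus pebbles in $p\setminus p'_i$.

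The game-to-refutation direction, however, has a genuine gap tied to the width bound. Your strengthened hypothesis promises a width-$(k{-}1)$ derivation \emph{of $p$ itself}, and you then recurse on $p'\cup\{x\mapsto a\}$. But when $|p|=k{-}1$ and Spoiler uses the free slot to place on a fresh $x$ without removing anything---which you cannot forbid, since forcing a removal can strictly increase Spoiler's round count---you get $p'=p$ and $|p'\cup\{x\mapsto a\}|=k$. If that mapping is still a partial homomorphism it cannot occur as a propagated line in any width-$(k{-}1)$ derivation, so your hypothesis is simply false for it and the recursion breaks. The paper sidesteps this by weakening the hypothesis to: \emph{some} $q\subseteq p$ has a derivation of depth at most $d$. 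That formulation survives the passage to size-$k$ positions, but then the $q_a\subseteq p'\cup\{x\mapsto a\}$ returned by the recursion may fail to contain the pair $(x,a)$, so a short case split is needed: if some $q_a\subseteq p'$ one is already done; otherwise every $q_a$ has the form $q'_a\cup\{x\mapsto a\}$ with $q'_a\subseteq p$, and one application of rule~\eqref{eq:CSPrule1} finishes. Your sketch lacks exactly this weakening and the attendant case distinction.
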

Using this lemma it suffices to prove lower bounds on the number of rounds in the existential pebble game in order to prove Theorem~\ref{thm:kConsDepthLowerBound}.  
To argue about strategies in the existential pebble game we use the framework developed in \cite{Ber13}.
We start with a formal definition of strategies for Duplicator. 
\begin{definition} \label{def:criticalStrategy}
 A \emph{critical strategy} for Duplicator in the existential $k$-pebble game on structures $\struc{A}$ and $\struc{B}$ is a nonempty family $\mathcal H$ of partial homomorphisms from $\struc{A}$ to $\struc{B}$ together with a set $\crit(\mathcal H)\subseteq \mathcal H$ of \textit{critical positions} satisfying the following properties:
\begin{enumerate} 
 \item All critical positions are $(k-1)$-partial homomorphisms.
 \item If $h\in \mathcal H$ and $g\subset h$, then $g\in\mathcal H$.
 \item For every $g\in\mathcal H\setminus\crit(\mathcal H)$, $|g|< k$, and every $x\in V(\struc{A})$ there is an $a\in V(\struc{B})$ such that $g\cup \{x\mapsto a\}\in \mathcal H$.
\end{enumerate}
 If $\crit(\mathcal H)=\emptyset$, then $\mathcal H$ is a \emph{winning strategy}.
\end{definition}
The set $\mathcal H$ is the set of good positions for Duplicator (therefore they are all partial homomorphisms). Non-emptiness and the closure property (2.) ensure that $\mathcal H$ contains the start position $\emptyset$. 
Furthermore, the closure property guarantees that the current position remains a good position for Duplicator when Spoiler picks up pebbles. 
The extension property (3.) ensures that, from every non-critical position, Duplicator has an appropriate answer if Spoiler puts a free pebble on $x$. 
It follows that if there are no critical positions, then Duplicator can always answer accordingly and thus wins the game. 
Otherwise, if Spoiler reaches a critical position, then Duplicator may not have an appropriate answer and the game reaches a critical state. 
In the next lemma we describe how to use critical strategies to prove lower bounds on the number of rounds. 
\begin{lemma}\label{lem:SequenceCriticalStrategies}
	If $\mathcal H_1,\ldots,\mathcal H_l$ is a sequence of critical strategies on the same pair of structures and for all $i<l$ and all $p\in\crit(\mathcal H_i)$ it holds that $p\in \mathcal{H}_{j}\setminus \crit(\mathcal{H}_{j})$ for some $j\leq i+1$, then Duplicator wins the $l$-round existential $k$-pebble game.
\end{lemma}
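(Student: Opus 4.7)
The plan is for Duplicator to maintain an \emph{active} strategy index $i$, initially $i=1$, and to preserve the invariant that the current position of the game lies in $\mathcal H_i$. At the start this holds because $\mathcal H_1\neq\emptyset$ combined with property~(2) forces $\emptyset\in\mathcal H_1$, and property~(1) then implies $\emptyset\notin\crit(\mathcal H_1)$ (assuming $k\geq 2$; the case $k=1$ is trivial). When Spoiler picks up a pebble pair in some round, the resulting intermediate position $g$ is a subset of the previous position $h\in\mathcal H_i$ and therefore lies in $\mathcal H_i$ by property~(2). If $g\notin\crit(\mathcal H_i)$, Duplicator invokes property~(3) to respond to Spoiler's placement on some $x$ by choosing $a$ with $g\cup\{x\mapsto a\}\in\mathcal H_i$. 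Otherwise $g\in\crit(\mathcal H_i)$, in which case the hypothesis of the lemma (applicable whenever $i<l$) supplies some $j\leq i+1$ with $g\in\mathcal H_j\setminus\crit(\mathcal H_j)$; Duplicator then updates $i:=j$ and applies property~(3) inside $\mathcal H_j$ to answer the placement.

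Since every element of $\mathcal H_i$ is a partial homomorphism, the position reached after Duplicator's response is a legal one, so Spoiler has not won. For the counting, I observe that the active index increases by at most one per round, so after the first $r$ rounds one has $i\leq r+1$. In particular the precondition $i<l$ for the switching step is available whenever $r\leq l-1$, which is precisely what is needed for Duplicator to survive each of the $l$ rounds of the game. This will yield the claim that Duplicator wins the $l$-round existential $k$-pebble game.

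The main technical subtlety I foresee is the opening move, where Spoiler places all $k$ pebbles simultaneously rather than picking up and placing a single pebble pair. I will treat this as a sequence of $k$ single-pebble extensions starting from $\emptyset\in\mathcal H_1$, iterating property~(3), and switching strategies via the hypothesis if an intermediate extension of size $k-1$ happens to be critical. The rest of the argument is then a straightforward induction on the round number, using closure~(2) to carry the invariant through Spoiler's pickups and the alternative between the extension property~(3) and a hypothesis-driven switch to carry it through Spoiler's placements.
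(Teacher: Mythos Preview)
Your proposal is correct and follows essentially the same approach as the paper's proof: maintain a current strategy index starting at $1$, use the extension property~(3) of the active $\mathcal H_i$ to answer non-critical positions, and invoke the lemma's hypothesis to hop to some $\mathcal H_j$ with $j\le i+1$ whenever a critical position is reached, with the bound on $i$ giving the round count. Your treatment is considerably more explicit than the paper's three-sentence argument---in particular you spell out the closure step~(2) for pickups, the reason $\emptyset$ is non-critical, and the handling of the opening move as iterated single extensions---but the underlying mechanism is identical.
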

\begin{proof}
   Starting with $i=1$, Duplicator answers according to the extension property of $\mathcal H_i$, if the current position $p$ is non-critical in $\mathcal H_i$. Otherwise, $p$ is non-critical in $\mathcal H_{j}$ for some $j\leq i+1$ and Duplicator answers according to the extension property of $\mathcal H_{j}$. This allows Duplicator to survive for at least $l$ rounds.
\qed\end{proof}

The two structures $\struc{A}$ and $\struc{B}$ we construct are vertex colored graphs. They are built out of smaller graphs, called \textit{gadgets}. 
Every gadget $Q$ consists of two graphs $Q_S$ and $Q_D$ for Spoiler's and Duplicator's side, respectively. Hence, $Q_S$ and $Q_D$ will be subgraphs of $\struc{A}$ and $\struc{B}$ in the end. The gadgets contain \textit{boundary vertices}\label{page:boundary}, which are the vertices shared with other gadgets. 
To combine two strategies on two connected gadgets we need to ensure that the strategies agree on the boundary of the gadgets. Formally, let a \textit{boundary function} of a strategy $\mathcal{H}$ on a gadget $Q$ be a mapping $\beta$ from the boundary of $Q_S$ to the boundary of $Q_D$ such that $\beta(z)=h(z)$ for all $h\in\mathcal H$ and all $z$ in the domain of $\beta$ and $h$. We say that two strategies $\mathcal G$ and $\mathcal H$ on gadgets $Q$ and $Q'$ are \textit{connectable}, if their boundary functions agree on the common boundary vertices of $Q$ and $Q'$. 
If $\mathcal G$ and $\mathcal H$ are two connectable critical strategies on gadgets $Q=(Q_S,Q_D)$ and $Q'=(Q'_S,Q'_D)$ it is not hard to see that the
\textit{composition}
$$
\mathcal G \circ \mathcal H = \{g\cup h\mid g\in\mathcal G, h\in\mathcal H\}
$$
 is a critical strategy on $Q_S\cup Q'_S$ and $Q_D\cup Q'_D$ with $\crit(\mathcal G\circ\mathcal H) = \crit(\mathcal G)\cup\crit(\mathcal H)$.
Intuitively, playing according to the strategy $\mathcal G \circ \mathcal H$ on $Q$ and $Q'$ means that Duplicator uses strategy $\mathcal G$ on $Q$ and strategy $\mathcal H$ on $Q'$.

\section{The Construction}
\subsection{Overview of the Construction}
\label{sec:CSP-Prop-kCons}

In this section we prove Theorem \ref{thm:kConsDepthLowerBound} for $k\geq 3$. We let 
$
\kmo \defi k-1 \geq 2
$
and construct two vertex colored graphs $\struc{A}_n$ and $\struc{B}_m$ with $O(n)$ and $O(m)$ vertices such that Spoiler needs $\Omega(n^{\kmo}m^{\kmo})$ rounds to win the existential $(\kmo+1)$-pebble game. We color the vertices of both graphs such that the colors partition the vertex set into independent sets, \ie{} every vertex gets one color and there is no edge between vertices of the same color. The basic building blocks in our construction are sets of vertices which allow to store $n^{\kmo}m^{\kmo}$ partial homomorphisms with $\kmo$ pebbles. 

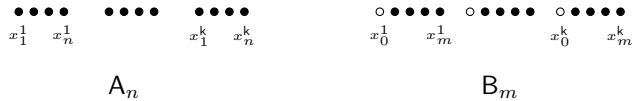
\begin{figure*}[htp]
\begin{center}
\begin{tikzpicture}
      [scale=1, transform shape, knoten/.style={circle,draw=black,fill=black,
      inner  sep=0pt,minimum  size=1mm},kknoten/.style={circle,draw=black,fill=black,
      inner  sep=0pt,minimum  size=0.8mm}, leerknoten/.style={circle,draw=black,fill=white,
      inner  sep=0pt,minimum  size=1mm},leerkknoten/.style={circle,draw=black,fill=white,
      inner  sep=0pt,minimum  size=0.8mm}]

\begin{scope}[xshift=-5cm]
  \node[knoten] (x11) at (0.2,0) [label=below:{\tiny{$x^{1}_1$}}] {}; 
  \node[knoten] (x12) at (0.4,0)  {};
  \node[knoten] (x13) at (0.6,0) {};
  \node[knoten] (x14) at (0.8,0) [label=below:{\tiny{$x^{1}_n$}}] {};

  \node[knoten] (x21) at (1.4,0)  {};
  \node[knoten] (x22) at (1.6,0)  {};
  \node[knoten] (x23) at (1.8,0)  {};
  \node[knoten] (x24) at (2,0)  {};

  \node[knoten] (x31) at (2.6,0)[label=below:{\tiny{$x^{\kmo}_1$}}]{};
  \node[knoten] (x32) at (2.8,0)  {};
  \node[knoten] (x33) at (3,0)  {};
  \node[knoten] (x34) at (3.2,0) [label=below:{\tiny{$x^{\kmo}_n$}}] {};

\end{scope}

  \node[leerknoten] (x10) at (0,0) [label=below:{\tiny{$x^{1}_0$}}] {};
  \node[knoten] (x11) at (0.2,0)  {};
  \node[knoten] (x12) at (0.4,0)  {};
  \node[knoten] (x13) at (0.6,0) {};
  \node[knoten] (x14) at (0.8,0) [label=below:{\tiny{$x^{1}_m$}}] {};

  \node[leerknoten] (x20) at (1.2,0) {};
  \node[knoten] (x21) at (1.4,0)  {};
  \node[knoten] (x22) at (1.6,0)  {};
  \node[knoten] (x23) at (1.8,0)  {};
  \node[knoten] (x24) at (2,0)  {};

  \node[leerknoten] (x30) at (2.4,0) [label=below:{\tiny{$x^{\kmo}_0$}}] {};
  \node[knoten] (x31) at (2.6,0){};
  \node[knoten] (x32) at (2.8,0)  {};
  \node[knoten] (x33) at (3,0)  {};
  \node[knoten] (x34) at (3.2,0) [label=below:{\tiny{$x^{\kmo}_m$}}] {};

\node at (-3.4,-1){$\struc{A}_n$};
\node at (1.6,-1) {$\struc{B}_m$};

\end{tikzpicture}
\end{center}
\caption{Basic vertex blocks. Two vertices $x^i_j$ and $x^{i'}_{j'}$ get the same color iff $i=i'$.} \label{fig:basicblocks}
\end{figure*}

 We introduce vertices $x^i_j$ ($i\in[\kmo]$, $j\in[n]$) in $\struc{A}_n$ and vertices $x^i_j$ ($i\in[\kmo]$, $j\in[m]\cup\{0\}$) in $\struc{B}_m$. For every $i\in[\kmo]$ the vertices $x^i_j$ form a \emph{block} and are colored with the same color (say $P_{x^i}$), which is different from any other color in the entire construction. The vertices $x^i_0$ in structure $\struc{B}_m$ play a special role in our construction and are visualized by $\whitenode$ instead of $\blacknode$ in the pictures. However, they are colored with the same color $P_{x^i}$ as the other vertices $x^i_j$. 
 Because of the coloring, Duplicator has to answer with some $x^i_{j'}$ whenever Spoiler pebbles a vertex $x^i_j$. 
Since there are $nm$ positions for one pebble pair on \blacknode vertices in one block, we get $n^{\kmo}m^{\kmo}$ positions if every block has exactly one pebble pair on \blacknode vertices. 
The \whitenode vertices are used by Duplicator whenever Spoiler does not play the intended way. That is, if Spoiler pebbles a vertex in block $i$ that he is not supposed to pebble now, then Duplicator answers with $x^i_0$. The construction will have the property that this is always a good situation for Duplicator. 

To describe pebble positions on such vertex blocks, we define mappings $\posa\colon{}[\kmo]\to[n]$ and $\posb\colon{}[\kmo]\to[m]$ and call the pebble position $\{(x^i_{\posa(i)},x^i_{\posb(i)})\mid i\in [\kmo]\}$ \emph{valid}.
\spaceconstraints{
					\footnote{At this point we want to stress that the vertex ``$x^i_j$'' may occur in both structures. However, it will become clear from the context which vertex is meant and with a slight abuse of notation we alway assume that $x^i_{\posa(i)}\in V(\struc{A}_n)$ and $x^i_{\posb(i)}\in V(\struc{B}_n)$.} 
}
If such valid position is on the board, then Duplicator answers with $x^i_{\posb(i)}$ if Spoiler pebbles $x^i_{\posa(i)}$ and with $x^i_{0}$ if Spoiler pebbles $x^i_{j}$ for some $j\neq\posa(i)$.
We also need to name positions where Duplicator answers with $x^i_0$ for every vertex in block $i$ and let $T$ be the set of blocks where this happens. 
For $\posa\colon{}[\kmo]\to[n]$, $\posb\colon{}[\kmo]\to[m]$ and $T\subseteq [\kmo]$ we call $\posq=(\posa,\posb,T)$ a \emph{configuration}.
The configuration $\posq$ is \emph{valid} if $T=\emptyset$ and \emph{invalid} otherwise. 
For every configuration $\posq$ and a set of $x^i_j$ vertices as in Figure \ref{fig:basicblocks} we define the  following homomorphism that describes Duplicator's behavior:
$$ \label{page:definition_hq}
h^x_{\posq}(x^i_j) = \begin{cases}
	x^i_{\posb(i)}\text{, if }j=\posa(i)\text{ and }i\notin T, \\
	x^i_{0}\text{, otherwise.}
\end{cases}
$$ 
By $h^x_{\boldsymbol{0}}$ we denote the homomorphism $h^x_{\boldsymbol{0}}(x^i_j)\defi x^i_0$ for all $i\in [\kmo], j\in[n]$.
\label{page:def_h0}
We say that a position of (at most $\kmo+1$) pebble pairs on these vertices is \emph{invalid} if it is a subset of $h^x_{\posq}$ for some invalid configuration $\posq$. 
For valid configurations $\posq=(\posa,\posb,\emptyset)$ we say ``$\posq$ on $x$'' to name the valid pebble position $\{(x^i_{\posa(i)},x^i_{\posb(i)})\mid i\in [\kmo]\}$.
Note that valid pebble positions are not invalid.\footnote{There are pebble positions on the $x^i_j$ vertices that are neither valid nor invalid. However, such positions will not occur in our strategies.} 

In the entire construction there is one unique copy of the $x^i_j$-vertices, which are denoted by $\xnode^i_j$.
Our goal is to force Spoiler to pebble every valid position on $\xnode$ before he wins the game. He is supposed to do so in a specific predefined order. 
To fix this order we define a bijection $\alpha$ between valid configurations $(\posa,\posb,\emptyset)$ and the numbers $0,\ldots,n^{\kmo}m^{\kmo}-1$:
 \begin{align*}
 	\alpha(\posq)&\defi m^{\kmo}\sum^{\kmo}_{i=1} (\posa(i)-1) n^{\kmo-i} + \sum^{\kmo}_{i=1} (\posb(i)-1) m^{\kmo-i}. 
 \end{align*} 
Thus, $\alpha(\posq)$ is the rank of the tuple $(\posa(1),\ldots,\posa(\kmo),\posb(1),\ldots,\posb(\kmo))$ in lexicographical order. If $\alpha(\posq)<n^\kmo m^\kmo-1$, we define the successor $\posq^+=(\posa^+,\posb^+,\emptyset)$ to be the unique valid configuration satisfying $\alpha(\posq^+)=\alpha(\posq)+1$.
In the sequel we introduce gadgets to make sure that:
\begin{itemize}
    \item Spoiler can reach the position $\alpha^{-1}(0)$ on $\xnode$ from $\emptyset$,
	\item Spoiler can reach $\alpha^{-1}(i+1)$ on $\xnode$ from $\alpha^{-1}(i)$ on $\xnode$ and
	\item Spoiler wins from $\alpha^{-1}(n^{\kmo}m^{\kmo}-1)$ on $\xnode$.
\end{itemize}
If we have these properties, we know that Spoiler has a winning strategy in the $(\kmo+1)$-pebble game. To show that Spoiler needs at least $n^{\kmo}m^{\kmo}$ rounds we argue that this is essentially the only way for Spoiler to win the game.

\begin{figure*}[htp]
 \centering
\input{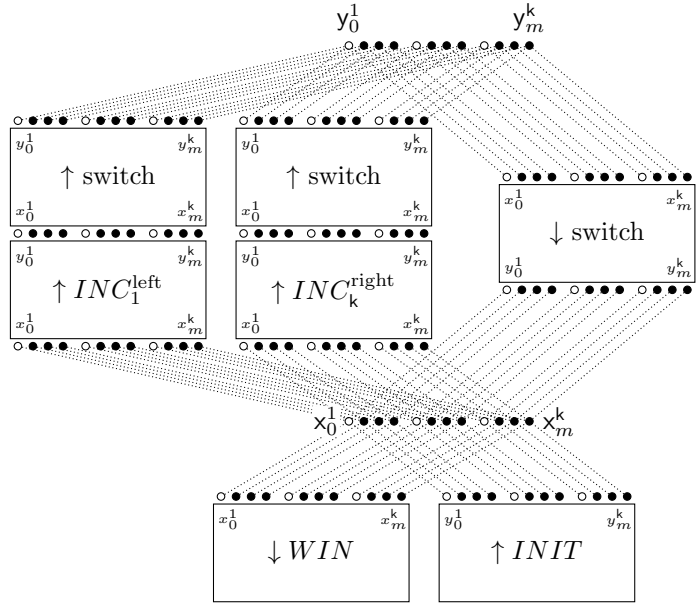}
 \caption{The graph $\struc{B}_m$.
  The boundaries of the gadgets are connected as indicated by the dotted lines (which need to be contracted). The arrows point from the input to the output vertices of the gadgets. } 
 \label{fig:CSP-Prop-kCons-overview}
\end{figure*}

We start with an overview of the gadgets and how they are glued together to form the structures $\struc{A}_n$ and $\struc{B}_m$. 
The boundary
of our gadgets consists of \emph{input} vertices and \emph{output} vertices. For every gadget the set of input (output) vertices is a copy of the vertex set in Figure~\ref{fig:basicblocks} and we write $x^i_j$ ($y^i_j$) to name them. This enables us to glue together the gadgets at their input and output vertices. The overall construction for the graph $\struc{B_m}$ is shown in Figure~\ref{fig:CSP-Prop-kCons-overview}. The schema for $\struc{A}_n$ is similar, it contains Spoiler's side of the corresponding gadgets which are glued together the same way as in $\struc{B}_m$ (just replace $m$ by $n$ and drop the \whitenode vertices).
There are four types of gadgets: the initialization gadget, the winning gadget, several increment gadgets and the switch. 

The \emph{initialization gadget} ensures that Spoiler can reach $\alpha^{-1}(0)$ on $\xnode$, \ie{} the pebble position $\{(\xnode_1^1,\xnode_1^1),\ldots,(\xnode^k_1,\xnode^k_1)\}$. 
This gadget has only output boundary vertices and is used by Spoiler at the beginning of the game.
There are \emph{increment gadgets} $\INC{}^{\text{left}}_i$ and $\INC{}^{\text{right}}_i$ for all $i\in[\kmo]$. 
The input vertices of every increment gadget are identified with the $\xnode$ vertices as depicted in Figure~\ref{fig:CSP-Prop-kCons-overview}.
The increment gadgets (all together) ensure that Spoiler can increment a configuration. 
More precisely, for every valid configuration $\posq$ with $\alpha(\posq) < n^\kmo m^\kmo-1$, there is one increment gadget $\INC{}$ such that Spoiler can reach $\posq^+$ on the output of $\INC{}$ from $\posq$ on the input. 
Every increment gadget is followed by a copy of the \emph{switch}. 
The input of $2\kmo$ switches is identified with the output of the $2\kmo$ increment gadgets and the output of these switches is identified with a unique block of $\ynode$-vertices and the input of one additional \emph{single switch} (see Figure~\ref{fig:CSP-Prop-kCons-overview}). 
The output of this switch is in turn identified with the unique block of $\xnode$-vertices.
The switches are used to perform the transition in the game from $\alpha^{-1}(i)$ on $\xnode$ to $\alpha^{-1}(i+1)$ on $\xnode$. 
Spoiler can pebble a valid position through one switch: from $\posq$ on the input of a switch Spoiler can reach $\posq$ on the output of that switch. 
Hence, Spoiler can simply pebble the incremented position $\alpha^{-1}(i+1)$ from the output of an increment gadget through two switches to the $\xnode$-block. 

Finally, the \emph{winning gadget} ensures that 
from $\alpha^{-1}(n^{\kmo}m^{\kmo}-1)$ on $\xnode$
Spoiler wins the game.
The winning gadget has only input vertices, which are identified with the $\xnode$-vertices. 
From $\alpha^{-1}(n^{\kmo}m^{\kmo}-1)$ on the input, Spoiler can win the game by playing on this gadget. 
On the other hand, the gadget ensures that Spoiler can \emph{only} win from $\alpha^{-1}(n^{\kmo}m^{\kmo}-1)$ on $\xnode$ and Duplicator does not lose from any other configuration on $\xnode$. 

\subsection{The Gadgets}
\label{sec:CSP-Prop-Gadgets}
We now describe the winning gadget and the increment gadgets in detail and provide strategies for Spoiler and Duplicator on them. 
Afterwards we briefly discuss the switch and the initialization gadget.
In the next section we combine the partial strategies on the gadgets to prove Theorem~\ref{thm:kConsDepthLowerBound}. 

The \emph{winning gadget} is shown in Figure~\ref{fig:winning_gadget}. On Spoiler's side there is just one additional vertex $a$, which is connected to $x^i_n$ for all $i\in[\kmo]$. On Duplicator's side there are $\kmo$ additional vertices $a^i, i\in[\kmo]$. Every $a^i$ is connected to all input vertices except $x^i_m$. We use one new vertex color to color the vertex $a$ and all vertices $a_i$. From the position $\{(x_n^1,x_m^1),\ldots,(x^k_n,x^k_m)\}$ ``$\alpha^{-1}(n^{\kmo}m^{\kmo}-1)$ on $x$'' Spoiler wins the game by placing the ($\kmo+1$)st pebble on $a$. Duplicator has to answer with some $a_i$ (because of the coloring). Since there is an edge between $x^i_n$ and $a$ in $\WIN{}_S$ but none between $x^i_m$ and $a_i$ in $\WIN{}_D$, Spoiler wins immediately. 
It is also not hard to see that for any other position where at least one pebble pair $(x_n^j,x_m^j)$ is missing Duplicator can survive by choosing $a_j$. 

\spaceconstraints{
			\begin{lemma}\label{lem:Win_Gadget_Spoiler}
				Spoiler wins the existential $(\kmo+1)$-pebble game on $\WIN{}$ from $\alpha^{-1}(n^{\kmo}m^{\kmo}-1)$ on the input.
			\end{lemma}
			Let $\posq_{\text{win}}=\alpha^{-1}(n^{\kmo}m^{\kmo}-1)$ be the configuration $ (\posa_{\text{win}},\posb_{\text{win}},\emptyset)$ with $\posa_{\text{win}}(i)\defi n$ and $\posb_{\text{win}}(i)\defi m$ for all $i\in[\kmo]$. The next lemma formalizes Duplicator's strategy.

			\begin{lemma}\label{lem:Win_Gadget_Duplicator}
				For every configuration $\posq\neq \posq_{\text{win}}$ there is a winning strategy for Duplicator in the existential $(\kmo+1)$-pebble game on $\WIN{}$ with boundary function $h^x_{\posq}$.\footnote{Recall the definition of $h^x_{\posq}$ on page \pageref{page:definition_hq}.}
			\end{lemma}
			\begin{proof}
				Let $\posq = (\posa,\posb,T)$. Since $\posq \neq \posq_{\text{win}}$ there is an index $j\in[\kmo]$ such that $\posa(j)\neq n$ or $\posb(j)\neq m$ or $j\in T$. Hence, $h^x_{\posq}(x^j_n)\neq x^j_m$ by the definition of $h^x_{\posq}$. Let $h\defi h^x_{\posq}\cup \{(a,a_j)\}$. Note that $h$ is a homomorphism from $\WIN{}_S$ to $\WIN{}_D$ since it preserves vertex colors and maps edges to edges. The winning strategy for Duplicator is $\wp(h)$ which has $h^x_{\posq}$ as boundary function by definition.
			\end{proof}
}
\begin{figure*}
\begin{center}
\begin{tikzpicture}
      [scale=1.4,  knoten/.style={circle,draw=black,fill=black,
      inner  sep=0pt,minimum  size=1mm},kknoten/.style={circle,draw=black,fill=black,
      inner  sep=0pt,minimum  size=0.8mm}, leerknoten/.style={circle,draw=black,fill=white,
      inner  sep=0pt,minimum  size=1mm},leerkknoten/.style={circle,draw=black,fill=white,
      inner  sep=0pt,minimum  size=0.8mm}]

\begin{scope}[xshift=-5cm]
  \node[knoten] (x11) at (0.2,0) [label=below:{\tiny{$x^{1}_1$}}] {}; 
  \node[knoten] (x12) at (0.4,0)  {};
  \node[knoten] (x13) at (0.6,0) {};
  \node[knoten] (x14) at (0.8,0) [label=below:{\tiny{$x^{1}_n$}}] {};

  \node[knoten] (x21) at (1.4,0)  {};
  \node[knoten] (x22) at (1.6,0)  {};
  \node[knoten] (x23) at (1.8,0)  {};
  \node[knoten] (x24) at (2,0)  {};

  \node[knoten] (x31) at (2.6,0)[label=below:{\tiny{$x^{\kmo}_1$}}]{};
  \node[knoten] (x32) at (2.8,0)  {};
  \node[knoten] (x33) at (3,0)  {};
  \node[knoten] (x34) at (3.2,0) [label=below:{\tiny{$x^{\kmo}_n$}}] {};

  \node[knoten] (a) at (1.7,1)[label=right:{{$a$}}]  {};
  \draw[-] (a) -- (x14);
  \draw[-] (a) -- (x24);
  \draw[-] (a) -- (x34);

\end{scope}

  \node[leerknoten] (x10) at (0,0) [label=below:{\tiny{$x^{1}_0$}}] {};
  \node[knoten] (x11) at (0.2,0)  {};
  \node[knoten] (x12) at (0.4,0)  {};
  \node[knoten] (x13) at (0.6,0) {};
  \node[knoten] (x14) at (0.8,0) [label=below:{\tiny{$x^{1}_m$}}] {};

  \node[leerknoten] (x20) at (1.2,0) {};
  \node[knoten] (x21) at (1.4,0)  {};
  \node[knoten] (x22) at (1.6,0)  {};
  \node[knoten] (x23) at (1.8,0)  {};
  \node[knoten] (x24) at (2,0)  {};

  \node[leerknoten] (x30) at (2.4,0) [label=below:{\tiny{$x^{\kmo}_0$}}] {};
  \node[knoten] (x31) at (2.6,0){};
  \node[knoten] (x32) at (2.8,0)  {};
  \node[knoten] (x33) at (3,0)  {};
  \node[knoten] (x34) at (3.2,0) [label=below:{\tiny{$x^{\kmo}_m$}}] {};

  \node[knoten] (a1) at (0.5,1)[label=right:{{$a_1$}}]  {};
  \node[knoten] (a2) at (1.7,1)[label=right:{{$a_i$}}]  {};
  \node[knoten] (a3) at (2.9,1)[label=right:{{$a_\kmo$}}]  {};
  
  \foreach \i/\a in {2/1,3/1,1/2,3/2,1/3,2/3} \foreach \j in {0,1,2,3,4} \draw[-] (a\a) -- (x\i\j);
  \foreach \a in {1,2,3} \foreach \j in {0,1,2,3} \draw[-] (a\a) -- (x\a\j);

\node at (-3.4,-0.5){$\WIN{}_S\subseteq\struc{A}_n$};
\node at (1.6,-0.5) {$\WIN{}_D\subseteq\struc{B}_m$};

\end{tikzpicture}
\end{center}
\caption{The winning gadget.} \label{fig:winning_gadget}
\end{figure*}
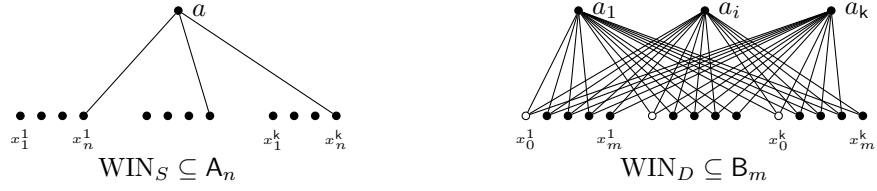

The \emph{increment gadgets} enable Spoiler to reach the successor $\posq^+$ from $\posq$. 
Recall that we identify every valid configuration $\posq = (\posa,\posb,\emptyset)$ with the tuple $(\posa(1),\ldots,\posa(\kmo),\posb(1),\ldots,\posb(\kmo)) \in [n]^k\times[m]^k$ and define $\alpha(\posq)$ to be the rank (from $0$ to $n^km^k-1$) of this tuple in lexicographical order. 
Let $\posq$ be a valid configuration with $\alpha(\posq)<n^\kmo m^\kmo - 1$ and successor $\posq^+=(\posa^+(1),\ldots,\posa^+(\kmo),\posb^+(1),\ldots,\posb^+(\kmo))$. 
We use two types of increment gadgets, \emph{left} and \emph{right}, depending on whether the left-hand side of the tuple changes after incrementation or not.
There are $\kmo$ increment gadgets of each type. Spoiler uses them depending on which position the last carryover occurs. If
\begin{align*}
	\posq  &= (\posa(1),\ldots,\posa(\kmo),&&\posb(1),\ldots,\posb(\ell-1),
	&&\posb(\ell)<m, &&m,\ldots,m)\text{ and hence } \\
	\posq^+ &= (\posa(1),\ldots,\posa(\kmo),&&\posb(1),\ldots,\posb(\ell-1),
	&&\posb(\ell)+1, &&1,\ldots,1), 
\end{align*}
then Spoiler uses the increment gadget $\INC{}^{\text{right}}_\ell$ to reach $\posq^+$ on the output from $\posq$ on the input. 
If
\begin{align*}
	\posq  &= (\posa(1),\ldots,\posa(\ell-1),
	&&\posa(\ell)<n, &&n,\ldots,n,&&m,\ldots,m)\text{ and hence } \\
	\posq^+  &= (\posa(1),\ldots,\posa(\ell-1),
	&&\posa(\ell)+1, &&1,\ldots,1,&&1,\ldots,1), 
\end{align*}
then Spoiler uses $\INC{}^{\text{left}}_\ell$. Thus, for every valid configuration $\posq$ with $\alpha(\posq)<n^\kmo m^\kmo-1$ there is exactly one \emph{applicable} increment gadget.
\spaceconstraints{
			The next definition formalizes this.
			\begin{definition}\label{def:applicable_inc_gadget}
			 	The right increment gadget $\INC{}^{\text{right}}_\ell$ is \emph{applicable} to a configuration $\posq$ if $\posq=(\posa,\posb,\emptyset)$ is valid, $\posb(\ell)<m$ and $\posb(i)=m$ for all $i>\ell$. A left increment gadget $\INC{}^{\text{left}}_\ell$ is applicable to $\posq$ if $\posq=(\posa,\posb,\emptyset)$ is valid, $\posa(\ell)<n$, $\posa(i)=n$ for all $i>\ell$ and $\posb(i)=m$ for all $i\in[\kmo]$. 
			 \end{definition} 
			It follows from the definition that for every valid configuration $\posq$ with $\alpha(\posq)<n^\kmo m^\kmo-1$ there is exactly one increment gadget that is applicable to $\posq$. 
			Furthermore if there is some applicable increment gadget for $\posq$, then $\alpha(\posq)<n^\kmo m^\kmo-1$ and $\posq^+$ is defined.
			We define $T^{\text{right}}_\ell(\posq)\subseteq[\kmo]$ and $T^{\text{left}}_\ell(\posq)\subseteq[\kmo]$ to be the set of blocks that contradict the applicability of $\INC{}^{\text{\upshape right}}_\ell$ ($\INC{}^{\text{\upshape left}}_\ell$, resp.) to $\posq$. 
			\label{page:def_T_right}
			That is, $i\in T^{\text{right}}_\ell(\posq)$ for a configuration $\posq=(\posa,\posb,T)$ if one of the following conditions is satisfied:
			\begin{itemize}
				\item $i=\ell$ and $\posb(i)=m$,
				\item $i>\ell$ and $\posb(i)\neq m$,
				\item $i\in T$.
			\end{itemize}
			Similarly, $i\in T^{\text{left}}_\ell(\posq)$ if
			\label{page:def_T_left}
			\begin{itemize}
				\item $i=\ell$ and $\posa(\ell)=n$,
				\item $i>\ell$ and $\posa(i)\neq n$,
				\item $\posb(i)\neq m$ or
				\item $i\in T$.
			\end{itemize}
			Therefore, $T^{\text{right}}_\ell(\posq)=\emptyset$ ($T^{\text{left}}_\ell(\posq)=\emptyset$) if and only if $\INC{}^{\text{\upshape right}}_\ell$ ($\INC{}^{\text{\upshape left}}_\ell$) is applicable to $\posq$.
			We now start describing the gadgets. Every increment gadget contains input vertices $x^i_j$, output vertices $y^i_j$ and no further vertices.
}
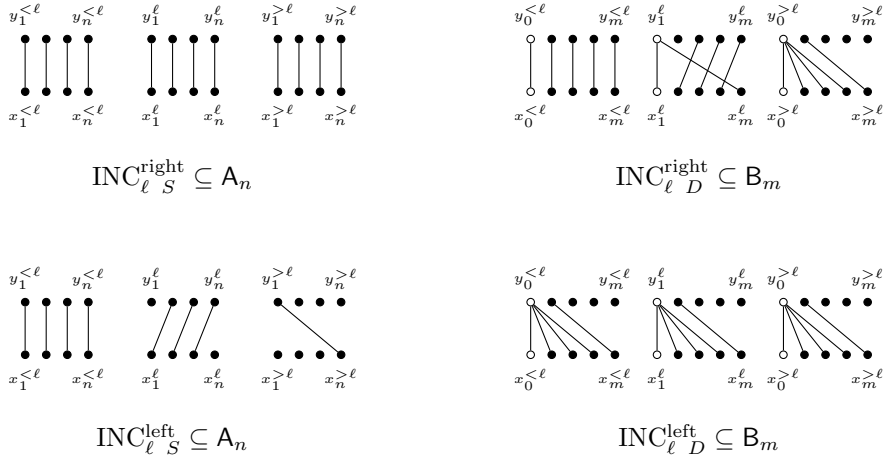
\begin{figure*}
\begin{center}
\begin{tikzpicture}
      [scale=1.4,  knoten/.style={circle,draw=black,fill=black,
      inner  sep=0pt,minimum  size=1mm},kknoten/.style={circle,draw=black,fill=black,
      inner  sep=0pt,minimum  size=0.8mm}, leerknoten/.style={circle,draw=black,fill=white,
      inner  sep=0pt,minimum  size=1mm},leerkknoten/.style={circle,draw=black,fill=white,
      inner  sep=0pt,minimum  size=0.8mm}]

i
\begin{scope}[xshift=-5cm]
  \node[knoten] (x11) at (0.2,0) [label=below:{\tiny{$x^{<\ell}_1$}}] {}; 
  \node[knoten] (x12) at (0.4,0)  {};
  \node[knoten] (x13) at (0.6,0) {};
  \node[knoten] (x14) at (0.8,0) [label=below:{\tiny{$x^{<\ell}_n$}}] {};

  \node[knoten] (x21) at (1.4,0)  [label=below:{\tiny{$x^{\ell}_1$}}]{};
  \node[knoten] (x22) at (1.6,0)  {};
  \node[knoten] (x23) at (1.8,0)  {};
  \node[knoten] (x24) at (2,0)  [label=below:{\tiny{$x^{\ell}_n$}}]{};

  \node[knoten] (x31) at (2.6,0)[label=below:{\tiny{$x^{>\ell}_1$}}]{};
  \node[knoten] (x32) at (2.8,0)  {};
  \node[knoten] (x33) at (3,0)  {};
  \node[knoten] (x34) at (3.2,0) [label=below:{\tiny{$x^{>\ell}_n$}}] {};

\end{scope}

  \node[leerknoten] (xx10) at (0,0) [label=below:{\tiny{$x^{<\ell}_0$}}] {};
  \node[knoten] (xx11) at (0.2,0)  {};
  \node[knoten] (xx12) at (0.4,0)  {};
  \node[knoten] (xx13) at (0.6,0) {};
  \node[knoten] (xx14) at (0.8,0) [label=below:{\tiny{$x^{<\ell}_m$}}] {};

  \node[leerknoten] (xx20) at (1.2,0) [label=below:{\tiny{$x^{\ell}_1$}}]{};
  \node[knoten] (xx21) at (1.4,0)  {};
  \node[knoten] (xx22) at (1.6,0)  {};
  \node[knoten] (xx23) at (1.8,0)  {};
  \node[knoten] (xx24) at (2,0)  [label=below:{\tiny{$x^{\ell}_m$}}]{};

  \node[leerknoten] (xx30) at (2.4,0) [label=below:{\tiny{$x^{>\ell}_0$}}] {};
  \node[knoten] (xx31) at (2.6,0){};
  \node[knoten] (xx32) at (2.8,0)  {};
  \node[knoten] (xx33) at (3,0)  {};
  \node[knoten] (xx34) at (3.2,0) [label=below:{\tiny{$x^{>\ell}_m$}}] {};

\begin{scope}[yshift = 0.5cm]
\begin{scope}[xshift=-5cm]
  \node[knoten] (y11) at (0.2,0) [label=above:{\tiny{$y^{<\ell}_1$}}] {}; 
  \node[knoten] (y12) at (0.4,0)  {};
  \node[knoten] (y13) at (0.6,0) {};
  \node[knoten] (y14) at (0.8,0) [label=above:{\tiny{$y^{<\ell}_n$}}] {};

  \node[knoten] (y21) at (1.4,0)  [label=above:{\tiny{$y^{\ell}_1$}}]{};
  \node[knoten] (y22) at (1.6,0)  {};
  \node[knoten] (y23) at (1.8,0)  {};
  \node[knoten] (y24) at (2,0)  [label=above:{\tiny{$y^{\ell}_n$}}]{};

  \node[knoten] (y31) at (2.6,0)[label=above:{\tiny{$y^{>\ell}_1$}}]{};
  \node[knoten] (y32) at (2.8,0)  {};
  \node[knoten] (y33) at (3,0)  {};
  \node[knoten] (y34) at (3.2,0) [label=above:{\tiny{$y^{>\ell}_n$}}] {};

\end{scope}

  \node[leerknoten] (yy10) at (0,0) [label=above:{\tiny{$y^{<\ell}_0$}}] {};
  \node[knoten] (yy11) at (0.2,0)  {};
  \node[knoten] (yy12) at (0.4,0)  {};
  \node[knoten] (yy13) at (0.6,0) {};
  \node[knoten] (yy14) at (0.8,0) [label=above:{\tiny{$y^{<\ell}_m$}}] {};

  \node[leerknoten] (yy20) at (1.2,0) [label=above:{\tiny{$y^{\ell}_1$}}]{};
  \node[knoten] (yy21) at (1.4,0)  {};
  \node[knoten] (yy22) at (1.6,0)  {};
  \node[knoten] (yy23) at (1.8,0)  {};
  \node[knoten] (yy24) at (2,0)  [label=above:{\tiny{$y^{\ell}_m$}}]{};

  \node[leerknoten] (yy30) at (2.4,0) [label=above:{\tiny{$y^{>\ell}_0$}}] {};
  \node[knoten] (yy31) at (2.6,0){};
  \node[knoten] (yy32) at (2.8,0)  {};
  \node[knoten] (yy33) at (3,0)  {};
  \node[knoten] (yy34) at (3.2,0) [label=above:{\tiny{$y^{>\ell}_m$}}] {};
	
\end{scope}

\foreach \i in {1,2,3} \foreach \j in {1,2,3,4} \draw[-] (x\i\j) -- (y\i\j);
\foreach \i in {1} \foreach \j in {0,1,2,3,4} \draw[-] (xx\i\j) -- (yy\i\j);
\foreach \oben/\unten in {0/4,3/2,2/1,4/3,0/0} \draw[-] (xx2\unten) -- (yy2\oben);
\foreach \oben/\unten in {0/0,0/2,0/3,0/1,1/4} \draw[-] (xx3\unten) -- (yy3\oben);

\node at (-3.4,-0.8){${\INC{}^{\text{right}}_{\ell\;\; S}}\subseteq\struc{A}_n$};
\node at (1.6,-0.8) {${\INC{}^{\text{right}}_{\ell\;\; D}}\subseteq\struc{B}_m$};

\begin{scope}[yshift=-2.5cm]

	\begin{scope}[xshift=-5cm]
	  \node[knoten] (x11) at (0.2,0) [label=below:{\tiny{$x^{<\ell}_1$}}] {}; 
	  \node[knoten] (x12) at (0.4,0)  {};
	  \node[knoten] (x13) at (0.6,0) {};
	  \node[knoten] (x14) at (0.8,0) [label=below:{\tiny{$x^{<\ell}_n$}}] {};

	  \node[knoten] (x21) at (1.4,0)  [label=below:{\tiny{$x^{\ell}_1$}}]{};
	  \node[knoten] (x22) at (1.6,0)  {};
	  \node[knoten] (x23) at (1.8,0)  {};
	  \node[knoten] (x24) at (2,0)  [label=below:{\tiny{$x^{\ell}_n$}}]{};

	  \node[knoten] (x31) at (2.6,0)[label=below:{\tiny{$x^{>\ell}_1$}}]{};
	  \node[knoten] (x32) at (2.8,0)  {};
	  \node[knoten] (x33) at (3,0)  {};
	  \node[knoten] (x34) at (3.2,0) [label=below:{\tiny{$x^{>\ell}_n$}}] {};

	\end{scope}

	  \node[leerknoten] (xx10) at (0,0) [label=below:{\tiny{$x^{<\ell}_0$}}] {};
	  \node[knoten] (xx11) at (0.2,0)  {};
	  \node[knoten] (xx12) at (0.4,0)  {};
	  \node[knoten] (xx13) at (0.6,0) {};
	  \node[knoten] (xx14) at (0.8,0) [label=below:{\tiny{$x^{<\ell}_m$}}] {};

	  \node[leerknoten] (xx20) at (1.2,0) [label=below:{\tiny{$x^{\ell}_1$}}]{};
	  \node[knoten] (xx21) at (1.4,0)  {};
	  \node[knoten] (xx22) at (1.6,0)  {};
	  \node[knoten] (xx23) at (1.8,0)  {};
	  \node[knoten] (xx24) at (2,0)  [label=below:{\tiny{$x^{\ell}_m$}}]{};

	  \node[leerknoten] (xx30) at (2.4,0) [label=below:{\tiny{$x^{>\ell}_0$}}] {};
	  \node[knoten] (xx31) at (2.6,0){};
	  \node[knoten] (xx32) at (2.8,0)  {};
	  \node[knoten] (xx33) at (3,0)  {};
	  \node[knoten] (xx34) at (3.2,0) [label=below:{\tiny{$x^{>\ell}_m$}}] {};

	\begin{scope}[yshift = 0.5cm]
	\begin{scope}[xshift=-5cm]
	  \node[knoten] (y11) at (0.2,0) [label=above:{\tiny{$y^{<\ell}_1$}}] {}; 
	  \node[knoten] (y12) at (0.4,0)  {};
	  \node[knoten] (y13) at (0.6,0) {};
	  \node[knoten] (y14) at (0.8,0) [label=above:{\tiny{$y^{<\ell}_n$}}] {};

	  \node[knoten] (y21) at (1.4,0)  [label=above:{\tiny{$y^{\ell}_1$}}]{};
	  \node[knoten] (y22) at (1.6,0)  {};
	  \node[knoten] (y23) at (1.8,0)  {};
	  \node[knoten] (y24) at (2,0)  [label=above:{\tiny{$y^{\ell}_n$}}]{};

	  \node[knoten] (y31) at (2.6,0)[label=above:{\tiny{$y^{>\ell}_1$}}]{};
	  \node[knoten] (y32) at (2.8,0)  {};
	  \node[knoten] (y33) at (3,0)  {};
	  \node[knoten] (y34) at (3.2,0) [label=above:{\tiny{$y^{>\ell}_n$}}] {};

	\end{scope}

	  \node[leerknoten] (yy10) at (0,0) [label=above:{\tiny{$y^{<\ell}_0$}}] {};
	  \node[knoten] (yy11) at (0.2,0)  {};
	  \node[knoten] (yy12) at (0.4,0)  {};
	  \node[knoten] (yy13) at (0.6,0) {};
	  \node[knoten] (yy14) at (0.8,0) [label=above:{\tiny{$y^{<\ell}_m$}}] {};

	  \node[leerknoten] (yy20) at (1.2,0) [label=above:{\tiny{$y^{\ell}_1$}}]{};
	  \node[knoten] (yy21) at (1.4,0)  {};
	  \node[knoten] (yy22) at (1.6,0)  {};
	  \node[knoten] (yy23) at (1.8,0)  {};
	  \node[knoten] (yy24) at (2,0)  [label=above:{\tiny{$y^{\ell}_m$}}]{};

	  \node[leerknoten] (yy30) at (2.4,0) [label=above:{\tiny{$y^{>\ell}_0$}}] {};
	  \node[knoten] (yy31) at (2.6,0){};
	  \node[knoten] (yy32) at (2.8,0)  {};
	  \node[knoten] (yy33) at (3,0)  {};
	  \node[knoten] (yy34) at (3.2,0) [label=above:{\tiny{$y^{>\ell}_m$}}] {};
		
	\end{scope}

	\foreach \z in {1,2,3} \foreach \oben/\unten in {0/0,0/2,0/3,0/1,1/4} \draw[-] (xx\z\unten) -- (yy\z\oben);

	\foreach \i in {1} \foreach \j in {1,2,3,4} \draw[-] (x\i\j) -- (y\i\j);
	\foreach \oben/\unten in {2/1,3/2,4/3} \draw[-] (x2\unten) -- (y2\oben);
	\draw[-] (x34) -- (y31);

	\node at (-3.4,-0.8){${\INC{}^{\text{left}}_{\ell\;\; S}}\subseteq\struc{A}_n$};
	\node at (1.6,-0.8) {${\INC{}^{\text{left}}_{\ell\;\; D}}\subseteq\struc{B}_m$};

\end{scope}

\end{tikzpicture}
\end{center}
\caption{The increment gadgets.} \label{fig:inc}
\end{figure*}
The increment gadgets are shown in Figure~\ref{fig:inc}. All input vertices $x^i_j$ have at most one output vertex $y^i_{j'}$ as neighbor. 
Furthermore, if the gadget is applicable to a valid configuration $\posq=(\posa,\posb,\emptyset)$, then the unique neighbor of $x^i_{\posa(i)}$ is $y^i_{\posa^+(i)}$ and the unique neighbor of $x^i_{\posb(i)}$ is $y^i_{\posb^+(i)}$. This enables Spoiler to reach $\posq^+$ on the output from $\posq$ on the input by the following procedure.
First, Spoiler places the remaining pebble on $y^1_{\posa^+(1)}$. Since this vertex is adjacent to $x^1_{\posa(1)}$, Duplicator has to answer with $y^1_{\posb^+(1)}$, the only vertex that is adjacent to $x^1_{\posb(1)}$. 
Afterwards, Spoiler picks up the pebble pair from $(x^1_{\posa(1)},x^1_{\posb(1)})$. 
On the second block Spoiler proceeds the same way: he pebbles $y^2_{\posa^+(2)}$, forces the position $(y^2_{\posa^+(2)},y^2_{\posb^+(2)})$ and picks up the pebbles from $(x^2_{\posa(2)},x^2_{\posb(2)})$. By iterating this procedure Spoiler reaches $\posq^+$ on the output.

If Spoiler tries to move a configuration through one increment gadget that is \emph{not} applicable, then Duplicator can answer with an invalid configuration on the output as follows. On the one hand, if the gadget is not applicable because some $\posb(i)$ does not have the specified value, then $x^i_{\posb(i)}$ is adjacent to $y^i_0$. On the other hand, if some $\posa(i)$ has the wrong value, then $x^i_{\posa(i)}$ is not adjacent to an output vertex. In both cases Duplicator can safely pebble $y^i_0$ if Spoiler queries some $y^i_{j}$ and hence maintain an invalid output position.
The next lemma provides these strategies, a formal proof is given in the full version of the paper.
\begin{lemma}\label{lem:inc}
	Let $\posq=(\posa,\posb,T)$ be a configuration and $\INC$ an increment gadget.
	\begin{itemize}
		\item[1.] If $\INC$ is applicable to $\posq$, then Spoiler can reach $\posq^+$ on the output from $\posq$ on the input of $\INC$.
		\item[2.] If $\INC$ is applicable to $\posq$, then there is a winning strategy for Duplicator with boundary function $h^x_{\posq}$ on the input and $h^y_{\posq^+}$ on the output.
		\item[3.] If $\INC$ is not applicable to $\posq$, then there is a winning strategy for Duplicator with boundary function $h^x_{\posq}$ on the input and $h^y_{\posq_{\text{\upshape inv}}}$ on the output for an invalid configuration 
		$\posq_{\text{\upshape inv}}$.
	\end{itemize}
\end{lemma}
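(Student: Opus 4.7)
I would prove the three parts in order. For part~1 the plan is to exhibit Spoiler's strategy explicitly by iterating over the blocks $i=1,\ldots,\kmo$. At block $i$ Spoiler places the free pebble pair on $y^i_{\posa^+(i)}$ in $\INC_S$. Because $\INC$ is applicable to $\posq$, the vertex $y^i_{\posa^+(i)}$ has a unique neighbor in $\INC_S$, namely $x^i_{\posa(i)}$, and the unique $y^i$-colored neighbor of $x^i_{\posb(i)}$ in $\INC_D$ is $y^i_{\posb^+(i)}$, so Duplicator is forced to respond with that vertex. Spoiler then lifts the pair on $(x^i_{\posa(i)},x^i_{\posb(i)})$, obtaining a fresh free pair to use on block $i+1$. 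After all $\kmo$ blocks have been processed, the board displays exactly $\posq^+$ on the output.

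For parts~2 and~3 the plan is to define Duplicator's winning strategy as the downward closure $\mathcal{H}=\cl(h)$ of a single total map $h\colon V(\INC_S)\to V(\INC_D)$. In part~2 I take $h=h^x_{\posq}\cup h^y_{\posq^+}$. In part~3 I take $h=h^x_{\posq}\cup h^y_{\posq_{\text{inv}}}$, where $\posq_{\text{inv}}$ has critical set equal to $T^{\text{right}}_\ell(\posq)$ or $T^{\text{left}}_\ell(\posq)$ (the blocks that witness non-applicability of $\INC$ to $\posq$) and agrees with the would-be successor in the remaining coordinates. Since $h$ is defined on all of $V(\INC_S)$, the non-emptiness, downward closure, and extension properties of Definition~\ref{def:criticalStrategy} hold immediately with $\crit(\mathcal{H})=\emptyset$, and the prescribed boundary functions are recovered by construction. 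The entire argument thus reduces to showing that $h$ is a graph homomorphism from $\INC_S$ to $\INC_D$.

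This homomorphism check is where the actual work sits and is the step I expect to be the main obstacle. Since every edge of $\INC_S$ joins an input $x^i_j$ with an output $y^i_{j'}$ in the same block~$i$, the verification splits into cases by the block type ($i<\ell$, $i=\ell$, or $i>\ell$, and left- vs.\ right-increment) and, within each block, by whether $(j,j')=(\posa(i),\posa^+(i))$ (the intended edge) or not (a fallback edge). For the intended edge in part~2 the required edge $x^i_{\posb(i)}-y^i_{\posb^+(i)}$ in $\INC_D$ is exactly what applicability guarantees; for the fallback edges in part~2 at least one endpoint is a $0$-vertex, and the gadget is designed so that $x^i_0$ and $y^i_0$ in $\INC_D$ are incident with every other vertex of the block. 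For part~3 the key observation is that whenever $i\in T^{\text{right}}_\ell(\posq)$ or $T^{\text{left}}_\ell(\posq)$ the map $h$ sends the entire block~$i$ on $\INC_S$ into $\{x^i_0,y^i_0\}$, and the same fallback edges in $\INC_D$ cover every edge of $\INC_S$ in that block. Once all sub-cases are verified the lemma follows.
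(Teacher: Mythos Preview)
Your overall plan is exactly the paper's: for part~1 iterate block by block, and for parts~2 and~3 exhibit a single total map $h=h^x_{\posq}\cup h^y_{\posq'}$ (with $\posq'=\posq^+$ resp.\ $\posq_{\text{inv}}$) and take the downward closure. The reduction to a block-wise edge check is also right.

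Two of the reasons you give for that edge check are inaccurate, though, and would not survive the case analysis as written.

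First, it is \emph{not} true that in $\INC_D$ the vertices $x^i_0$ and $y^i_0$ are adjacent to every other vertex of block~$i$; in most blocks $x^i_0$ is adjacent only to $y^i_0$. What actually happens in part~2 is stronger than ``at least one endpoint is a $0$-vertex'': for every edge $\{x^i_j,y^i_{j'}\}$ of $\INC_S$, either $(j,j')=(\posa(i),\posa^+(i))$ and the image is the intended edge, or \emph{both} endpoints map to $0$-vertices and the image is the single edge $\{x^i_0,y^i_0\}$. This ``both or neither'' dichotomy is what you should verify in each block type.

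Second, in part~3 the claim that for $i$ in $T^{\text{right}}_\ell(\posq)$ or $T^{\text{left}}_\ell(\posq)$ the whole block is sent into $\{x^i_0,y^i_0\}$ is false: when $i\notin T$ you still have $h(x^i_{\posa(i)})=x^i_{\posb(i)}\neq x^i_0$. The correct mechanism splits according to \emph{why} block~$i$ fails applicability. If the failure is on the $\posb$/$T$ side (this is the only possibility for $\INC^{\text{right}}_\ell$), then $x^i_{\posb(i)}$ is one of the inputs that \emph{is} adjacent to $y^i_0$ in $\INC_D$, so the intended edge maps to $\{x^i_{\posb(i)},y^i_0\}$. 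If the failure is on the $\posa$ side (only in $\INC^{\text{left}}_\ell$), then $x^i_{\posa(i)}$ has no incident edge in $\INC_S$ at all, so there is nothing to check for that vertex. Once you replace your two shortcuts by these observations, the case analysis goes through and matches the paper's argument.
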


\spaceconstraints{

					While we describe the gadget it might be useful to have the situation in mind when we want to use this gadget.
					Hence, assume that the current pebble position is 
					$$\{(x^1_{\posa(1)},x^1_{\posb(1)}),\ldots,(x^\ell_{\posa(\ell)},x^\ell_{\posb(\ell)}),(x^{\ell+1}_{\posa(\ell+1)},x^{\ell+1}_{m}),\ldots,(x^{\kmo}_{\posa(\kmo)},x^{\kmo}_{m})\}$$ 
					and Spoiler wants to reach the incremented position
					$$\{(y^1_{\posa(1)},y^1_{\posb(1)}),\ldots,(y^\ell_{\posa(\ell)},y^\ell_{\posb(\ell)+1}),(y^{\ell+1}_{\posa(\ell+1)},y^{\ell+1}_{1}),\ldots,(y^{\kmo}_{\posa(\kmo)},y^{\kmo}_{1})\}.$$
					In Spoiler's side of the gadget all blocks have the same shape: every $x^i_j$ is connected to $y^i_j$. In Duplicator's side the first $\ell-1$ blocks look the same, $x^i_j$ is connected to $y^i_j$ for all $i<\ell$ and $0\leq j\leq m$. This ensures that Spoiler can reach $(y^i_{\posa(i)},y^{i}_{\posb(i)})$ from $(x^i_{\posa(i)},x^{i}_{\posb(i)})$, for all $i<\ell$, by placing the remaining $(\kmo+1)$st pebble on $y^i_{\posa(i)}$.  
					Block $\ell$ on Duplicator's side looks different (this is the block where we want to increment $\posb(\ell)$ to $\posb(\ell)+1$). 
					The input vertex $x^\ell_i$ is connected to $y^\ell_{i+1}$ for every $i\in[m-1]$. Furthermore, we connect $x^\ell_m$ to the special vertex $y^\ell_0$. From the position $(x^\ell_{\posa(\ell)},x^\ell_{\posb(\ell)})$, where $\posb(\ell)<m$, Spoiler can reach $(y^\ell_{\posa(\ell)},y^\ell_{\posb(\ell)+1})$ by placing the remaining pebble on $y^\ell_{\posa(\ell)}$. Every other block $i>\ell$ on Duplicator's side contains an edge between $x^i_m$ and $y^i_1$ to ensure that Spoiler can reach $(y^i_{\posa(i)},y^i_{1})$ from $(x^i_{\posa(i)},x^i_{m})$ in the same way. Furthermore, the other input vertices $x^i_j$, $j<m$, are connected to the special vertex $y^i_0$. 
					\begin{lemma}\label{lem:inc_right_Sp}
						Let $\ell\in[\kmo]$ and $\posq$ be a configuration such that $\INC{}^{\text{\upshape right}}_\ell$ is applicable to $\posq$. Then Spoiler can reach $\posq^+$ on the output from $\posq$ on the input of the right increment gadget $\INC{}^{\text{\upshape right}}_\ell$.  
					\end{lemma}
					\begin{proof}
						Let $\posq=(\posa,\posb,\emptyset)$ with successor $\posq^+=(\posa^+,\posb^+,\emptyset)$. 
						Note that $\posa=\posa^+$ by Definition~\ref{def:applicable_inc_gadget}.
						We have to show that Spoiler can reach $\{(y^i_{\posa(i)},y^i_{\posb^+(i)})\mid i\in[\kmo]\}$ from $\{(x^i_{\posa(i)},x^i_{\posb(i)})\mid i\in[\kmo]\}$. 
						By definition of the gadget, $y^i_{\posa(i)}$ is a neighbor of $x^i_{\posa(i)}$ on Spoiler's side. 
						On Duplicator's side, $y^i_{\posb^+(i)}$ is a neighbor of $x^i_{\posb(i)}$. 
						Furthermore, $y^i_{\posb^+(i)}$ is the only neighbor among $y^i_{0},\ldots,y^i_{m}$, which are all the vertices having the same color as $y^i_{\posa(i)}$ (see Figure~\ref{fig:right_increment}). 
						In such a situation Spoiler can reach $\posq^+$ at the output by the following procedure. 
						First, Spoiler places the remaining pebble on $y^1_{\posa(1)}$. Since this vertex is adjacent to $x^1_{\posa(1)}$, Duplicator has to answer with a vertex of the same color (\ie{} $y^1_j$ for some $0\leq j\leq m$) that is adjacent to $x^1_{\posb(1)}$. 
						The only vertex satisfying this property is $y^1_{\posb^+(1)}$. 
						Thus, the new pebble position is  $(y^1_{\posa(1)},y^1_{\posb^+(1)})$ and Spoiler can pick up the pebble pair from $(x^1_{\posa(1)},x^1_{\posb(1)})$. 
						On the second block Spoiler proceeds the same way: he pebbles $y^2_{\posa(2)}$, forces the position $(y^2_{\posa(2)},y^2_{\posb^+(2)})$ and picks up the pebbles from $(x^2_{\posa(2)},x^2_{\posb(2)})$. 
						By iterating this procedure, Spoiler can reach the position $\{(y^i_{\posa(i)},y^i_{\posb^+(i)})\mid i\in[\kmo]\}$.
					\end{proof}
					We have shown that Spoiler can increment a configuration using an applicable right increment gadget. The next step is to show that this is essentially everything Spoiler can do with this gadget. 
					To show this, we have to ensure the following two assertions. 
					First, from a valid configuration on the input of an increment gadget applicable to it Spoiler can \emph{only} reach the incremented configuration on the output. 
					Second, if the gadget is not applicable to the configuration on the input, then Spoiler cannot reach a valid configuration on the output. 
					We provide Duplicator with appropriate counter strategies to ensure these assertions. In general, Duplicator's counter strategy is to pebble a \whitenode vertex whenever she has the possibility to do so. 
					On the one hand, if Spoiler pebbles a vertex that is neither equal nor adjacent to another pebbled vertex in $\struc{A}_n$, then Duplicator can answer with the \whitenode vertex in the corresponding block. 
					On the other hand, if Spoiler pebbles an edge $\{x^i_j,y^i_j\}$, then Duplicator may answer with the \whitenode vertex provided there is also an edge in her graph.
					The next lemma formalizes Duplicator's strategy.
					\begin{lemma}\label{lem:inc_right_Du}
						Let $\ell\in[\kmo]$ and $\posq=(\posa,\posb,T)$ be a configuration.
						\begin{itemize}
							\item[1.] If $\INC{}^{\text{\upshape right}}_\ell$ is applicable to $\posq$, then there is a winning strategy for Duplicator with boundary function $h^x_{\posq}$ on the input and $h^y_{\posq^+}$ on the output.
							\item[2.] If $\INC{}^{\text{\upshape right}}_\ell$ is not applicable to $\posq$, then there is a winning strategy for Duplicator with boundary function $h^x_{\posq}$ on the input and $h^y_{\posq_{\text{\upshape inv}}}$ on the output for an invalid configuration 
							$\posq_{\text{\upshape inv}}$.
						\end{itemize}
					\end{lemma}
					\begin{proof}
						To prove the first statement let $\posq$ be a valid configuration (hence $T=\emptyset$) such that  $\INC{}^{\text{\upshape right}}_\ell$ is applicable to $\posq$ and $\posq^+=(\posa^+,\posb^+,\emptyset)$ be the incremented position. We claim that $h\defi h^x_{\posq}\cup h^y_{\posq^+}$ is a homomorphism from ${\INC{}^{\text{\upshape right}}_{\ell\;\; S}}$ to ${\INC{}^{\text{\upshape right}}_{\ell\;\; D}}$. It follows that $\mathcal H \defi \wp(h)$ is a winning strategy with the desired boundary function.
						Since $h^x_{\posq}$ and $h^y_{\posq^+}$ preserve vertex colors, it remains to verify that all edges were mapped to edges. Hence, we have to check that for all $i\in[\kmo]$ and $j\in[n]$ there is an edge between $h(x^i_j)$ and $h(y^i_j)$ in Duplicator's graph. Recall that $\posa=\posa^+$ whenever a right increment gadget is applicable to $\posq$. By definition
						\begin{align*}		
						h(x^i_j) &=  \begin{cases}
						x^i_{\posb(i)}\text{, if }j=\posa(i), \\
						x^i_{0}\text{, otherwise,}
						\end{cases} &
						h(y^i_j) &=  \begin{cases}
						y^i_{\posb^+(i)}\text{, if }j=\posa(i), \\
						y^i_{0}\text{, otherwise.}
						\end{cases}  
						\end{align*}
						Since there is an edge between \whitenode vertices of the corresponding blocks it follows that $h(x^i_j)$ and $h(y^i_j)$ are adjacent for all $j\neq\posa(i)$. 
						By the choice of $\posq$ and $\posq^+$ we have $\posb^+(i)=\posb(i)$ for all $i<\ell$, $\posb^+(\ell)=\posb(\ell)+1$, $\posb(i)=m$ and $\posb^+(i)=1$ for all $i>\ell$. 
						Thus, by the definition of the gadget, there is an edge between $h(x^i_{\posa(i)})=x^i_{\posb(i)}$ and $h(y^i_{\posa(i)})=y^i_{\posb^+(i)}$.

						For the second statement 
						recall that $T^{\text{right}}_\ell(\posq)$ (defined on page \pageref{page:def_T_right}) is the set of blocks that do not satisfy the applicability condition.
						Assume that $\INC{}^{\text{\upshape right}}_\ell$ is not applicable to $\posq=(\posa,\posb,T)$ and hence $T^{\text{\upshape right}}_\ell(\posq)\neq\emptyset$. Note that on Duplicator's side of the gadget every vertex $x^i_j$ has exactly one neighbor $y^i_{j'}$. 
						We have designed the gadget such that the unique neighbor of $x^i_{\posb(i)}$ is $y^i_0$ if and only if $i\in T^{\text{\upshape right}}_\ell(\posq)$. That is, if block $i$ contradicts the applicability condition, then Duplicator has the chance to move to the \whitenode vertex $y^i_0$ when Spoiler moves upwards. Hence, Duplicator can avoid a valid configuration on the output. Formally, Duplicator's strategy is $\wp(h^x_{\posq}\cup h^y_{\posq_\text{\upshape inv}})$ where $\posq_{\text{\upshape inv}}=(\posa,\posb_{\text{\upshape inv}}, T^{\text{\upshape right}}_\ell(\posq))$ with 
						\begin{align*}		
						\posb_{\text{\upshape inv}}(i) &=  \begin{cases}
						\text{arbitrary, if }i\in T^{\text{\upshape right}}_\ell(\posq), \\
						\text{$j$, such that $y^i_j$ is the neighbor of $x^i_{\posb(i)}$, otherwise.}\mbox{\qed}
						\end{cases}  
						\end{align*}
					\end{proof}
					This concludes the strategies on the right increment gadget. In the remaining part of this paragraph we describe similar strategies for the left increment gadget. The gadget $\INC{}^{\text{left}}_\ell$ is shown in Figure~\ref{fig:left_increment}. It ensures that from a configuration $\posq$ (where $\INC{}^{\text{left}}_\ell$ is applicable) on the input Spoiler can reach the incremented position $\posq^+$ on the output. 
					That is, starting from a position
					$$
					\{(x^1_{\posa(1)},x^1_{m}),
					\ldots,
					(x^\ell_{\posa(\ell)},x^\ell_{m}),
					(x^{\ell+1}_{n},x^{\ell+1}_{m}),
					\ldots,
					(x^{\kmo}_{n},x^{\kmo}_{m})\}$$ 
					Spoiler can reach
					$$\{(y^1_{\posa(1)},y^1_{1}),
					\ldots,
					(y^\ell_{\posa(\ell)+1},y^\ell_{1}),
					(y^{\ell+1}_{1},y^{\ell+1}_{1}),
					\ldots,
					(y^{\kmo}_{1},y^{\kmo}_{1})\}.$$
					To achieve this, there are edges $\{x^i_j,y^i_j\}$ (for $i<\ell,j\in[n]$), $\{x^\ell_j,y^\ell_{j+1}\}$ (for $j\in[n-1]$) and $\{x^i_n,y^i_1\}$ (for $i>\ell$) in Spoiler's graph. In Duplicator's graph every vertex $x^i_m$ is adjacent to $y^i_1$, all other input vertices $x^i_j$, $j<m$, are connected to the \whitenode vertex $y^i_0$. The next lemma describes Spoiler's strategy, which is similar to the strategy on the right increment gadget (Lemma~\ref{lem:inc_right_Sp}). 
					\begin{lemma}\label{lem:inc_left_Sp}
						Let $\ell\in[\kmo]$, $\posq$ be a configuration such that $\INC{}^{\text{\upshape left}}_\ell$ is applicable to $\posq$. 
						Spoiler can reach $\posq^+$ on the output from $\posq$ on the input of the left increment gadget $\INC{}^{\text{\upshape left}}_\ell$. 
					\end{lemma}
					\begin{proof}
						Since $\INC{}^{\text{\upshape left}}_\ell$ is applicable to $\posq=(\posa,\posb,\emptyset)$ we have that $\posa(\ell)<n$, $\posa(i)=n$ for all $i>\ell$ and $\posb(i)=m$ for all $i\in[\kmo]$. Furthermore, $\posq^+=(\posa^+,\posb^+,\emptyset)$ satisfies $\posa^+(i)=\posa(i)$ for all $i<\ell$, $\posa^+(\ell)=\posa(\ell)+1$, $\posa^+(i)=1$ for all $i>\ell$ and $\posb^+(i)=1$ for all $i\in[\kmo]$. 
						By definition of the gadget all edges $\{x^i_{\posa(i)},y^i_{\posa^+(i)}\}$ are present in Spoiler's graph. Moreover, $y^i_{\posb^+(i)}=y^i_{1}$ is the only neighbor of $x^i_{\posb(i)}=x^i_{m}$ in the corresponding block. Hence, Spoiler can reach $\posq^+$ on the output by pebbling along these edges in the same way as described in Lemma~\ref{lem:inc_right_Sp}.
					\end{proof}
					Again, Duplicator does not lose when Spoiler increments a position. Furthermore, if the increment gadget is not applicable to the current configuration, then Spoiler does not reach any valid position on the output. The next lemma ensures this and is the analogue of Lemma~\ref{lem:inc_right_Du} for left increment gadgets.
					\begin{lemma}\label{lem:inc_left_Du}
						Let $\ell\in[\kmo]$ and $\posq=(\posa,\posb,T)$ be a configuration.
						\begin{itemize}
							\item[1.] If $\INC{}^{\text{\upshape left}}_\ell$ is applicable to $\posq$, then there is a winning strategy for Duplicator with boundary function $h^x_{\posq}$ on the input and $h^y_{\posq^+}$ on the output.
							\item[2.] If $\INC{}^{\text{\upshape left}}_\ell$ is not applicable to $\posq$, then there is a winning strategy for Duplicator with boundary function $h^x_{\posq}$ on the input and $h^y_{\posq_{\text{\upshape inv}}}$ on the output for an invalid configuration 
							$\posq_{\text{\upshape inv}}$.
						\end{itemize}
					\end{lemma}
					\begin{proof}
						Assume that $\INC{}^{\text{\upshape left}}_\ell$ is applicable to $\posq$.
						As in the proof of Lemma~\ref{lem:inc_right_Du} it is straightforward to check that $h=h^x_{\posq}\cup h^y_{\posq^+}$ is a homomorphism from Spoiler's side of the gadget to Duplicator's side: Every edge $\{x^i_j,x^i_{j'}\}$ is mapped to either $\{x^i_0,y^i_0\}$ or $\{x^i_{\posb(i)},y^i_{\posb^+(i)}\}=\{x^i_{m},y^i_{1}\}$. Hence, $\wp(h)$ is a winning strategy for Duplicator with the desired boundary function. 
						To prove the second statement assume that $\INC{}^{\text{\upshape left}}_\ell$ is not applicable to $\posq$ and hence $T^{\text{\upshape left}}_\ell(\posq)\neq\emptyset$. Duplicator plays according to $h^x_{\posq}$ on the input vertices. 
						If Spoiler pebbles some vertex $y^i_j$ for an $i\in T^{\text{\upshape left}}_\ell(\posq)$ we consider two cases. 
						The first case is that $\posb(i)\neq m$ or $i\in T$. Then $y^i_0$ is the neighbor of every $h^x_{\posq}(x^i_j)$ and Duplicator can safely move to $y^i_0$ (as in the proof of Lemma~\ref{lem:inc_right_Du}). The second case is that ($i=\ell$ and $\posa(\ell)=n$) or ($i>\ell$ and $\posa(i)\neq n$). 
						In this situation Duplicator can answer with $y^i_0$ since on the one hand if $h^x_{\posq}(x^i_j)=x^i_0$, 
						then there is an edge $\{x^i_0,y^i_0\}$ on Duplicator's side. 
						On the other hand, if $h^x_{\posq}(x^i_j)\neq x^i_0$, then by definition $x^i_j=x^i_{\posa(i)}$. Since there is no edge between $x^i_{\posa(i)}$ and the $i$th block of the output vertices in Spoiler's graph, the choice of $y^i_0$ extends to a partial homomorphism. 
						If Spoiler pebbles $y^i_j$ for some $i\notin T^{\text{\upshape left}}_\ell(\posq)$, then Duplicator answers with $y^i_1$ as in part 1 of this lemma. Formally, Duplicator's strategy is $\wp(h^x_{\posq}\cup h^y_{\posq_{\text{\upshape inv}}})$ where where $\posq_{\text{\upshape inv}}=(\posa_{\text{\upshape inv}},\posb_{\text{\upshape inv}},T^{\text{\upshape right}}_\ell(\posq))$ with $\posb_{\text{\upshape inv}}(i)=1$ for all $i\in[\kmo]$ and  
						\begin{align*}		
						\posa_{\text{\upshape inv}}(i) &=  \begin{cases}
						\text{arbitrary, if }i\in T^{\text{\upshape right}}_\ell(\posq), \\
						\text{$j$, such that $y^i_j$ is the neighbor of $x^i_{\posa(i)}$, otherwise.}\mbox{\qed}
						\end{cases}  
						\end{align*}
					\end{proof}
}

The \emph{switch} is an extension of the ``multiple input one-way switch'' defined in \cite{Ber13} (which in turn is a generalization of \cite{Kolaitis.2003}).
The difference is that the old switch can only be used for the case $n=1$. 
It requires some work to adjust the old switch to make it work for the more general setting. 
\confORarxiv{
	But since these modifications require a deeper inspection into this technical construct (and are not the main contribution of this paper), we refer to the full version of the paper and use the switch as black box at this point.
	We briefly explain the strategies on the switch and provide them in Lemma~\ref{lem:multi}.
}{
	The technical details of the switch are shifted to Appendix~\ref{sec:Aswitch}. At this point we focus on a high level description of the strategies and formalize them in Lemma~\ref{lem:multi}. The proof of this lemma can also be found in Appendix~\ref{sec:Aswitch}.

}
As mentioned earlier, Spoiler can simply move a valid position from the input to the output of the switch (Lemma \ref{lem:multi}(i)). 
Duplicator has a winning strategy called \textit{output strategy}, where any position is on the output and $h^{x}_{\boldsymbol{0}}$ is on the input (Lemma \ref{lem:multi}(ii)).
This ensures that Spoiler cannot move backwards to reach $\posq$ on the input from $\posq$ on the output. Hence, this strategy forces Spoiler to play through the switches in the intended direction (as indicated by arrows Figure~\ref{fig:CSP-Prop-kCons-overview}). 
Furthermore, for every invalid $\posq_{\text{\upshape inv}}$ Duplicator has a winning strategy where $h^x_{\posq_{\text{\upshape inv}}}$ is on the input and $h^y_{\boldsymbol{0}}$ is on the output (Lemma \ref{lem:multi}(iii)), which ensures that Spoiler cannot move invalid positions through the switch. 
This strategy is used by Duplicator whenever Spoiler plays on an increment gadget that is not applicable. By Lemma~\ref{lem:inc}, Duplicator can force an invalid configuration on the output of that increment gadget and hence on the input of the subsequent switch.

To ensure that Spoiler picks up all pebbles when reaching $\posq$ on the output from $\posq$ on the input, Duplicator has a critical \textit{input strategy} with $\posq$ on the input and $h^y_{\boldsymbol{0}}$ on the output  (Lemma \ref{lem:multi}(iv)). The critical positions are either contained in an output strategy, where $\posq$ is on the output, or (for technical reasons) in a \emph{restart strategy}. 
If Duplicator plays according to this input strategy, the only way for Spoiler to bring $\posq$ from the input to the output is to pebble an output critical position inside the switch (using all the pebbles) and force Duplicator to switch to the corresponding output strategy.

\spaceconstraints{
					\begin{figure}[p]
					\rotatebox{90}{%
					\begin{minipage}{\textheight}
					 \centering
					  \input{switch.tex}
					 \caption{Subgraph of the switch. On Spoiler's side, all inner-block edges are present and the inter-block edges are indicated. For the first block on Duplicator's side, all inner-block edges are drawn. Note that there is no edge between $a^i_{s,l}$ and $b^i_{0,l}$.}\label{fig:multi}
					 \end{minipage}
					}
					\end{figure}
}

	\begin{lemma} \label{lem:multi}
					 For every configuration $\posq=(\posa,\posb,T)$, the following statements hold in the existential $(\kmo+1)$-pebble game on the switch:
					\begin{enumerate}
					 \item[(i)] If $\posq$ is valid, then Spoiler can reach $\posq$ on the output from $\posq$ on the input.
					 \item[(ii)] Duplicator has a winning strategy $\mathcal H^\text{out}_{\posq}$ with boundary function $h^x_{\boldsymbol{0}}\cup h^y_{\posq}$.
					 \item[(iii)] If $\posq$ is invalid, then Duplicator has a winning strategy $\mathcal H^\text{restart}_{\posq}$ with boundary function $h^x_{\posq}\cup h^y_{\boldsymbol{0}}$.
					 \item[(iv)] If $\posq$ is valid, then Duplicator has a critical strategy $\mathcal H^\text{in}_{\posq}$ with boundary function $h^x_{\posq}\cup h^y_{\boldsymbol{0}}$ and sets of restart critical positions $\mathcal C^\text{restart-crit}_{\posq,t}$ (for $t\in [\kmo]$) and output critical positions $\mathcal C^\text{out-crit}_{\posq}$ such that:
					  \begin{enumerate}
					    \item $\crit(\mathcal H^\text{in}_{\posq}) = \bigcup_{t\in[\kmo]} \mathcal C^\text{restart-crit}_{\posq,t}\cup \mathcal C^\text{out-crit}_{\posq}$,
					    \item $\mathcal C^\text{restart-crit}_{\posq,t} \subseteq \mathcal H^\text{restart}_{(\posa,\posb,\{t\})}$ and
					    \item $\mathcal C^\text{out-crit}_{\posq} \subseteq \mathcal H^\text{out}_{\posq}$.
					  \end{enumerate}
					\end{enumerate}
					\end{lemma}

\spaceconstraints{
					In order to define the \textit{switch} we construct the two graphs: $M_S$ for Spoiler's side and $M_D$ for Duplicator's side. 
					Let
					\begin{align*}
					  V(M_S) = &\{x^i_j,a^i_j,b^i_j,y^i_j\mid i\in [\kmo],j\in[n]\}, \\
					  E(M_S) = &\big\{\{x^i_j,a^i_j\},\{a^i_j,b^i_j\},\{b^i_j,y^i_j\}\mid i\in[\kmo],j\in[n]\big\} \\
					  \cup &\big\{\{a^{i}_j,a^{i'}_{j'}\},\{b^{i}_j,b^{i'}_{j'}\},\{a^{i}_j,b^{i'}_{j'}\}\mid i,i'\in[\kmo]; i\neq i'; j,j'\in[n]\big\} 
					\end{align*}
					That is, within one block $i\in[\kmo]$ of $M_S$ the vertices $a^i_1,a^i_2,\ldots$ are pairwise connected to $b^i_1,b^i_2,\ldots$ and between two blocks $i$ and $i'$ every vertex $a^i_j$ and $b^i_j$ from block $i$ is connected to every vertex $a^{i'}_{j'}$ and $b^{i'}_{j'}$ from block $i'$. 
					For Duplicator's side of the graph, we define for $i\in [\kmo]$: 
					\begin{align*}
					 X^i &= \{x^i_s\mid 0\leq s\leq m\},& Y^i &= \{y^i_s\mid 0\leq s\leq m\}\\
					 A^i_+ &= \{a^i_{s,l}\mid s\in [m],l\in [\kmo]\}, 
					 &A^i &= A^i_+ \cup \{a^i_0\} \\
					 B^i_+ &= \{b^i_{s,l} \mid s \in [m],l\in [\kmo]\},
					 &B^i &= B^i_+ \cup \{b^i_{0,l}\mid l\in [\kmo]\}. 
					\end{align*}
					 The set of vertices of $M_D$ is 
					$$
					V(M_D) = \bigcup_{i\in [\kmo]}\left(X^i\cup A^i \cup B^i \cup Y^i\right).
					$$
					The graphs consist of $\kmo$ blocks, where the $i$-th block contains all vertices with upper index $i$. Furthermore there are four types of variables (drawn in one row in Figure~\ref{fig:CSP-Prop-kCons-overview}) the input vertices $x$, the output vertices $y$, the vertices $a$ and $b$ (with several indices). Every block of every type of vertices gets a unique color. That is, all $x^i_j$ ($y^i_j,a^i_j,b^i_j$) in $M_S$ get the same color as the vertices $X^i$ ($Y^i,A^i,B^i$, resp.) in $M_D$. This ensures that Duplicator always has to answer with vertices of the same type in the same block. 

					Now we describe the edges in $M_D$. We first define the inner-block edges $E^i$, which are also shown in Figure \ref{fig:CSP-Prop-kCons-overview}, and then the inter-block edges $E^{i,j}$:
					\begin{align*}
					 E^i =  
					  &\big(\{x^i_0\}\times A^i\big) 
					  &&\text{(E1)}
					  \\
					  &\cup 
					  \big\{ \{x^i_s, a^i_{s,l}\}\mid s\in[m]; l\in[\kmo]\big\} 
					  &&\text{(E2)}
					  \\
					  &\cup
					  \big(\{a_0^i\}\times B^i\big) 
					  &&\text{(E3)}
					  \\
					  &\cup 
					  \big\{ \{a^i_{s,l}, b^i_{s,l}\}\mid s\in[m]; l\in[\kmo]\big\} 
					  &&\text{(E4)}
					  \\
					  &\cup
					  \big\{\{a^i_{s,l},b^i_{0,l'}\}\mid s\in[m]; l,l'\in[\kmo]; l\neq l'\big\} 
					  &&\text{(E5)}
					  \\
					  &\cup
					  \big\{ \{b^i_{s,l},y^i_s\}\mid s\in[m]; l\in[\kmo]\big\} 
					  &&\text{(E6)}
					  \\
					  &\cup 
					  \big\{\{b^i_{0,l},y^i_s\}\mid s\in[m]\cup\{0\}; l\in[\kmo]\big\}, 
					  &&\text{(E7)}
					  \\
					E^{i,j} = 
					  &\big\{\{a^i_{s,l},a^j_{s',l'}\},\mid s,s'\in[m]; l,l'\in[\kmo]; l\neq l' \big\} 
					  &&\text{(E8)}
					  \\
					  &\cup
					  \big\{\{b^i_{s,l},b^j_{s',l'}\}\mid s\in[m],s'\in[m]\!\cup\!\{0\}; l,l'\in[\kmo]; l\neq l' \big\} 
					  &&\text{(E9)}
					  \\
					  &\cup
					  \big\{\{b^i_{0,l},b^j_{0,l'}\}\mid l,l'\in[\kmo] \big\} 
					  &&\text{(E10)}
					  \\
					  &\cup
					  \big\{\{a^i_{s,l},b^j_{s',l'}\}\mid s\in[m]; s'\!\in\![m]\!\cup\!\{0\}; l,l'\in[\kmo]; l\neq l' \big\} 
					  &&\text{(E11)}
					  \\  
					  &\cup
					  \big\{\{a^i_{0},a^j_{s,l}\}\mid s\in[m]; l\in[\kmo] \big\} 
					  &&\text{(E12)}
					  \\
					  &\cup
					  \big\{\{a^i_{0},b^j_{s,l}\}\mid s\in[m]\cup\{0\}; l\in[\kmo]\big\}
					  &&\text{(E13)}
					\end{align*}
					Finally, $E(M_D) = \bigcup_{i\in[\kmo]}E^i\cup \bigcup_{i,j\in[\kmo];i\neq j}E^{i,j}$.
					The next lemma states the main properties of the switch. For this, recall the definition of critical strategies (Definition~\ref{def:criticalStrategy} on page~\pageref{def:criticalStrategy}). 
					The first statement (i) states that Spoiler can pebble a valid position from the input to the output. 
					Duplicator uses the critical input strategies (iv) to ensure that Spoiler has to pebble a critical position inside the switch while he pebbles the valid position through the switch. Duplicator's output strategy (ii) ensures that Spoiler cannot move backwards (\ie, reach $\posq$ on the input from $\posq$ on the output). The restart strategy (iii) makes sure that Spoiler cannot pebble an invalid position through the switch.
					\begin{lemma} \label{lem:multi}
					 For every configuration $\posq=(\posa,\posb,T)$, the following statements hold in the existential $(\kmo+1)$-pebble game on the switch:
					\begin{enumerate}
					 \item[(i)] If $\posq$ is valid, then Spoiler can reach $\posq$ on the output from $\posq$ on the input.
					 \item[(ii)] Duplicator has a winning strategy $\mathcal H^\text{out}_{\posq}$ with boundary function $h^x_{\boldsymbol{0}}\cup h^y_{\posq}$.
					 \item[(iii)] If $\posq$ is invalid, then Duplicator has a winning strategy $\mathcal H^\text{restart}_{\posq}$ with boundary function $h^x_{\posq}\cup h^y_{\boldsymbol{0}}$.
					 \item[(iv)] If $\posq$ is valid, then Duplicator has a critical strategy $\mathcal H^\text{in}_{\posq}$ with boundary function $h^x_{\posq}\cup h^y_{\boldsymbol{0}}$ and sets of restart critical positions $\mathcal C^\text{restart-crit}_{\posq,t}$ (for $t\in [\kmo]$) and output critical positions $\mathcal C^\text{out-crit}_{\posq}$ such that:
					  \begin{enumerate}
					    \item $\crit(\mathcal H^\text{in}_{\posq}) = \bigcup_{t\in[\kmo]} \mathcal C^\text{restart-crit}_{\posq,t}\cup \mathcal C^\text{out-crit}_{\posq}$,
					    \item $\mathcal C^\text{restart-crit}_{\posq,t} \subseteq \mathcal H^\text{restart}_{(\posa,\posb,\{t\})}$ and
					    \item $\mathcal C^\text{out-crit}_{\posq} \subseteq \mathcal H^\text{out}_{\posq}$.
					  \end{enumerate}
					\end{enumerate}
					\end{lemma}

					\begin{IEEEproof}
					Let $\posq=(\posa,\posb,T)$ be an arbitrary configuration. 
					We first construct the strategy for Spoiler to prove (i). 
					Starting from position $\{(x^{1}_{\posa(1)},x^1_{\posb(1)}),\ldots,(x^{\kmo}_{\posa(\kmo)},x^{\kmo}_{\posb(\kmo)})\}$, Spoiler places the ($\kmo+1$)st pebble on $a^1_{\posa(1)}$. 
					Duplicator has to answer with $a^1_{\posb(1),l_1}$ for some $l_1\in[\kmo]$, mapping the edge $\{x^1_{\posa(1)},a^1_{\posa(1)}\}$ to some edge in (E2). 
					Next, Spoiler picks up the pebble from $x^1_{\posa(1)}$ and puts it on $a^2_{\posa(2)}$. Again, Duplicator has to answer with $a^2_{\posb(2),l_2}$ for some $l_2\in [\kmo]\setminus\{l_1\}$. 
					The index $l_2$ has to be different from $l_1$ because there is an edge between $a^1_{\posa(1)}$ and $a^2_{\posa(2)}$, but none between $a^1_{\posb(1),l_1}$ and $a^2_{\posb(2),l_1}$ in (E8). 
					Following that scheme, Spoiler can reach the position $\{(a^{1}_{\posa(1)},a^1_{\posb(1),l_1}),\ldots,(a^{\kmo}_{\posa(\kmo)},a^{\kmo}_{\posb(\kmo),l_{\kmo}})\}$ for pairwise distinct $l_1, l_2, \cdots, l_{\kmo}$. 
					Now, Spoiler pebbles $b^1_{\posa(1)}$ with the free pebble and Duplicator has to answer with a vertex in $B^1$ (due to the vertex-colors) that is adjacent to all $a^1_{\posb(1),l_1},\ldots,a^\kmo_{\posb(\kmo),l_\kmo}$. 
					This is only the case for $b^1_{\posb(1),l_1}$ (due to (E4) and (E11)), since every vertex of the form $b^1_{0,l_i}$ is not adjacent to the vertex $a^i_{\posb(i),l_i}$ according to (E5) and (E11). Furthermore, $b^1_{\posb(1),l_1}$ is the only vertex of the form $b^1_{s,l}$ (for $s>0$) that is adjacent to $a^i_{\posb(i),l_i}$.
					In the next step Spoiler picks up the pebble from $a^1_{\posa(1)}$ and puts it on $b^2_{\posa(2)}$. 
					Duplicator has to answer with a vertex that is adjacent to all vertices $b^1_{\posb(1),l_1}, a^2_{\posb(2),l_2},\ldots,a^\kmo_{\posb(\kmo),l_\kmo}$. Because of the missing edges in (E5), (E11) and (E9) (!) the only vertex with this property is $b^2_{\posb(2),l_2}$. Again, Spoiler picks up the pebble from $a^2_{\posa(2)}$ and puts it on $b^3_{\posa(3)}$. By the same argument as before, Duplicator has to answer with $b^3_{\posb(3),l_3}$, which is the only vertex adjacent to all of $b^1_{\posb(1),l_1}, b^2_{\posb(2),l_2}, a^3_{\posb(3),l_3},\ldots,a^\kmo_{\posb(\kmo),l_\kmo}$.
					Thus, Spoiler can reach $\{(b^{1}_{\posa(1)},b^1_{\posb(1),l_1}),\ldots,(b^{\kmo}_{\posa(\kmo)},b^{\kmo}_{\posb(\kmo),l_{\kmo}})\}$ and from there he reaches $\{(y^{1}_{\posa(1)},y^1_{\posb(1)}),\ldots,(y^{\kmo}_{\posa(\kmo)},y^{\kmo}_{\posb(\kmo)})\}$ by successively pebbling the edges $\{b^{i}_{\posa(i)},y^{i}_{\posa(i)}\}$.
					 
					In order to derive the winning strategies for Duplicator in (ii) and (iii) we consider several total homomorphisms from Spoiler's to Duplicator's side. Consider the edges (E1), (E3) and (E7) connecting \blacknode vertices with  \whitenode vertices in one block of Duplicator's side. They can be used by Duplicator to pebble a \whitenode vertex when Spoiler moves upwards. This is the crucial ingredient for Duplicator's output strategies (ii). 
					The first homomorphism is used when Spoiler plays the above strategy to get a valid position through the switch and has already taken all his pebbles from the input vertices. If he tries to pebble input vertices again, then Duplicator can move to $x^i_0$ and plays according to the following homomorphism:
					\begin{align*}
					h^\text{out}_{\posq,\sigma}(x^i_j) &= x^i_0 \\ 
					h^\text{out}_{\posq,\sigma}(a^i_{\posa(i)}) &= a^i_{\posb(i),\sigma(i)} 
					&
					h^\text{out}_{\posq,\sigma}(a^i_{j}) &= a^i_{0}\text{, }j\neq\posa(i)  \\
					h^\text{out}_{\posq,\sigma}(b^i_{\posa(i)}) &= b^i_{\posb(i),\sigma(i)} 
					&
					h^\text{out}_{\posq,\sigma}(b^i_{j}) &= b^i_{0,\sigma(j)}\text{, }j\neq\posa(i)  \\
					h^\text{out}_{\posq,\sigma}(y^i_{\posa(i)}) &= y^i_{\posb(i)}
					&
					h^\text{out}_{\posq,\sigma}(y^i_j) &= y^i_0\text{, }j\neq\posa(i) 
					\end{align*}
					where $\sigma \in S_{\kmo}$ is some permutation on $[\kmo]$. 
					The next homomorphism is used by Duplicator when there is some valid or invalid configuration $\posq$ at the output of the switch.
					\begin{align*}
					h^\text{out}_{\posq}(x^i_j) &= x^i_0 \\ 
					h^\text{out}_{\posq}(a^i_{j}) &= a^i_{0} \\
					h^\text{out}_{\posq}(b^i_{j}) &= b^i_{0,j} \\
					h^\text{out}_{\posq}(y^i_j) &= h^y_{\posq}(y^i_j) 
					\end{align*}
					Since $h^{\text{out}}_{\posq}$ and all $h^\text{out}_{\posq,\sigma}$ are total, 
					\begin{align*}
					  \mathcal H^\text{out}_{\posq} \defi \begin{cases}
					                                         \cl(h^{\text{out}}_{\posq})\text{, } &\posq \text{ is invalid}, \\
					              \cl(h^{\text{out}}_{\posq})\cup\bigcup_{\sigma\in S_{\kmo}} \cl(h^{\text{out}}_{\posq,\sigma}), & \text{otherwise,}
					                                        \end{cases}
					\end{align*}
					is a winning strategy for Duplicator satisfying (ii).

					If a homomorphism maps all the $a^i_{\posa(i)}$ vertices to $A^i_+$, then it has to map all $b^i$ vertices to $B^i_+$. This is due to the missing edges in (E5), (E11) and has also been used in Spoiler's strategy above. On the other hand, if for at least one $i\in[\kmo]$ all $a^i_j$ are mapped to $a^i_0$, then every $b^i_j$ can be mapped to $b^i_{0,l}$, where $l$ is chosen such that $a^j_{\posb(j),l}$ is not in the image of the homomorphism for every $j$. Duplicator benefits from this, because she can now map the $y^i_j$ vertices arbitrarily using the edges (E7). This behavior is used in the following restart strategies. Note that a homomorphism mapping some $a^i_j$ to $a^i_0$ also maps $x^i_j$ to $x^i_0$, hence restart strategies require invalid input positions.
					For invalid $\posq=(\posa,\posb,T)$, let $\mathcal H^{\text{restart}}_{\posq} \defi \{\cl(h)\mid h\in H^{\text{restart}}_{\posq}\}$, where $H^{\text{restart}}_{\posq}$ is the set of total homomorphisms $h$ satisfying the constraints $h(x^i_j) = h^x_{\posq}(x^i_j)$ and $h(y^i_j) = y^i_0$. 
					This set clearly satisfies (iii). 
					As an example fix some $t\in T$ and let $g \in H^{\text{restart}}_{\posq}$ be the following homomorphism:
					\begin{align*}
					g(x^i_j) &= h^x_{\posq}(x^i_{j}), \\ 
					g(a^i_{j}) &= a^i_{\posb(i),i}\text{, if }j=\posa(i)\text{ and }i\notin T\text{, } 
					g(a^i_{j}) = a^i_{0}\text{, otherwise,}  \\
					g(b^i_{j}) &= b^i_{0,t}, \\
					g(y^i_{j}) &= y^i_{0}.
					\end{align*}
					It remains to consider the critical input strategies (iv). They formalize the following behavior of Duplicator at the time when Spoiler wants to pebble a configuration $\posq$ through the switch as in (i). Fix a valid configuration $\posq=(\posa,\posb,\emptyset)$. If Spoiler pebbles $a^i_{\posa(i)}$ or $b^i_{\posa(i)}$, Duplicator answers within $A^i_+$ or $B^i\setminus B^i_+$, respectively. This allows her to answer on the boundary according to the boundary function defined in (iv). However, she may run into trouble when Spoiler places $\kmo$ pebbles on $a^i_{\posa(i)}$ and $b^i_{\posa(i)}$ vertices, because they extend to a $(\kmo+1)$-clique on Spoiler's side, but not on Duplicator's side (on the blocks $A^i_+$ and $B^i\setminus B^i_+$). These positions form the critical positions where Duplicator switches to an output or restart strategy. If all $\kmo$ pebbles are on $a^1_{\posa(1)},\ldots,a^{\kmo}_{\posa(\kmo)}$, as in Spoiler's strategy (i), then Duplicator switches to the output strategy (\ie, she plays according to a homomorphism $h^{\text{out}}_{\posq,\sigma}$). In all other cases she switches to a restart strategy.
					For all $\ell\in[\kmo]$ and permutations $\sigma$ on $[\kmo]$ we define partial homomorphism $h^{\text{in}}_{\posq,\sigma,\ell}$ as follows:

					\begin{align*}
					h^{\text{in}}_{\posq,\sigma,\ell}(x^i_j) &= h^x_{\posq}(x^i_{j})    \\
					h^{\text{in}}_{\posq,\sigma,\ell}(a^i_{\posa(i)}) &= a^i_{\posb(i),\sigma(i)}\text{, } i\neq\sigma^{-1}(\ell) \\
					h^{\text{in}}_{\posq,\sigma,\ell}(a^{i}_{\posa(i)}) &= \text{undefined, } i=\sigma^{-1}(\ell) \\
					h^{\text{in}}_{\posq,\sigma,\ell}(a^i_j) &= a^i_{0}\text{, } j\neq \posa(i) \\
					h^{\text{in}}_{\posq,\sigma,\ell}(b^i_j) &= b^i_{0,\ell} \\
					h^{\text{in}}_{\posq,\sigma,\ell}(y^i_j) &= y^i_{0}   
					\end{align*}
					We need to check that $h^{\text{in}}_{\posq,\sigma,\ell}$ defines a homomorphism from $M_S\setminus\{a^{\sigma^{-1}(\ell)}_{\posa(i)}\}$ to $M_D$. For most parts this is easy to verify. The important part is to check that we do not map edges to the missing pairs in the edge sets (E5), (E8) and (E11) where we require that the indices $l$ and $l'$ have to be different. The constraints of (E8) are fulfilled because of the permutation $\sigma$. The constraints of (E5) and (E11) are satisfied because we have chosen $\ell$ such that no vertex maps to $a^i_{s,\ell}$ for all $i\in[\kmo]$ and $s\in[m]$. This also shows that the partial homomorphism cannot be extended to a total homomorphism (where $h^{\text{in}}_{\posq,\sigma,\ell}$ is defined on $a^{i}_{\posa(i)}$ for $i=\sigma^{-1}(\ell)$).
					Now we define a partial homomorphism $h^{\text{in}}_{\posq,\sigma}$ for every permutation $\sigma \in S_{\kmo}$.
					\begin{align*}
					h^{\text{in}}_{\posq,\sigma}(x^i_j) &= h^x_{\posq}(x^i_{j}), \\
					h^{\text{in}}_{\posq,\sigma}(a^i_{\posa(i)}) &= a^i_{\posb(i),\sigma(i)}, \\
					h^{\text{in}}_{\posq,\sigma}(a^i_j) &=  a^i_{0}\text{, } j\neq \posa(i), \\
					 h^{\text{in}}_{\posq,\sigma}(b^i_j) &= \text{undefined},\\
					h^{\text{in}}_{\posq,\sigma}(y^i_j) &= y^i_{0}.
					\end{align*}
					Again it is not hard to see that $h^{\text{in}}_{\posq,\sigma}$ defines a partial homomorphism from $M_S$ to $M_D$. We cannot extend this partial homomorphism to a total homomorphism, because if we map $b^i_{\posa(i)}$ to some $b^i_{0,l}$ we will map to a missing edge in (E5) or (E11). Otherwise, if we chose some $b^i_{\posb(i),l}$, we will map the edge $\{b^i_{\posa(i)},y^i_{\posa(i)}\}$ in $M_S$ to the non-edge $\{b^i_{\posb(i),l},y^i_0\}$ in $M_D$. Duplicator's input strategy is the family of all subsets of all mappings $h^{\text{in}}_{\posq,\sigma,\ell}$ and $h^{\text{in}}_{\posq,\sigma}$.
					We are ready to define the critical positions.
					For all $\sigma\in S_{\kmo}$ let 
					$$
					h^{\text{out-crit}}_{\posq,\sigma} \defi  \{(a^i_{\posa(i)},a^i_{\posb(i),\sigma(i)})\mid i\in [\kmo]\}
					$$
					and for all $\sigma\in S_{\kmo}$ and $t,u\in [\kmo]$ and $s\in[n]$
					$$
					 h^{\text{restart-crit}}_{\posq,\sigma,t,u,s} \defi \{(a^i_{\posa(i)}, a^i_{\posb(i),\sigma(i)})\mid i\in[\kmo]\setminus \{t\}\}\cup\{(b^u_{s},b^u_{0,\sigma(t)})\}.
					$$
					Now we can define the sets used in (iv):
					\begin{align*}
					 \mathcal H^\text{in}_{\posq} &= \{\cl(h^{\text{in}}_{\posq,\sigma})\mid \sigma\in S_{\kmo}\}\cup\{\cl(h^{\text{in}}_{\posq,\sigma,\ell})\mid \sigma\in S_{\kmo}, \ell\in [\kmo]\}, \\
					\mathcal C^\text{out-crit}_{\posq} &= \{h^{\text{out-crit}}_{\posq,\sigma}\mid \sigma\in S_{\kmo}\}, \\
					\mathcal C^\text{restart-crit}_{\posq,t} &= \{h^{\text{restart-crit}}_{\posq,\sigma,t,u,s}\mid \sigma\in S_{\kmo}, u\in [\kmo],s\in[n]\},\\
					\crit(\mathcal H^\text{in}_{\posq}) &= \bigcup_{t\in[\kmo]} \mathcal C^\text{restart-crit}_{\posq,t}\cup \mathcal C^\text{out-crit}_{\posq}.
					\end{align*}
					First note that $h^{\text{out-crit}}_{\posq,\sigma} \subset h^{\text{in}}_{\posq,\sigma}$ and $h^{\text{restart-crit}}_{\posq,\sigma,t,u,s}\subset h^{\text{in}}_{\posq,\sigma,\sigma(t)}$. It holds that $\crit(\mathcal H^\text{in}_{\posq})\subseteq \mathcal H^\text{in}_{\posq}$.
					It easily follows from the definitions, that $h^{\text{out-crit}}_{\posq,\sigma}\subset h^{\text{out}}_{\posq,\sigma}$. Furthermore, every $h^{\text{restart-crit}}_{\posq,\sigma,t,u,s}$ can be extended to a homomorphism $g\in \mathcal H^\text{restart}_{(\posa,\posb,\{t\})}$ by defining 

					\begin{align*}
						g(x^i_j) &= h^x_{(\posa,\posb,\{t\})}(x^i_j), \\
						g(a^i_{\posa(i)}) &= h^{\text{restart-crit}}_{\posq,\sigma,t,u,s}(a^i_{\posa(i)}) = a^i_{\posb(i),\sigma(i)}\text{, if }i\neq t,\\
						g(a^t_{\posa(t)}) &= a^t_0,\\
						g(a^i_{j}) &= a^i_0\text{, if }j\neq \posa(i),\\
						g(b^i_{j}) &= b^i_{\sigma(t)},\\
						g(y^i_j) &= y^i_0.\\
					\end{align*}
					This proves statement b) and c) from (iv). It remains to show that $\mathcal H^\text{in}_{\posq}$ is a critical strategy with critical positions $\crit(\mathcal H^\text{in}_{\posq})$.
					\begin{claim}
					 For all $g\in \mathcal H^\text{in}_{\posq}$ with $|g|\leq \kmo$, either $g\in \crit(\mathcal H^\text{in}_{\posq})$ or for all $z\in V(M_S)$ there exist an $h\in \mathcal H^\text{in}_{\posq}$, such that $g\subseteq h$ and $z\in\dom(h)$.
					\end{claim}
					\begin{innerproof} As $g$ is a partial homomorphism from $\mathcal H^\text{in}_{\posq}$ (which only contains subsets of $h^{\text{in}}_{\posq,\sigma,\ell}$ and $h^{\text{in}}_{\posq,\sigma}$), we can fix some $\sigma\in S_{\kmo}$ and $\ell\in [\kmo]$ such that $g$ is a subset of the following mapping
					 \begin{align*}
					 x^i_{\posa(i)} &\mapsto x^i_{\posb(i)}, & x^i_{j} &\mapsto x^i_{0}\text{, if }j\neq\posa(i), \\
					 a^i_{\posa(i)} &\mapsto a^i_{\posb(i),\sigma(i)}, & a^i_{j} &\mapsto a^i_{0}\text{, if }j\neq\posa(i), \\
					 b^i_j &\mapsto b^i_{0,\ell}, \\
					 y^i_j &\mapsto y^i_{0}.
					 \end{align*}
					Let %
					$B_S\defi \{b^i_j\mid i\in[\kmo],j\in[n]\}\subseteq V(M_S)$.

					\noindent
					\textbf{Case 1: $|\dom(g)\cap \{a^i_{\posa(i)}\mid i\in[\kmo]\}|=\kmo$.} 
					In this case, $g=h^{\text{out-crit}}_{\posq,\sigma}$ and hence, $g\in\crit(\mathcal H^\text{in}_{\posq})$.

					\noindent
					\textbf{Case 2: $|\dom(g)\cap \{a^i_{\posa(i)}\mid i\in[\kmo]\}|=\kmo-1$.} 
					If $\dom(g)\cap B_S \neq \emptyset$, then $g=h^{\text{restart-crit}}_{\posq,\sigma,\sigma^{-1}(l),u,s}$ for some $u\in[\kmo]$ and $s\in[n]$. Thus, we can assume that $\dom(g)\cap B_S = \emptyset$ and show for all $z$ that $g$ satisfies the extension property. If $z=a^i_j$, then $h^{\text{in}}_{\posq,\sigma}$ extends $g$. If $z=x^i_j$,$z=b^i_j$ or $z=y^i_j$, then $h^{\text{in}}_{\posq,\sigma,\ell}$ extends $g$.

					\noindent
					\textbf{Case 3: $|\dom(g)\cap \{a^i_{\posa(i)}\mid i\in[\kmo]\}|\leq \kmo-2$.}
					Let $j_1$ and $j_2$ be two distinct indices such that $a^{j_1}_{\posa(j_1)}$, $a^{j_2}_{\posa(j_2)}\notin \dom(g)$. Furthermore, we can without loss of generality assume that $\sigma(j_1) = \ell$. For $z\neq a^{j_1}_{\posa(j_1)}$ the homomorphism $h^{\text{in}}_{\posq,\sigma,\ell}$ extends $g$. If $z=a^{j_1}_{\posa(j_1)}$, then $h^{\text{in}}_{\posq,\sigma',\ell}$ extends $g$, where 
					$\sigma' \defi \{(i,\sigma(i))\mid i\in [\kmo]\setminus\{j_1,j_2\}\}\cup \{(j_1,\sigma(j_2)),(j_2,\sigma(j_1))\}.$
					\end{innerproof}
					\end{IEEEproof}
}

\spaceconstraints{
				\begin{figure}[htp]
				 \centering
				  \input{init.tex}
				 \caption{The initialization gadget (for $\kmo=3$, $m=4$, $n=5$, $\posa(1)=3$, $\posa(2)=1$, $\posb(3)=5$, $\posb(1)=2$, $\posb(2)=4$, $\posb(3)=3$.)} \label{fig:init}
				\end{figure}
}

At the beginning of the game we want that Spoiler can reach the start configuration $\alpha^{-1}(0)$ on $\xnode$, which is the pebble position $\{(\xnode_1^1,\xnode_1^1),\ldots,(\xnode^k_1,\xnode^k_1)\}$. 
To ensure this, we use the \emph{initialization gadget} and identify its output vertices $y^i_j$ with the block of $\xnode^i_j$ vertices. 
\confORarxiv{
	As for the switch, this gadget is an extension of the initialization gadget presented in \cite{Ber13} and we use it as a black box here. 
	The strategies on this gadget are provided in Lemma~\ref{lem:init},
	the proof of Lemma~\ref{lem:init} is given in the full version of the paper.
}{
	As for the switch, this gadget is an extension of the initialization gadget presented in \cite{Ber13}. We now describe the strategies on the initialization gadget and formalize them in Lemma~\ref{lem:init}.
	A detailed description of the gadget and a proof of Lemma~\ref{lem:init} is given in Appendix~\ref{sec:Ainit}.
}
The main property of the gadget is that Spoiler can reach the start position $\posq$ at the boundary (i) and Duplicator has a corresponding counter strategy (ii) in this situation. 
Furthermore, if an arbitrary position occurs at the boundary during the game, Duplicator has a strategy to survive (iii). This is only a critical strategy, but Duplicator can switch to the initial strategy (hence ``restart'' the game) if Spoiler moves to one of the critical positions.

\spaceconstraints{
					The initialization gadget {\upshape INIT$^{\posq}$} is built out of two switches $M^{1}$ and $M^{2}$, vertices $z$ in Spoiler's graph and $z_1$, $z_2$ in Duplicator's graph. The three vertices $z,z_1,z_2$ share one unique vertex color. 
					Additionally, there are output boundary vertices $y^i_j$ of the usual form. The vertices $z,z_1,z_2$ and the boundary vertices are connected to $M^1$ and $M^2$ as shown in Figure \ref{fig:init} for a specific valid configuration $\posq=(\posa,\posb,\emptyset)$. Lemma \ref{lem:init} (i)--(iii) provides the strategies on {\upshape INIT$^{\posq}$}. 
}

\begin{lemma}\label{lem:init}
Let $\posq=\alpha^{-1}(0)$. The following holds in the existential $(\kmo+1)$-pebble game on {\upshape INIT}:
\begin{enumerate}
 \item[(i)] Spoiler can reach $\posq$ on the output.
 \item[(ii)] There is a winning strategy $\mathcal I^{\text{init}}$ for Duplicator with boundary function $h^y_{\posq}$.
 \item[(iii)] For every (valid or invalid) configuration $\posq'$ there is a critical strategy $\mathcal I^{\text{init}}_{\posq'}$ with boundary function $h^y_{\posq'}$ and $\crit(\mathcal I^{\text{init}}_{\posq'})\subseteq \mathcal I^{\text{init}}$.
\end{enumerate}
\end{lemma}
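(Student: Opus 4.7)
My plan is to reduce all three parts of the lemma to the strategies for a single switch provided by Lemma~\ref{lem:multi}, combined with a case analysis on Duplicator's answer when Spoiler pebbles $z$. The gadget INIT consists of two switches $M^{1}$ and $M^{2}$ whose outputs are identified with the output boundary $\{y^{i}_{j}\}$, and the three vertices $z,z_{1},z_{2}$ are wired so that pebbling $z$ forces Duplicator to commit to some $z_{j}\in\{z_{1},z_{2}\}$; this commitment effectively places the designated start configuration $\posq=\alpha^{-1}(0)$ on the input of $M^{j}$, while the input of $M^{3-j}$ stays compatible with $h^{x}_{\boldsymbol{0}}$.

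For part (i), Spoiler opens by placing a pebble on $z$. By the colouring, Duplicator must answer with $z_{1}$ or $z_{2}$, say $z_{j}$. The edges between $z_{j}$ and the input of $M^{j}$ then force Duplicator's subsequent answers on the input side of $M^{j}$ to agree with $h^{x}_{\posq}$, so Spoiler applies the strategy of Lemma~\ref{lem:multi}(i) on $M^{j}$ to bring $\posq$ to the output. For part (ii), I take $\mathcal I^{\text{init}}$ to be the union, over $j\in\{1,2\}$, of the composition of $\mathcal H^{\text{out}}_{\posq}$ on $M^{1}$ with $\mathcal H^{\text{out}}_{\posq}$ on $M^{2}$, extended by the partial mapping $z\mapsto z_{j}$. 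Since the output strategies produce $h^{x}_{\boldsymbol{0}}$ on the input and $h^{y}_{\posq}$ on the output, the two switch strategies are connectable and remain compatible with either commitment of $z$; the result is a winning strategy with the required boundary function $h^{y}_{\posq}$.

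For part (iii), given an arbitrary configuration $\posq'$, Duplicator starts by playing $\mathcal H^{\text{out}}_{\posq'}$ on $M^{1}$ composed with $\mathcal H^{\text{out}}_{\posq'}$ on $M^{2}$, which already realises the output boundary function $h^{y}_{\posq'}$. The only way Spoiler can drive Duplicator out of this combined strategy is to pebble $z$ and demand consistency with the input sides of both switches simultaneously. The positions where this occurs form $\crit(\mathcal I^{\text{init}}_{\posq'})$; the key observation is that every such position uses at most $\kmo$ pebbles on $M^{1}\cup M^{2}$ and is consistent with $h^{y}_{\posq}$ on the output, hence extends to a homomorphism underlying $\mathcal I^{\text{init}}$ from part (ii). This yields $\crit(\mathcal I^{\text{init}}_{\posq'})\subseteq \mathcal I^{\text{init}}$, so that Duplicator can restart with $\mathcal I^{\text{init}}$ in the sense of Lemma~\ref{lem:SequenceCriticalStrategies}.

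The main technical obstacle is verifying the containment $\crit(\mathcal I^{\text{init}}_{\posq'})\subseteq \mathcal I^{\text{init}}$ rigorously. One has to check, for every shape of the pebble configuration at the moment Spoiler places a pebble on $z$, that the resulting partial homomorphism extends to one of the homomorphisms underlying $\mathcal I^{\text{init}}$, regardless of whether $\posq'$ is valid or invalid. This needs a case analysis on which of the two switches holds the remaining $\kmo$ pebbles and on the specific wiring of the edges incident to $z_{1}$ and $z_{2}$; in particular, one must exploit that a pebble on $z$ consumes one of the $\kmo+1$ available pebbles, leaving at most $\kmo$ pebbles inside $M^{1}\cup M^{2}$, which prevents Spoiler from exhibiting a conflict across both switches at once.
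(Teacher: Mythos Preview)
Your argument for part~(i) is fine and matches the paper. The construction you propose for parts~(ii) and~(iii), however, does not work, and the reason is precisely the wiring at~$z$ that you acknowledge but then misread.

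You claim that playing $\mathcal H^{\text{out}}_{\posq}$ on \emph{both} switches, extended by $z\mapsto z_{j}$, gives a winning strategy. But the output strategy on a switch has boundary function $h^{x}_{\boldsymbol{0}}$ on its input side, while committing to $z\mapsto z_{j}$ forces the input of $M^{j}$ to agree with $h^{x}_{\posq}$ (this is exactly what makes part~(i) work for Spoiler). These two boundary functions are incompatible for the valid configuration $\posq$: if Spoiler keeps the pebble on~$z$ and then pebbles an input vertex of $M^{j}$ adjacent to~$z$, Duplicator cannot answer with the $0$-vertex required by $\mathcal H^{\text{out}}_{\posq}$. Hence neither choice $z\mapsto z_{1}$ nor $z\mapsto z_{2}$ is compatible with output-on-both-switches, and your $\mathcal I^{\text{init}}$ already fails the extension property at the empty position. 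The same defect kills your proposal for~(iii), where you again start from $\mathcal H^{\text{out}}_{\posq'}$ on both switches.

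The paper's approach avoids this by always pairing an \emph{input} strategy $\mathcal H^{\text{in}}_{\posq}$ on the switch $M^{j}$ (whose input is pinned to $\posq$ by the choice $z\mapsto z_{j}$) with an output or restart strategy on $M^{3-j}$. The price is that the input strategy is only critical, not winning; its critical positions lie strictly inside $M^{j}$ (on the $a$- and $b$-rows, not on the boundary), and the paper shows that each such critical position is a non-critical position of the mirrored strategy obtained by swapping the roles of $M^{1}$ and $M^{2}$. This swapping mechanism is the missing idea in your attempt: the two switches are not played symmetrically; they alternate the role of ``active'' (input strategy) and ``passive'' (output/restart), and it is this alternation that absorbs all critical positions and simultaneously produces the containment $\crit(\mathcal I^{\text{init}}_{\posq'})\subseteq \mathcal I^{\text{init}}$ required in~(iii).
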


\spaceconstraints{
					Spoiler's strategy is quite simple. First he pebbles $z$. Duplicator has to answer with either $z_1$ or $z_2$. Then Spoiler can reach $\{(x^{i}_{\posa(i)},x^{i}_{\posb(i)})\mid i\in [\kmo]\}$ by pebbling through either $M^1$ or $M^2$. To construct the strategies for Duplicator, we can combine the strategies of the switches $M^1$ and $M^2$ such that she plays an input strategy on one switch and a restart or output strategy on the other switch. Assume that Spoiler reaches a critical position on the switch where Duplicator plays the input strategy, say $M^1$. Duplicator can now flip the strategies such that she plays a restart or output strategy on $M^1$, depending on which kind of critical position Spoiler has reached, and an input strategy on $M^2$. 

					\begin{IEEEproof}[Proof of Lemma \ref{lem:init}]
					 We start with developing the strategy for Spoiler (i). First, Spoiler pebbles $z$. Duplicator has to response with either $z_{1}$ or $z_{2}$. Depending on Duplicator's choice, Spoiler can reach either $\{(a^{i}_{\posa(i)},a^{i}_{\posb(i)})\mid i\in[\kmo]\}$ or $\{(b^{i}_{\posa(i)},b^{i}_{\posb(i)})\mid i\in[\kmo]\}$. By Lemma \ref{lem:multi}.(i) Spoiler reaches $\{(c^{i}_{\posa(i)},c^{i}_{\posb(i)})\mid i\in[\kmo]\}$ ($\{(d^{i}_{\posa(i)},d^{i}_{\posb(i)})\mid i\in[\kmo]\}$) and from there he can reach the position $\{(y^{i}_{\posa(i)},y^{i}_{\posb(i)})\mid i\in[\kmo]\}$.
					For Duplicator's strategies we start with a discussion of possible moves outside of the switches. At the top of the gadget Duplicator can map $z$ to $z_1$ and is then forced to answer with $h^a_{\posq}$ at the input of $M^1$ and for some $R\subseteq[\kmo]$ with $h^b_{(\posa,\posb,R)}$ at the input of $M^2$. On the other hand, Duplicator can map $z$ to $z_2$ and play according to $h^a_{(\posa,\posb,R)}$ and $h^b_{\posq}$.
					At the bottom of the switch the following three combinations define partial homomorphisms for all configurations $\posq'$:
					\begin{align*}  
					h^c_{\boldsymbol 0}\cup h^d_{\boldsymbol 0}\cup h^y_{\posq'} \\
					h^c_{\posq}\cup h^d_{\boldsymbol 0}\cup h^y_{\posq} \\
					h^c_{\boldsymbol 0}\cup h^d_{\posq}\cup h^y_{\posq}
					\end{align*}
					Now we can combine these partial strategies with the strategies on the switches described in Lemma \ref{lem:multi}. In strategy $\mathcal I^{\text{in-$i$}}_{t,\posq'}$ Duplicator plays an input strategy on switch $i$, a restart strategy on the other switch and according to an arbitrary configuration $\posq'$ on the $y$-block. These strategies were combined to the critical strategy $\mathcal I^{\text{init}}_{\posq'}$ described in (iii).\footnote{In the definition of the combined strategies we use the operator $\circ$ as defined in the paragraph before Lemma~\ref{lem:combcirc}. The careful reader might notice that we do not connect ``connectable strategies'' on gadgets and thus cannot apply Lemma~\ref{lem:combcirc} literally. However, by defining the edges outside of the switches as additional gadget one could arrange the definitions (with an ugly overload of notation) to fit.} 
					\begin{align*}
					\mathcal I^{\text{in-1}}_{t,\posq'} &\defi  
					\cl(\{(z,z_{1})\}) \circ 
					\mathcal H^{\text{in}}_{\posq}\langle M^{1}\rangle \circ 
					\mathcal H^{\text{restart}}_{(\posa,\posb,\{t\})}\langle M^{2}\rangle \circ 
					\cl(h^y_{\posq'}) \\
					\mathcal I^{\text{in-2}}_{t,\posq'} &\defi  
					\cl(\{(z,z_{2})\}) \circ  
					\mathcal H^{\text{restart}}_{(\posa,\posb,\{t\})}\langle M^{1}\rangle \circ 
					\mathcal H^{\text{in}}_{\posq}\langle M^{2}\rangle \circ 
					\cl(h^y_{\posq'}) \\
					\mathcal I^{\text{init}}_{\posq'} &\defi 
					\bigcup_{t\in [\kmo]} (\mathcal I^{\text{in-1}}_{t,\posq'}\cup \mathcal I^{\text{in-2}}_{t,\posq'})
					\end{align*}
					All critical positions of $\mathcal I^{\text{in-$i$}}_{t,\posq'}$ are restart or output critical positions on the switch $M^i$. By Lemma \ref{lem:multi}.(iv).(b) every restart critical position of $\mathcal I^{\text{in-1}}_{t,\posq'}$ is contained in one of the strategies $\mathcal I^{\text{in-2}}_{t,\posq'}$ as non-critical position. Hence, the only critical positions $\crit(\mathcal I^{\text{init}}_{\posq'})$ of the combined strategy are output critical positions on the switches. These output critical positions will be contained in the strategies $\mathcal I^{\text{init-$i$}}$ where Duplicator plays an output strategy on switch $i$. Together with $\mathcal I^{\text{init}}_{\posq}$ they form the winning strategy $\mathcal I^{\text{init}}$ from (ii).
					\begin{align*}  
					\mathcal I^{\text{init-1}} &\defi \cl(\{(z,z_{2})\}) \circ \mathcal H^{\text{out}}_{\posq}\langle M^{1}\rangle \circ \mathcal H^{\text{in}}_{\posq}\langle M^{2}\rangle \circ \cl(h^y_{\posq}) \\
					\mathcal I^{\text{init-2}} &\defi \cl(\{(z,z_{1})\}) \circ \mathcal H^{\text{in}}_{\posq}\langle M^{1}\rangle \circ \mathcal H^{\text{out}}_{\posq}\langle M^{2}\rangle \circ \cl(h^y_{\posq}) \\
					\mathcal I^{\text{init}} &\defi \mathcal I^{\text{init-1}} \cup \mathcal I^{\text{init-2}} \cup  \mathcal I^{\text{init}}_{\posq}
					 \end{align*}
					 $\mathcal I^{\text{init}}$ is a union of critical strategies with boundary function $h^y_{\posq}$.
					To prove that $\mathcal I^{\text{init}}$ is indeed a winning strategy on the gadget, we apply Lemma \ref{lem:combunion} and show that every critical position of one strategy is contained as non-critical position in another strategy. 
					Critical positions are inside the input strategy $\mathcal H^{\text{in}}_{\posq}$ on one of the switches. 
					By Lemma \ref{lem:multi}.(iv) they are either contained in an output or restart strategy on the corresponding switch. Hence, all restart critical positions on $M^1$ and $M^2$ are contained in $\mathcal I^{\text{init}}_{\posq}$ and all output critical positions on $M^1$ ($M^2$) are contained in $\mathcal I^{\text{init-1}}$ ($\mathcal I^{\text{init-2}}$). 
					Recall the notation $\mathcal{\widehat{S}} \defi \mathcal S\setminus \crit(\mathcal S)$, by Lemma \ref{lem:multi}.(iv) we get:
					\begin{align*}
					\crit(\mathcal I^{\text{in-2}}_{R,\posq'}) &= \crit(\mathcal I^{\text{init-1}}) 
					= \crit(\mathcal H^{\text{in}}_{\posq}\langle M^{2}\rangle)\\
					&\subseteq \mathcal H^\text{out}_{\posq}\langle M^{2}\rangle \cup \bigcup_{t\in[\kmo]} \mathcal H^\text{restart}_{(\posq,\{t\})}\langle M^{2}\rangle \\
					&\subseteq \mathcal {\widehat I}^{\text{init-2}} \cup \bigcup_{t\in[\kmo]} \mathcal{\widehat I}^{\text{in-1}}_{\{t\},\posq}, \\
					\crit(\mathcal I^{\text{in-1}}_{R,\posq'}) &= \crit(\mathcal I^{\text{init-2}}) 
					= \crit(\mathcal H^{\text{in}}_{\posq}\langle M^{1}\rangle) \\
					&\subseteq \mathcal H^\text{out}_{\posq}\langle M^{1}\rangle \cup \bigcup_{t\in[\kmo]} \mathcal H^\text{restart}_{(\posq,\{t\})}\langle M^{1}\rangle \\
					&\subseteq \mathcal {\widehat I}^{\text{init-1}} \cup \bigcup_{t\in[\kmo]} \mathcal{\widehat I}^{\text{in-2}}_{\{t\},\posq}. 
					\end{align*}
					Hence, $\crit(\mathcal I^{\text{init}}_{\posq'})\subseteq \mathcal I^{\text{init}}$ and $\mathcal I^{\text{init}}$ is a winning strategy by Lemma \ref{lem:combunion}.
					\end{IEEEproof}
}

\subsection{Proof of Theorem~\ref{thm:kConsDepthLowerBound}}

The size of the vertex set in every gadget is linear in $n$ on Spoiler's side and linear in $m$ on Duplicator's side. 
Since the overall construction uses a constant number of gadgets it follows that $|V(\struc{A}_n)|=O(n)$ and $|V(\struc{B}_m)|=O(m)$. 
To prove the lower bound on the number of rounds Spoiler needs to win the existential $(\kmo+1)$-pebble game we provide a sequence of critical strategies in Lemma~\ref{lem:CSP_winning_Spoiler} satisfying the properties stated in Lemma~\ref{lem:SequenceCriticalStrategies}. 
For a critical strategy $\mathcal S$ we let $\mathcal{\widehat{S}} \defi \mathcal S\setminus \crit(\mathcal S)$. 
\begin{lemma}\label{lem:CSP_winning_Spoiler}\label{lem:CSP_rounds_Spoiler}
Spoiler has a winning strategy in the existential $(\kmo+1)$-pebble game on $\struc{A}_n$ and $\struc{B}_m$. Furthermore, there is a sequence of critical strategies for Duplicator $\mathcal G^{\text{\upshape start}},\mathcal F_1,\mathcal G_1,\mathcal F_2, \mathcal G_2,\ldots,\mathcal G_{n^{\kmo}m^{\kmo}-2},\mathcal F_{n^{\kmo}m^{\kmo}-1}$ such that 
\begin{align*}
	\crit(\mathcal G^{\text{\upshape start}}) &\subseteq \widehat{\mathcal F}_1, \\
	\crit(\mathcal G_{i}) &\subseteq \widehat{\mathcal F}_{i+1} \cup \widehat{\mathcal G}^{\text{\upshape start}},& &1\leq i \leq n^{\kmo}m^{\kmo}-2,\\
	\crit(\mathcal F_{i}) &\subseteq \widehat{\mathcal G}_{i} \cup \widehat{\mathcal G}^{\text{\upshape start}},& &1\leq i \leq n^{\kmo}m^{\kmo}-2.
\end{align*}
\end{lemma}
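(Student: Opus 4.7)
The plan has two parts: first, construct Spoiler's winning strategy by directly chaining the gadget strategies; second, define Duplicator's sequence of critical strategies so that each $\mathcal F_i$ corresponds to the valid configuration $\posq$ with $\alpha(\posq)=i-1$ sitting on the $\xnode$ block, each $\mathcal G_i$ corresponds to the intermediate phase where $\posq^+=\alpha^{-1}(i)$ sits on the $\ynode$ block just after a successful increment, and $\mathcal G^{\text{\upshape start}}$ is the base strategy in which the initialization gadget keeps $\alpha^{-1}(0)$ prepared on $\xnode$ via Lemma~\ref{lem:init}(ii).

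Spoiler's winning strategy is obtained by concatenation: he first uses Lemma~\ref{lem:init}(i) to move $\alpha^{-1}(0)$ onto $\xnode$, then iterates for $i=0,\ldots,n^{\kmo}m^{\kmo}-2$ the three-step procedure of applying Lemma~\ref{lem:inc}(1) on the unique applicable increment gadget followed by two applications of Lemma~\ref{lem:multi}(i) through the subsequent switch and then through the single switch, which lands $\alpha^{-1}(i+1)$ back on $\xnode$. From $\posq_{\text{win}}=\alpha^{-1}(n^{\kmo}m^{\kmo}-1)$ on $\xnode$ Spoiler finishes by pebbling the extra vertex $a$ of the winning gadget, against which each of Duplicator's colour-consistent responses $a_j$ produces a non-edge.

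For Duplicator I define $\mathcal F_i$, for each valid $\posq$ with $\alpha(\posq)=i-1$, as the composition of her winning strategy on the winning gadget (available because $\posq\neq\posq_{\text{win}}$), the critical initialization strategy $\mathcal I^{\text{init}}_{\posq}$ of Lemma~\ref{lem:init}(iii), on the applicable increment branch the winning strategy of Lemma~\ref{lem:inc}(2) followed by the critical input strategy $\mathcal H^{\text{in}}_{\posq^+}$ on the next switch, on each non-applicable increment branch the winning strategy of Lemma~\ref{lem:inc}(3) followed by the matching restart strategy $\mathcal H^{\text{restart}}_{\posq_{\text{inv}}}$, and on the single switch the output strategy $\mathcal H^{\text{out}}_{\posq}$, so that $h^x_{\posq}$ arrives on $\xnode$. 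Symmetrically, for each valid $\posq^+$ with $\alpha(\posq^+)=i$ I define $\mathcal G_i$ by putting $\mathcal H^{\text{out}}_{\posq^+}$ on the switch above the applicable increment branch (delivering $\posq^+$ to $\ynode$) and $\mathcal H^{\text{in}}_{\posq^+}$ on the single switch (delivering $h^x_{\boldsymbol{0}}$ to $\xnode$), filling the other gadgets with the corresponding winning strategies consistent with $h^x_{\boldsymbol{0}}$ on $\xnode$; $\mathcal G^{\text{\upshape start}}$ is defined analogously but with the winning initialization strategy $\mathcal I^{\text{init}}$ of Lemma~\ref{lem:init}(ii) in place of the input-side single switch, yielding $h^y_{\alpha^{-1}(0)}$ on $\xnode$.

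The containment relations will then be verified by enumerating the critical positions of each composition. The only critical positions of $\mathcal F_i$ arise from $\mathcal I^{\text{init}}_{\posq}$ (absorbed by $\widehat{\mathcal G}^{\text{\upshape start}}$ by Lemma~\ref{lem:init}(iii)) and from $\mathcal H^{\text{in}}_{\posq^+}$ on the applicable branch; by Lemma~\ref{lem:multi}(iv)(b)(c) the latter split into restart-critical positions sitting non-critically inside a restart strategy used by $\mathcal G^{\text{\upshape start}}$ and output-critical positions sitting non-critically inside the $\mathcal H^{\text{out}}_{\posq^+}$ component of $\mathcal G_i$. The critical positions of $\mathcal G_i$ are analysed in the same way and split between $\widehat{\mathcal G}^{\text{\upshape start}}$ and $\widehat{\mathcal F}_{i+1}$, and the critical positions of $\mathcal G^{\text{\upshape start}}$ all belong to $\widehat{\mathcal F}_1$. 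The main obstacle I expect is the careful boundary-function bookkeeping needed to make each composition a legal critical strategy on all of $\struc{A}_n$ and $\struc{B}_m$ simultaneously; once that is in place, Lemma~\ref{lem:SequenceCriticalStrategies} applied to the resulting sequence of length $2n^{\kmo}m^{\kmo}-2$ yields $\depth^k(\struc{A}_n,\struc{B}_m)\geq 2n^{\kmo}m^{\kmo}-2=\Omega(n^{\kmo}m^{\kmo})$, completing the proof of Theorem~\ref{thm:kConsDepthLowerBound}.
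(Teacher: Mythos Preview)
Your overall architecture is right and essentially matches the paper, although you have interchanged the roles of $\mathcal F_i$ and $\mathcal G_i$: in the paper $\mathcal G_i$ is the strategy with $\alpha^{-1}(i)$ on $\xnode$ and $\mathcal F_i$ the one with $\alpha^{-1}(i)$ on $\ynode$, whereas you index them the other way around (with a shift by one). That swap is harmless for the containment pattern in the lemma.

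The genuine gap is your definition of $\mathcal G^{\text{start}}$. You describe it as a single composed strategy ``analogous to $\mathcal G_i$'' but with the winning initialization strategy $\mathcal I^{\text{init}}$ plugged in, and then assert that the restart-critical positions of every $\mathcal F_i$ and every $\mathcal G_i$ sit non-critically inside it. This does not work. A restart-critical position of $\mathcal H^{\text{in}}_{\posq^+}$ on a given switch lies, by Lemma~\ref{lem:multi}(iv)(b), in $\mathcal H^{\text{restart}}_{(\posa^+,\posb^+,\{t\})}$ on \emph{that same switch}, and this restart strategy has boundary function $h^x_{(\posa^+,\posb^+,\{t\})}$ on the switch's input side. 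To embed it in a global strategy the neighbouring gadgets must carry compatible boundary functions, and these depend on the particular $\posq^+$ and $t$. A single global strategy with $\alpha^{-1}(0)$ on $\xnode$ cannot simultaneously realise all of these boundaries for all $i$ and all $t$.

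In the paper this is resolved by making $\mathcal G^{\text{start}}$ a \emph{large union}: besides the base strategy $\mathcal G^{\text{init}}$ (essentially your $\mathcal G^{\text{start}}$) it contains, for every index $i$ and every $t\in[\kmo]$, a separate global strategy $\mathcal G^{\text{restart}}_{i,t}$ (and analogously $\mathcal F^{\text{restart}}_{i,t}$) in which the relevant switch carries the restart strategy $\mathcal H^{\text{restart}}_{(\posa,\posb,\{t\})}$ and all surrounding gadgets are filled in compatibly with that invalid configuration. The critical positions of each of these restart strategies lie only on the initialization gadget and are absorbed by $\mathcal G^{\text{init}}$, so the union satisfies $\crit(\mathcal G^{\text{start}})=\crit(\mathcal G^{\text{init}})\subseteq\widehat{\mathcal F}_1$. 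Without this union, your claimed containments $\crit(\mathcal F_i)\subseteq\widehat{\mathcal G}_i\cup\widehat{\mathcal G}^{\text{start}}$ and $\crit(\mathcal G_i)\subseteq\widehat{\mathcal F}_{i+1}\cup\widehat{\mathcal G}^{\text{start}}$ fail on the restart-critical side.
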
 
\begin{proof}[Proof of Theorem~\ref{thm:kConsDepthLowerBound}]
	For $k=2$ the theorem follows from \cite{BerVer13}. For $k\geq 3$ consider the structures $\struc{A}_{n}$ and $\struc{B}_{m}$ (for $\kmo=k-1$) defined above. 
	By Lemma~\ref{lem:CSP_winning_Spoiler} Spoiler wins the existential $k$-pebble game on $\struc{A}_{n}$ and $\struc{B}_{m}$. Furthermore, it follows via Lemma~\ref{lem:SequenceCriticalStrategies} that Spoiler needs at least $\Omega(n^{k-1}m^{k-1})$ rounds to win the game. To get  structures with exactly $n$ and $m$ vertices we take the largest $n',m'$ such that $|V(\struc{A}_{n'})|\leq n$, $|V(\struc{B}_{m'})|\leq m$ and fill up the structures with an appropriate number of isolated vertices.
\qed\end{proof}

\begin{proof}[Proof of Lemma~\ref{lem:CSP_winning_Spoiler}]
	To show that Spoiler has a winning strategy it suffices to prove the following three statements:
	\begin{enumerate}
    \item[(1)] Spoiler can reach the position $\alpha^{-1}(0)$ on $\xnode$ from $\emptyset$,
	\item[(2)] Spoiler can reach $\alpha^{-1}(i+1)$ on $\xnode$ from $\alpha^{-1}(i)$ on $\xnode$ (for $i<n^{\kmo}m^{\kmo}-1$) and
	\item[(3)] Spoiler wins from $\alpha^{-1}(n^{\kmo}m^{\kmo}-1)$ on $\xnode$.
	\end{enumerate}
	Assertion (1) follows from Lemma~\ref{lem:init} and (3) is ensured by the winning gadget. For (2), Spoiler starts with the position $\posq=\alpha^{-1}(i)$ on $\xnode$. Since $i<n^{\kmo}m^{\kmo}-1$ there is exactly one increment gadget applicable to $\posq$. Spoiler uses Lemma~\ref{lem:inc} to reach $\posq^+=\alpha^{-1}(i+1)$ on the output of that gadget. By applying Lemma~\ref{lem:multi}.(i) twice, Spoiler can pebble $\posq^+$ through the two switches to the $\xnode$ vertices.

	To define the sequence of global critical strategies we combine the partial critical strategies on the gadgets using the $\circ$-operator. There are three types of strategies: $\mathcal G^{\text{\upshape start}}$, $\mathcal F_i$ and $\mathcal G_i$. 
	To define $\mathcal G_i$ we let $\posq=\alpha^{-1}(i)$. Duplicator plays according to $h^{\xnode}_{\posq}$ on $\xnode$ and according to $h^{\ynode}_{\boldsymbol 0}$ on $\ynode$. She plays according to this strategy in the case when Spoiler reaches ``$\posq$ on $\xnode$''. The critical strategy $\mathcal G_i$ is the combination of the following (pairwise connectable) strategies on the gadgets:
	\begin{itemize}
		\item The critical strategy $\mathcal I^{\text{\upshape init}}_{\posq}$ on the initialization gadget (Lemma~\ref{lem:init}).
		\item The winning strategy with boundary $h^x_{\posq}$ and $h^y_{\posq^+}$ on the increment gadget applicable to $\posq$ (Lemma~\ref{lem:inc}).
		\item The critical input strategy $\mathcal H^{\text{in}}_{\posq^+}$ on the switch following the applicable increment gadget (Lemma~\ref{lem:multi}).
		\item The winning strategy with boundary $h^x_{\posq}$ and $h^y_{\posq_{\text{inv}}}$ on the other increment gadgets not applicable to $\posq$ (Lemma~\ref{lem:inc}). 
		\item The winning strategy $\mathcal H^{\text{restart}}_{\posq_{\text{inv}}}$ on the switches following the inapplicable increment gadgets (Lemma~\ref{lem:multi}). Here, $\posq_{\text{inv}}$ is the invalid configuration on the output of the corresponding increment gadget. 
		\item The output winning strategy $\mathcal H^{\text{out}}_{\posq}$ on the single switch (Lemma~\ref{lem:multi}).
	\end{itemize}	
	If in the above setting Spoiler increments $\posq$ through the applicable increment gadget and moves $\posq^+=\alpha^{-1}(i+1)$ through the subsequent switch, then Duplicator switches to the strategy $\mathcal F_{i+1}$. 
	To define $\mathcal F_{i}$ we fix $\posq=\alpha^{-1}(i)$. In this strategy, Duplicator plays according to $h^{\xnode}_{\boldsymbol 0}$ on $\xnode$ and according to $h^{\ynode}_{\posq}$ on $\ynode$. This critical strategy is the combination of the following strategies on the gadgets.
	\begin{itemize}
		\item The critical strategy $\mathcal I^{\text{\upshape init}}_{\boldsymbol 0}$ on the initialization gadget.
		\item The winning strategy with boundary $h^x_{\boldsymbol 0}$ and $h^y_{\boldsymbol 0}$ on the increment gadgets.
		\item The output strategy $\mathcal H^{\text{out}}_{\posq}$ on the switches following the  increment gadgets.
		\item The critical input strategy $\mathcal H^{\text{in}}_{\posq}$ on the single switch.
	\end{itemize}	
	The critical positions in the strategies $\mathcal G_i$ and $\mathcal F_i$ are inside the switches and the initialization gadget. 
	Recall that by Lemma \ref{lem:multi}.(iv) the critical positions on the switch can be divided into restart critical positions and output critical positions. 
	Furthermore, all output critical positions of $\mathcal G_i$, which are inside the switch following the applicable increment gadget, are contained as non-critical positions in $\mathcal F_{i+1}$.
	All output critical position in $\mathcal F_{i}$, which are inside the single switch, are contained as non-critical positions in $\mathcal G_{i}$.
	Now we define $\mathcal G^{\text{start}}$, which contains all other critical positions of $\mathcal G_{i}$ and $\mathcal F_{i}$. 
	The critical strategy $\mathcal G^{\text{start}}$ is the union of several other global strategies. The first one is $\mathcal G^{\text{\upshape init}}$, which is defined as $\mathcal G_0$ except that it contains the winning strategy $\mathcal I^{\text{\upshape init}}$ on the initialization gadget.
	Thus, by Lemma~\ref{lem:init}, it contains every critical position on the initialization gadget as non-critical position. 
	Note that the output critical positions of $\mathcal G^{\text{\upshape init}}$ are contained as non-critical positions in $\mathcal F_1$. 
	Since $\mathcal G^{\text{\upshape init}}$ handles the critical positions on the initialization gadget and we discussed the output critical positions on the switches, it remains to consider the restart critical positions of the strategies. 
	For this we construct a strategy $\mathcal G^{\text{restart}}_i$ to handle the restart critical positions of $\mathcal G_i$ (for $i\geq 1$) and of $\mathcal G^{\text{init}}$ (for $i=0$). Furthermore, we define for every $i\geq 1$ a strategy $\mathcal F^{\text{restart}}_i$ to handle the restart critical positions of $\mathcal F_i$.
		
	For $0\leq i\leq n^{\kmo}m^{\kmo}-2$ and $t\in[\kmo]$ we let $\posq=\alpha^{-1}(i)=(\posa,\posb,\emptyset)$ and $\posq_{t}$ be the invalid configuration $(\posa,\posb,\{t\})$. The global strategy $\mathcal G^{\text{restart}}_{i,t}$ is the combination of the following strategies on the gadgets.
	\begin{itemize}
		\item The critical strategy $\mathcal I^{\text{\upshape init}}_{\posq_t}$ on the initialization gadget.%
		\item The winning strategy with boundary $h^x_{\posq_t}$ and $h^y_{\posq_{\text{inv}}}$ on the  increment gadgets. Note that, since $\posq_t$ is invalid, no increment gadget is applicable to $\posq_t$. 
		\item The winning strategy $\mathcal H^{\text{restart}}_{\posq_{\text{inv}}}$ on the switches following the increment gadgets. Again, $\posq_{\text{inv}}$ is the invalid configuration at the output of the preceding increment gadget.
		\item The output winning strategy $\mathcal H^{\text{out}}_{\posq_t}$ on the single switch.
	\end{itemize}
	Finally, we let $\mathcal G^{\text{restart}}_{i}\defi \bigcup_{i\in[\kmo]}\mathcal G^{\text{restart}}_{i,t}$. 
	Note that by Lemma \ref{lem:multi}.(iv) every restart critical position of $\mathcal G_i$ is contained in $\mathcal G^{\text{restart}}_{i}$ and every restart critical position of $\mathcal G^{\text{\upshape init}}$ is contained in $\mathcal G^{\text{restart}}_{0}$. 
	Now we define for $1\leq i\leq n^{\kmo}m^{\kmo}-2$, $t\in[\kmo]$, $\posq=\alpha^{-1}(i)=(\posa,\posb,\emptyset)$ and $\posq_{t}\defi (\posa,\posb,\{t\})$ the strategy $\mathcal F^{\text{restart}}_{i,t}$ analogously. It consists of the following partial strategies.
	\begin{itemize}
		\item The critical strategy $\mathcal I^{\text{\upshape init}}_{\boldsymbol 0}$ on the initialization gadget.%
		\item The winning strategy with boundary $h^x_{\boldsymbol 0}$ and $h^y_{\boldsymbol 0}$ on the  increment gadgets. 
		\item The winning strategy $\mathcal H^{\text{restart}}_{\boldsymbol 0}$ on the switches after the increment gadgets. 
		\item The winning strategy $\mathcal H^{\text{restart}}_{\posq_t}$ on the single switch.
	\end{itemize}
	In the end we let $\mathcal F^{\text{restart}}_{i}$ be the union of all $\mathcal F^{\text{restart}}_{i,t}$. Note that every restart critical position of $\mathcal F_{i}$ is contained as non-critical position in $\mathcal F^{\text{restart}}_{i}$. Finally, let
	$$
	\mathcal G^{\text{start}} \defi \mathcal G^{\text{init}} \cup \bigcup_{0\leq i\leq n^{\kmo}m^{\kmo}-2} \mathcal G^{\text{restart}}_i \cup \bigcup_{1\leq i\leq n^{\kmo}m^{\kmo}-2}\mathcal F^{\text{restart}}_i.
	$$
	To conclude the proof note that the critical positions of $\mathcal G^{\text{restart}}_i$ and $\mathcal F^{\text{restart}}_i$ are inside the initialization gadget and hence contained in $\widehat{\mathcal G}^{\text{init}}$. Thus they are not critical positions of ${\mathcal G}^{\text{start}}$. Hence, $\crit({\mathcal G}^{\text{start}}) = \crit({\mathcal G}^{\text{init}}) \subseteq \widehat{\mathcal F}_1$. 
\qed\end{proof}

\section{Conclusion}

We have proven an optimal lower bound of $\Omega(n^{k-1}d^{k-1})$ on the number of nested propagation steps in the $k$-consistency procedure on constraint networks with $n$ variables and domain size $d$. It follows that every parallel propagation algorithm has to perform at least $\Omega(n^{k-1}d^{k-1})$ sequential steps. 
Using \mbox{$(n+d)^{O(k)}$} processors (one for every instance of the inference rule), $k$-consistency can be computed in $O(n^{k-1}d^{k-1})$ parallel time, which is optimal for propagation algorithms. In addition, the best sequential algorithm runs in $O(n^{k}d^{k})$. The overhead compared to the parallel approach is mainly caused by the time needed to search for the next inconsistent assignment that might be propagated -- and this seems to be the only task that can be parallelized. 

Although we have proven an optimal lower bound in the general setting, it might be interesting to investigate the propagation depth of $k$-consistency on restricted classes of structures. Especially, if in such cases the propagation depth is bounded by $O(\log(n+d))$, we know that $k$-consistency is in \NCclass{} and hence parallelizable.

\newpage

\bibliography{Literatur}

\newpage

\appendix

\section{Appendix}

This appendix contains proofs skipped in the main text. We first give a simple proof of the correspondence between the existential pebble game and CSP-refutations (Subsection~\ref{sec:Alemma}) followed by the definition and strategies on the switch (Subsection~\ref{sec:Aswitch}) and the initialization gadget (Subsection~\ref{sec:Ainit}).

\subsection{Proof of Lemma~\ref{lem:CSPrefutaionEXpebblegame}}\label{sec:Alemma}

\begin{lemma}\textbf{\upshape (Reminder of Lemma~\ref{lem:CSPrefutaionEXpebblegame})} \hspace*{0.5cm}
	Let $\struc{A}$ and $\struc{B}$ be two relational structures. There is a CSP-refutation for $\struc{A}$ and $\struc{B}$ of width $k-1$ and depth $d$ if and only if Spoiler has a strategy to win the existential $k$-pebble game on $\struc{A}$ and $\struc{B}$ within $d$ rounds.
\end{lemma}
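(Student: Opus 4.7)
The plan is to prove the equivalence by induction on the depth $d$, working with a slight generalization that handles arbitrary (not only empty) game positions. Specifically, I would show that for every partial mapping $p$ from $V(\struc{A})$ to $V(\struc{B})$ with $|p|\le k$, Spoiler has a winning strategy from position $p$ within $d$ further rounds of the existential $k$-pebble game if and only if some subset $q\subseteq p$ admits a CSP-derivation of width at most $k-1$ and depth at most $d$. The lemma is the special case $p=\emptyset$.

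The base case $d=0$ is immediate: a depth-$0$ derivation of $q$ means $q$ itself is an axiom, i.e., not a partial homomorphism. Since being a non-partial-homomorphism is closed under taking supersets, having such a subset of $p$ is equivalent to $p$ itself being a non-partial-homomorphism, which is exactly Spoiler's immediate winning condition.

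For the inductive step, in the forward direction I would analyze the final rule application in a depth-$d$ derivation of $q\subseteq p$: it derives $q$ from premises $q'_i\cup\{x\mapsto a_i\}$ (one for each $a_i\in V(\struc{B})$), with each $q'_i\subseteq q$ and each premise admitting a subderivation of depth at most $d-1$. Since $|q|\le k-1$ by the width constraint, Spoiler from $p$ either has an unused pebble or can free one by discarding a pebble outside $q$; he places it on $x$. For every Duplicator response $a_i$, the new position is a superset of $q'_i\cup\{x\mapsto a_i\}$, and the inductive hypothesis yields a winning continuation in $d-1$ further rounds. In the backward direction one argues symmetrically: from Spoiler's winning first move on variable $x$ (leaving reduced position $p^-\subseteq p$ with $|p^-|\le k-1$), for every Duplicator response $a_i$ the inductive hypothesis provides some $r_i\subseteq p^-\cup\{x\mapsto a_i\}$ with a suitable derivation; applying the CSP rule with $q'_i\defi r_i\setminus\{x\mapsto a_i\}$ and conclusion $q\defi\bigcup_i q'_i\subseteq p^-$ assembles the depth-$d$ derivation, with $|q|\le|p^-|\le k-1$ preserving the width bound.

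The main subtlety is the asymmetry between the CSP rule, which allows a distinct subset $q'_i\subseteq q$ for each premise, and a single game round, in which Spoiler must commit to what he picks up before seeing Duplicator's answer. This is resolved by the upward closure of ``bad'' positions: on the game side, Spoiler can afford to leave all of $q$ on the board, because any position containing a bad subposition is itself bad; on the derivation side, taking the union $\bigcup_i q'_i$ provides a single propagated conclusion $q\subseteq p^-$ that both meets the width bound and fits the rule. A minor bookkeeping point is that premises of size exactly $k$ occur only as axioms and so have depth $0$, which is consistent with the depth-$\leq d-1$ guarantee from the inductive hypothesis.
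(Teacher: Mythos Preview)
Your approach is essentially the paper's: both argue by induction on $d$ via the same strengthened claim that Spoiler wins from position $p$ in at most $d$ rounds iff some $q\subseteq p$ admits a width-$(k{-}1)$ derivation of depth at most $d$.

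There is one small gap in your backward direction. When you set $q'_i\defi r_i\setminus\{x\mapsto a_i\}$, the rule application needs a derivation of the premise $q'_i\cup\{x\mapsto a_i\}$; but if $x\mapsto a_i\notin r_i$ this premise equals $r_i\cup\{x\mapsto a_i\}\neq r_i$, and you only have a derivation of $r_i$ (the system has no weakening rule). The paper patches this with an explicit case split: if some $r_i$ already satisfies $r_i\subseteq p^-$, take $q=r_i$ directly (depth $\le d-1\le d$) and you are done; otherwise every $r_i$ contains $x\mapsto a_i$ and your rule application goes through verbatim.
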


\begin{proof}
	For one direction assume that there is a CSP-refutation $P$ of depth $d$ and width $k-1$. We show by induction over the depth that every partial mapping $p$ of depth $i$ occurring in the refutation defines a position of pebbles from which Spoiler can win the existential $k$-pebble game within $i$ rounds. It follows that Spoiler can win the game from $\emptyset$ (all pebbles off the board) within $d$ rounds.
	All mappings of depth $i=0$ are axioms and thus not partial homomorphisms. Hence, Spoiler wins immediately. For the induction step assume that $p$ has depth $i>0$. Therefore, $|p|<k$ and $p$ is derived from $p'_1\cup\{x\mapsto a_1\},\ldots,p'_n\cup\{x\mapsto a_n\}$ ($p'_j\subseteq p$) each of depth $<i$. Spoiler can now reach one of these positions within one round by placing the remaining pebble on $x$. Depending on Duplicator's choice (some $a_j\in V(\struc{B})$) Spoiler moves to $p_j'\cup\{x\mapsto a_j\}$ by picking up the pebbles in $p\setminus p_j'$. 
	By induction assumption, Spoiler can win from $p_j'\cup\{x\mapsto a_j\}$ within $<i$ rounds and hence he can win from $p$ within $i$ rounds. 

	To prove the other direction we show by induction over the number of rounds that if Spoiler has an $i$-round winning strategy from a position $p$, then some $p'\subseteq p$ has a CSP-derivation of depth $i$. Since we assume that Spoiler has a $d$-round winning strategy from $\emptyset$, the lemma follows. 
	For $i=0$ the 0-round winning positions are precisely the axioms in our derivation system. Assume that Spoiler has an $i$-round winning strategy from $p$. In the next round in his strategy Spoiler first has to pick up at least one pebble. Let $p'\subseteq p$ be the new position and note that $|p'|<k$. 
	By the definition of the game Spoiler also has a $i$-round winning strategy from $p'$. Let $x\in V(\struc{A})$ be the element on which the next pebble is set. Since Spoiler has a strategy to win against every possible choice of Duplicator, we know that $p'\cup\{x\mapsto a_1\},\ldots,p'\cup\{x\mapsto a_n\}$ are positions from which Spoiler can win the game within $i-1$ rounds. For all these positions there is a $p_j\subseteq p'\cup\{x\mapsto a_j\}$ that has a derivation of depth at most $i-1$ by induction assumption. If for some $j$ it holds that $p_j\subseteq p'\subseteq p$ we are done. Otherwise, all $p_j$ are of the form $p_j=p_j'\cup\{x\mapsto a_j\}$ with $p_j'\subseteq p'\subseteq p$. Thus, $p$ has a derivation of depth at most $i$ by applying the derivation rule \eqref{eq:CSPrule1}.
\end{proof}

\subsection{The Switch}\label{sec:Aswitch}

\begin{figure}[p]
\rotatebox{90}{%
\begin{minipage}{\textheight}
 \centering
  \input{switch.tex}
 \caption{Subgraph of the switch. On Spoiler's side, all inner-block edges are present and the inter-block edges are indicated. For the first block on Duplicator's side, all inner-block edges are drawn. Note that there is no edge between $a^i_{s,l}$ and $b^i_{0,l}$.}\label{fig:multi}
 \end{minipage}
}
\end{figure}

For the reader familiar with the literature it is worth noting that the switch presented here is an extension of the ``multiple input one-way switch'' defined in \cite{Ber12,Ber13}. The difference is that the old switch can only be used for the case $n=1$. 
However, many strategies and technical definitions can directly be extended to this more general setting. 
The switch in \cite{Ber12,Ber13} was in turn a further development of the work from  
Kolaitis and Panttaja \cite{Kolaitis.2003}, who constructed a switch for the special case $n=1$ and $m=2$.

In order to define the \textit{switch} we construct the two graphs: $M_S$ for Spoiler's side and $M_D$ for Duplicator's side. 
Let
\begin{align*}
  V(M_S) = &\{x^i_j,a^i_j,b^i_j,y^i_j\mid i\in [\kmo],j\in[n]\}, \\
  E(M_S) = &\big\{\{x^i_j,a^i_j\},\{a^i_j,b^i_j\},\{b^i_j,y^i_j\}\mid i\in[\kmo],j\in[n]\big\} \\
  \cup &\big\{\{a^{i}_j,a^{i'}_{j'}\},\{b^{i}_j,b^{i'}_{j'}\},\{a^{i}_j,b^{i'}_{j'}\}\mid i,i'\in[\kmo]; i\neq i'; j,j'\in[n]\big\} 
\end{align*}
That is, within one block $i\in[\kmo]$ of $M_S$ the vertices $a^i_1,a^i_2,\ldots$ are pairwise connected to $b^i_1,b^i_2,\ldots$ and between two blocks $i$ and $i'$ every vertex $a^i_j$ and $b^i_j$ from block $i$ is connected to every vertex $a^{i'}_{j'}$ and $b^{i'}_{j'}$ from block $i'$. 
For Duplicator's side of the graph, we define for $i\in [\kmo]$: 
\begin{align*}
 X^i &= \{x^i_s\mid 0\leq s\leq m\},& Y^i &= \{y^i_s\mid 0\leq s\leq m\}\\
 A^i_+ &= \{a^i_{s,l}\mid s\in [m],l\in [\kmo]\}, 
 &A^i &= A^i_+ \cup \{a^i_0\} \\
 B^i_+ &= \{b^i_{s,l} \mid s \in [m],l\in [\kmo]\},
 &B^i &= B^i_+ \cup \{b^i_{0,l}\mid l\in [\kmo]\}. 
\end{align*}
 The set of vertices of $M_D$ is 
$$
V(M_D) = \bigcup_{i\in [\kmo]}\left(X^i\cup A^i \cup B^i \cup Y^i\right).
$$
The graphs consist of $\kmo$ blocks, where the $i$-th block contains all vertices with upper index $i$. Furthermore there are four types of variables (drawn in one row in Figure~\ref{fig:CSP-Prop-kCons-overview}) the input vertices $x$, the output vertices $y$, the vertices $a$ and $b$ (with several indices). Every block of every type of vertices gets a unique color. That is, all $x^i_j$ ($y^i_j,a^i_j,b^i_j$) in $M_S$ get the same color as the vertices $X^i$ ($Y^i,A^i,B^i$, resp.) in $M_D$. This ensures that Duplicator always has to answer with vertices of the same type in the same block. 

Now we describe the edges in $M_D$. We first define the inner-block edges $E^i$, which are also shown in Figure \ref{fig:CSP-Prop-kCons-overview}, and then the inter-block edges $E^{i,j}$:
\begin{align*}
 E^i =  
  &\big(\{x^i_0\}\times A^i\big) 
  &&\text{(E1)}
  \\
  &\cup 
  \big\{ \{x^i_s, a^i_{s,l}\}\mid s\in[m]; l\in[\kmo]\big\} 
  &&\text{(E2)}
  \\
  &\cup
  \big(\{a_0^i\}\times B^i\big) 
  &&\text{(E3)}
  \\
  &\cup 
  \big\{ \{a^i_{s,l}, b^i_{s,l}\}\mid s\in[m]; l\in[\kmo]\big\} 
  &&\text{(E4)}
  \\
  &\cup
  \big\{\{a^i_{s,l},b^i_{0,l'}\}\mid s\in[m]; l,l'\in[\kmo]; l\neq l'\big\} 
  &&\text{(E5)}
  \\
  &\cup
  \big\{ \{b^i_{s,l},y^i_s\}\mid s\in[m]; l\in[\kmo]\big\} 
  &&\text{(E6)}
  \\
  &\cup 
  \big\{\{b^i_{0,l},y^i_s\}\mid s\in[m]\cup\{0\}; l\in[\kmo]\big\}, 
  &&\text{(E7)}
  \\
E^{i,j} = 
  &\big\{\{a^i_{s,l},a^j_{s',l'}\},\mid s,s'\in[m]; l,l'\in[\kmo]; l\neq l' \big\} 
  &&\text{(E8)}
  \\
  &\cup
  \big\{\{b^i_{s,l},b^j_{s',l'}\}\mid s\in[m],s'\in[m]\!\cup\!\{0\}; l,l'\in[\kmo]; l\neq l' \big\} 
  &&\text{(E9)}
  \\
  &\cup
  \big\{\{b^i_{0,l},b^j_{0,l'}\}\mid l,l'\in[\kmo] \big\} 
  &&\text{(E10)}
  \\
  &\cup
  \big\{\{a^i_{s,l},b^j_{s',l'}\}\mid s\in[m]; s'\!\in\![m]\!\cup\!\{0\}; l,l'\in[\kmo]; l\neq l' \big\} 
  &&\text{(E11)}
  \\  
  &\cup
  \big\{\{a^i_{0},a^j_{s,l}\}\mid s\in[m]; l\in[\kmo] \big\} 
  &&\text{(E12)}
  \\
  &\cup
  \big\{\{a^i_{0},b^j_{s,l}\}\mid s\in[m]\cup\{0\}; l\in[\kmo]\big\}
  &&\text{(E13)}
\end{align*}
Finally, $E(M_D) = \bigcup_{i\in[\kmo]}E^i\cup \bigcup_{i,j\in[\kmo];i\neq j}E^{i,j}$.
The next lemma states the main properties of the switch. For this, recall the definition of critical strategies (Definition~\ref{def:criticalStrategy} on page~\pageref{def:criticalStrategy}). 
The first statement (i) states that Spoiler can pebble a valid position from the input to the output. 
Duplicator uses the critical input strategies (iv) to ensure that Spoiler has to pebble a critical position inside the switch while he pebbles the valid position through the switch. Duplicator's output strategy (ii) ensures that Spoiler cannot move backwards (\ie, reach $\posq$ on the input from $\posq$ on the output). The restart strategy (iii) makes sure that Spoiler cannot pebble an invalid position through the switch.
\begin{lemma} \textbf{\upshape (Reminder of Lemma~\ref{lem:multi})} \hspace*{0.5cm}
 For every configuration $\posq=(\posa,\posb,T)$, the following statements hold in the existential $(\kmo+1)$-pebble game on the switch:
\begin{enumerate}
 \item[(i)] If $\posq$ is valid, then Spoiler can reach $\posq$ on the output from $\posq$ on the input.
 \item[(ii)] Duplicator has a winning strategy $\mathcal H^\text{out}_{\posq}$ with boundary function $h^x_{\boldsymbol{0}}\cup h^y_{\posq}$.
 \item[(iii)] If $\posq$ is invalid, then Duplicator has a winning strategy $\mathcal H^\text{restart}_{\posq}$ with boundary function $h^x_{\posq}\cup h^y_{\boldsymbol{0}}$.
 \item[(iv)] If $\posq$ is valid, then Duplicator has a critical strategy $\mathcal H^\text{in}_{\posq}$ with boundary function $h^x_{\posq}\cup h^y_{\boldsymbol{0}}$ and sets of restart critical positions $\mathcal C^\text{restart-crit}_{\posq,t}$ (for $t\in [\kmo]$) and output critical positions $\mathcal C^\text{out-crit}_{\posq}$ such that:
  \begin{enumerate}
    \item $\crit(\mathcal H^\text{in}_{\posq}) = \bigcup_{t\in[\kmo]} \mathcal C^\text{restart-crit}_{\posq,t}\cup \mathcal C^\text{out-crit}_{\posq}$,
    \item $\mathcal C^\text{restart-crit}_{\posq,t} \subseteq \mathcal H^\text{restart}_{(\posa,\posb,\{t\})}$ and
    \item $\mathcal C^\text{out-crit}_{\posq} \subseteq \mathcal H^\text{out}_{\posq}$.
  \end{enumerate}
\end{enumerate}
\end{lemma}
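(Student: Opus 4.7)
My plan is to prove the four statements in order, constructing explicit strategies and verifying them by exhibiting partial homomorphisms whose closures form the required families.

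For part (i), I would describe Spoiler's strategy layer by layer. Starting from $\{(x^i_{\posa(i)},x^i_{\posb(i)})\mid i\in[\kmo]\}$, Spoiler places the free $(\kmo+1)$st pebble on $a^1_{\posa(1)}$; by (E2) Duplicator must answer with some $a^1_{\posb(1),l_1}$. Iteratively Spoiler then moves through $a^2_{\posa(2)},\ldots,a^\kmo_{\posa(\kmo)}$: the edges (E8) between blocks in $\struc{A}$ force Duplicator's indices $l_1,\ldots,l_\kmo$ to be pairwise distinct, since the corresponding non-edges forbid repetitions. Once all pebbles are on the $a$-layer, Spoiler moves to the $b$-layer analogously, using that the only $b^i_{\cdot,\cdot}$ vertex adjacent to the complete clique on the previously pebbled $a$-vertices and $b$-vertices is $b^i_{\posb(i),l_i}$ --- this follows from the missing edges in (E5), (E9), (E11). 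Finally, Spoiler descends to $y^i_{\posa(i)}$ along the edges (E6), reaching $\posq$ on the output.

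For parts (ii) and (iii) I would produce explicit total homomorphisms from $M_S$ to $M_D$ and take $\mathcal H^{\text{out}}_{\posq}$ and $\mathcal H^{\text{restart}}_{\posq}$ to be the downward closures of (unions of) such mappings. For the output strategy with valid $\posq$, one natural choice sends $x^i_j\mapsto x^i_0$, $a^i_j\mapsto a^i_0$, $b^i_j\mapsto b^i_{0,j\bmod\kmo}$, $y^i_j\mapsto h^y_{\posq}(y^i_j)$; the edges (E1), (E3), (E7), (E10), (E13) accommodate the $0$-vertices. For Spoiler-plays-backwards style intrusions when $\posq$ is valid, I would add further homomorphisms $h^{\text{out}}_{\posq,\sigma}$ parametrised by permutations $\sigma\in S_\kmo$, mapping $a^i_{\posa(i)}\mapsto a^i_{\posb(i),\sigma(i)}$, $b^i_{\posa(i)}\mapsto b^i_{\posb(i),\sigma(i)}$ and $y^i_{\posa(i)}\mapsto y^i_{\posb(i)}$, so that Duplicator can simultaneously preserve $\posq$ on the output and answer arbitrary moves on the input. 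For the restart strategy with invalid $\posq=(\posa,\posb,T)$, I fix some $t\in T$ and use the fact that $y^i_0$ is adjacent via (E7) to every $b^i_{0,l}$, letting me map $b^i_j\mapsto b^i_{0,t}$ and $y^i_j\mapsto y^i_0$ everywhere; the block $t\in T$ provides the required flexibility so that no $a$-pebble is forced into $A^i_+$ conflicting via (E5)/(E11).

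For part (iv), the critical input strategy, I would define partial homomorphisms $h^{\text{in}}_{\posq,\sigma}$ and $h^{\text{in}}_{\posq,\sigma,\ell}$ indexed by $\sigma\in S_\kmo$ and $\ell\in[\kmo]$: the first is undefined exactly on the $b$-layer vertices in positions $\posa(i)$, the second is additionally undefined on $a^{\sigma^{-1}(\ell)}_{\posa(\sigma^{-1}(\ell))}$ and maps every $b^i_j\mapsto b^i_{0,\ell}$. Then $\mathcal H^{\text{in}}_{\posq}$ is the union of downward closures of these partial mappings. I would identify the critical positions by case analysis on how many $a^i_{\posa(i)}$ vertices are pebbled: the \emph{output-critical} positions are exactly $\{(a^i_{\posa(i)},a^i_{\posb(i),\sigma(i)})\mid i\in[\kmo]\}$ (all $\kmo$ pebbles on $a$-layer at the designated vertices), which embeds into $h^{\text{out}}_{\posq,\sigma}$; the \emph{restart-critical} positions have $\kmo-1$ such pebbles plus one pebble on some $b^u_s$, and these embed into $\mathcal H^{\text{restart}}_{(\posa,\posb,\{t\})}$ for the unique missing index $t=\sigma^{-1}(\ell)$. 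For any non-critical position with at most $\kmo-2$ pebbles on the designated $a$-vertices, I would show the extension property by exhibiting $h^{\text{in}}_{\posq,\sigma,\ell}$ or $h^{\text{in}}_{\posq,\sigma',\ell}$ (with $\sigma'$ obtained from $\sigma$ by swapping two values) that extends the current partial map and is defined at the queried vertex.

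The main obstacle is the bookkeeping in part (iv): verifying that each declared $h^{\text{in}}_{\posq,\sigma,\ell}$ really is a partial homomorphism requires carefully checking that no edge is mapped to a missing pair in (E5), (E8), (E11) --- precisely the non-edges that enforce distinct $l$-indices across blocks. The permutation $\sigma$ handles (E8), and the choice of $\ell$ and mapping $b^i_j\mapsto b^i_{0,\ell}$ together avoid (E5) and (E11). The remaining routine work is checking colour preservation and the adjacency constraints on the boundary, which follow directly from the definitions of $h^x_\posq$ and $h^y_{\boldsymbol 0}$.
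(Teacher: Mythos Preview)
Your proposal follows the paper's proof almost exactly: the layer-by-layer Spoiler strategy for (i), the total homomorphisms $h^{\text{out}}_\posq$ and $h^{\text{out}}_{\posq,\sigma}$ for (ii), the partial homomorphisms $h^{\text{in}}_{\posq,\sigma}$ and $h^{\text{in}}_{\posq,\sigma,\ell}$ for (iv), and the three-case analysis on $|\dom(g)\cap\{a^i_{\posa(i)}:i\in[\kmo]\}|$ are precisely what the paper does.

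There is one genuine gap. You define $\mathcal H^{\text{restart}}_\posq$ as the downward closure of a \emph{single} total homomorphism (you fix one $t\in T$ and send every $b^i_j\mapsto b^i_{0,t}$), whereas the paper takes it to be the family generated by \emph{all} total homomorphisms with boundary $h^x_\posq\cup h^y_{\boldsymbol 0}$. Your choice suffices for statement (iii) in isolation, but it breaks (iv)(b). A restart-critical position arising from $h^{\text{in}}_{\posq,\sigma,\ell}$ has the form
\[
\{(a^i_{\posa(i)},a^i_{\posb(i),\sigma(i)})\mid i\in[\kmo]\setminus\{t\}\}\cup\{(b^u_s,b^u_{0,\sigma(t)})\}
\]
with $t=\sigma^{-1}(\ell)$. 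For this to lie in $\mathcal H^{\text{restart}}_{(\posa,\posb,\{t\})}$ you need a homomorphism in that family sending $b^u_s\mapsto b^u_{0,\sigma(t)}$ and each $a^i_{\posa(i)}\mapsto a^i_{\posb(i),\sigma(i)}$. Since $\sigma$ ranges over all of $S_\kmo$ and in general $\sigma(t)\neq t$, your single homomorphism (with $b$-index $t$ and some fixed $a$-images) cannot cover all cases. The fix is immediate and does not change the architecture of your argument: enlarge $\mathcal H^{\text{restart}}_\posq$ to the union over all such homomorphisms, or simply take, as the paper does, all total homomorphisms respecting the prescribed boundary.
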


\begin{IEEEproof}
Let $\posq=(\posa,\posb,T)$ be an arbitrary configuration. 
We first construct the strategy for Spoiler to prove (i). 
Starting from position $\{(x^{1}_{\posa(1)},x^1_{\posb(1)}),\ldots,(x^{\kmo}_{\posa(\kmo)},x^{\kmo}_{\posb(\kmo)})\}$, Spoiler places the ($\kmo+1$)st pebble on $a^1_{\posa(1)}$. 
Duplicator has to answer with $a^1_{\posb(1),l_1}$ for some $l_1\in[\kmo]$, mapping the edge $\{x^1_{\posa(1)},a^1_{\posa(1)}\}$ to some edge in (E2). 
Next, Spoiler picks up the pebble from $x^1_{\posa(1)}$ and puts it on $a^2_{\posa(2)}$. Again, Duplicator has to answer with $a^2_{\posb(2),l_2}$ for some $l_2\in [\kmo]\setminus\{l_1\}$. 
The index $l_2$ has to be different from $l_1$ because there is an edge between $a^1_{\posa(1)}$ and $a^2_{\posa(2)}$, but none between $a^1_{\posb(1),l_1}$ and $a^2_{\posb(2),l_1}$ in (E8). 
Following that scheme, Spoiler can reach the position $\{(a^{1}_{\posa(1)},a^1_{\posb(1),l_1}),\ldots,(a^{\kmo}_{\posa(\kmo)},a^{\kmo}_{\posb(\kmo),l_{\kmo}})\}$ for pairwise distinct $l_1, l_2, \cdots, l_{\kmo}$. 
Now, Spoiler pebbles $b^1_{\posa(1)}$ with the free pebble and Duplicator has to answer with a vertex in $B^1$ (due to the vertex-colors) that is adjacent to all $a^1_{\posb(1),l_1},\ldots,a^\kmo_{\posb(\kmo),l_\kmo}$. 
This is only the case for $b^1_{\posb(1),l_1}$ (due to (E4) and (E11)), since every vertex of the form $b^1_{0,l_i}$ is not adjacent to the vertex $a^i_{\posb(i),l_i}$ according to (E5) and (E11). Furthermore, $b^1_{\posb(1),l_1}$ is the only vertex of the form $b^1_{s,l}$ (for $s>0$) that is adjacent to $a^i_{\posb(i),l_i}$.
In the next step Spoiler picks up the pebble from $a^1_{\posa(1)}$ and puts it on $b^2_{\posa(2)}$. 
Duplicator has to answer with a vertex that is adjacent to all vertices $b^1_{\posb(1),l_1}, a^2_{\posb(2),l_2},\ldots,a^\kmo_{\posb(\kmo),l_\kmo}$. Because of the missing edges in (E5), (E11) and (E9) (!) the only vertex with this property is $b^2_{\posb(2),l_2}$. Again, Spoiler picks up the pebble from $a^2_{\posa(2)}$ and puts it on $b^3_{\posa(3)}$. By the same argument as before, Duplicator has to answer with $b^3_{\posb(3),l_3}$, which is the only vertex adjacent to all of $b^1_{\posb(1),l_1}, b^2_{\posb(2),l_2}, a^3_{\posb(3),l_3},\ldots,a^\kmo_{\posb(\kmo),l_\kmo}$.
Thus, Spoiler can reach $\{(b^{1}_{\posa(1)},b^1_{\posb(1),l_1}),\ldots,(b^{\kmo}_{\posa(\kmo)},b^{\kmo}_{\posb(\kmo),l_{\kmo}})\}$ and from there he reaches $\{(y^{1}_{\posa(1)},y^1_{\posb(1)}),\ldots,(y^{\kmo}_{\posa(\kmo)},y^{\kmo}_{\posb(\kmo)})\}$ by successively pebbling the edges $\{b^{i}_{\posa(i)},y^{i}_{\posa(i)}\}$.
 
In order to derive the winning strategies for Duplicator in (ii) and (iii) we consider several total homomorphisms from Spoiler's to Duplicator's side. Consider the edges (E1), (E3) and (E7) connecting \blacknode vertices with  \whitenode vertices in one block of Duplicator's side. They can be used by Duplicator to pebble a \whitenode vertex when Spoiler moves upwards. This is the crucial ingredient for Duplicator's output strategies (ii). 
The first homomorphism is used when Spoiler plays the above strategy to get a valid position through the switch and has already taken all his pebbles from the input vertices. If he tries to pebble input vertices again, then Duplicator can move to $x^i_0$ and plays according to the following homomorphism:
\begin{align*}
h^\text{out}_{\posq,\sigma}(x^i_j) &= x^i_0 \\ 
h^\text{out}_{\posq,\sigma}(a^i_{\posa(i)}) &= a^i_{\posb(i),\sigma(i)} 
&
h^\text{out}_{\posq,\sigma}(a^i_{j}) &= a^i_{0}\text{, }j\neq\posa(i)  \\
h^\text{out}_{\posq,\sigma}(b^i_{\posa(i)}) &= b^i_{\posb(i),\sigma(i)} 
&
h^\text{out}_{\posq,\sigma}(b^i_{j}) &= b^i_{0,\sigma(j)}\text{, }j\neq\posa(i)  \\
h^\text{out}_{\posq,\sigma}(y^i_{\posa(i)}) &= y^i_{\posb(i)}
&
h^\text{out}_{\posq,\sigma}(y^i_j) &= y^i_0\text{, }j\neq\posa(i) 
\end{align*}
where $\sigma \in S_{\kmo}$ is some permutation on $[\kmo]$. 
The next homomorphism is used by Duplicator when there is some valid or invalid configuration $\posq$ at the output of the switch.
\begin{align*}
h^\text{out}_{\posq}(x^i_j) &= x^i_0 \\ 
h^\text{out}_{\posq}(a^i_{j}) &= a^i_{0} \\
h^\text{out}_{\posq}(b^i_{j}) &= b^i_{0,j} \\
h^\text{out}_{\posq}(y^i_j) &= h^y_{\posq}(y^i_j) 
\end{align*}
Since $h^{\text{out}}_{\posq}$ and all $h^\text{out}_{\posq,\sigma}$ are total, 
\begin{align*}
  \mathcal H^\text{out}_{\posq} \defi \begin{cases}
                                         \cl(h^{\text{out}}_{\posq})\text{, } &\posq \text{ is invalid}, \\
              \cl(h^{\text{out}}_{\posq})\cup\bigcup_{\sigma\in S_{\kmo}} \cl(h^{\text{out}}_{\posq,\sigma}), & \text{otherwise,}
                                        \end{cases}
\end{align*}
is a winning strategy for Duplicator satisfying (ii).

If a homomorphism maps all the $a^i_{\posa(i)}$ vertices to $A^i_+$, then it has to map all $b^i$ vertices to $B^i_+$. This is due to the missing edges in (E5), (E11) and has also been used in Spoiler's strategy above. On the other hand, if for at least one $i\in[\kmo]$ all $a^i_j$ are mapped to $a^i_0$, then every $b^i_j$ can be mapped to $b^i_{0,l}$, where $l$ is chosen such that $a^j_{\posb(j),l}$ is not in the image of the homomorphism for every $j$. Duplicator benefits from this, because she can now map the $y^i_j$ vertices arbitrarily using the edges (E7). This behavior is used in the following restart strategies. Note that a homomorphism mapping some $a^i_j$ to $a^i_0$ also maps $x^i_j$ to $x^i_0$, hence restart strategies require invalid input positions.
For invalid $\posq=(\posa,\posb,T)$, let $\mathcal H^{\text{restart}}_{\posq} \defi \{\cl(h)\mid h\in H^{\text{restart}}_{\posq}\}$, where $H^{\text{restart}}_{\posq}$ is the set of total homomorphisms $h$ satisfying the constraints $h(x^i_j) = h^x_{\posq}(x^i_j)$ and $h(y^i_j) = y^i_0$. 
This set clearly satisfies (iii). 
As an example fix some $t\in T$ and let $g \in H^{\text{restart}}_{\posq}$ be the following homomorphism:
\begin{align*}
g(x^i_j) &= h^x_{\posq}(x^i_{j}), \\ 
g(a^i_{j}) &= a^i_{\posb(i),i}\text{, if }j=\posa(i)\text{ and }i\notin T\text{, } 
g(a^i_{j}) = a^i_{0}\text{, otherwise,}  \\
g(b^i_{j}) &= b^i_{0,t}, \\
g(y^i_{j}) &= y^i_{0}.
\end{align*}
It remains to consider the critical input strategies (iv). They formalize the following behavior of Duplicator at the time when Spoiler wants to pebble a configuration $\posq$ through the switch as in (i). Fix a valid configuration $\posq=(\posa,\posb,\emptyset)$. If Spoiler pebbles $a^i_{\posa(i)}$ or $b^i_{\posa(i)}$, Duplicator answers within $A^i_+$ or $B^i\setminus B^i_+$, respectively. This allows her to answer on the boundary according to the boundary function defined in (iv). However, she may run into trouble when Spoiler places $\kmo$ pebbles on $a^i_{\posa(i)}$ and $b^i_{\posa(i)}$ vertices, because they extend to a $(\kmo+1)$-clique on Spoiler's side, but not on Duplicator's side (on the blocks $A^i_+$ and $B^i\setminus B^i_+$). These positions form the critical positions where Duplicator switches to an output or restart strategy. If all $\kmo$ pebbles are on $a^1_{\posa(1)},\ldots,a^{\kmo}_{\posa(\kmo)}$, as in Spoiler's strategy (i), then Duplicator switches to the output strategy (\ie, she plays according to a homomorphism $h^{\text{out}}_{\posq,\sigma}$). In all other cases she switches to a restart strategy.
For all $\ell\in[\kmo]$ and permutations $\sigma$ on $[\kmo]$ we define partial homomorphism $h^{\text{in}}_{\posq,\sigma,\ell}$ as follows:

\begin{align*}
h^{\text{in}}_{\posq,\sigma,\ell}(x^i_j) &= h^x_{\posq}(x^i_{j})    \\
h^{\text{in}}_{\posq,\sigma,\ell}(a^i_{\posa(i)}) &= a^i_{\posb(i),\sigma(i)}\text{, } i\neq\sigma^{-1}(\ell) \\
h^{\text{in}}_{\posq,\sigma,\ell}(a^{i}_{\posa(i)}) &= \text{undefined, } i=\sigma^{-1}(\ell) \\
h^{\text{in}}_{\posq,\sigma,\ell}(a^i_j) &= a^i_{0}\text{, } j\neq \posa(i) \\
h^{\text{in}}_{\posq,\sigma,\ell}(b^i_j) &= b^i_{0,\ell} \\
h^{\text{in}}_{\posq,\sigma,\ell}(y^i_j) &= y^i_{0}   
\end{align*}
We need to check that $h^{\text{in}}_{\posq,\sigma,\ell}$ defines a homomorphism from $M_S\setminus\{a^{\sigma^{-1}(\ell)}_{\posa(i)}\}$ to $M_D$. For most parts this is easy to verify. The important part is to check that we do not map edges to the missing pairs in the edge sets (E5), (E8) and (E11) where we require that the indices $l$ and $l'$ have to be different. The constraints of (E8) are fulfilled because of the permutation $\sigma$. The constraints of (E5) and (E11) are satisfied because we have chosen $\ell$ such that no vertex maps to $a^i_{s,\ell}$ for all $i\in[\kmo]$ and $s\in[m]$. This also shows that the partial homomorphism cannot be extended to a total homomorphism (where $h^{\text{in}}_{\posq,\sigma,\ell}$ is defined on $a^{i}_{\posa(i)}$ for $i=\sigma^{-1}(\ell)$).
Now we define a partial homomorphism $h^{\text{in}}_{\posq,\sigma}$ for every permutation $\sigma \in S_{\kmo}$.
\begin{align*}
h^{\text{in}}_{\posq,\sigma}(x^i_j) &= h^x_{\posq}(x^i_{j}), \\
h^{\text{in}}_{\posq,\sigma}(a^i_{\posa(i)}) &= a^i_{\posb(i),\sigma(i)}, \\
h^{\text{in}}_{\posq,\sigma}(a^i_j) &=  a^i_{0}\text{, } j\neq \posa(i), \\
 h^{\text{in}}_{\posq,\sigma}(b^i_j) &= \text{undefined},\\
h^{\text{in}}_{\posq,\sigma}(y^i_j) &= y^i_{0}.
\end{align*}
Again it is not hard to see that $h^{\text{in}}_{\posq,\sigma}$ defines a partial homomorphism from $M_S$ to $M_D$. We cannot extend this partial homomorphism to a total homomorphism, because if we map $b^i_{\posa(i)}$ to some $b^i_{0,l}$ we will map to a missing edge in (E5) or (E11). Otherwise, if we chose some $b^i_{\posb(i),l}$, we will map the edge $\{b^i_{\posa(i)},y^i_{\posa(i)}\}$ in $M_S$ to the non-edge $\{b^i_{\posb(i),l},y^i_0\}$ in $M_D$. Duplicator's input strategy is the family of all subsets of all mappings $h^{\text{in}}_{\posq,\sigma,\ell}$ and $h^{\text{in}}_{\posq,\sigma}$.
We are ready to define the critical positions.
For all $\sigma\in S_{\kmo}$ let 
$$
h^{\text{out-crit}}_{\posq,\sigma} \defi  \{(a^i_{\posa(i)},a^i_{\posb(i),\sigma(i)})\mid i\in [\kmo]\}
$$
and for all $\sigma\in S_{\kmo}$ and $t,u\in [\kmo]$ and $s\in[n]$
$$
 h^{\text{restart-crit}}_{\posq,\sigma,t,u,s} \defi \{(a^i_{\posa(i)}, a^i_{\posb(i),\sigma(i)})\mid i\in[\kmo]\setminus \{t\}\}\cup\{(b^u_{s},b^u_{0,\sigma(t)})\}.
$$
Now we can define the sets used in (iv):
\begin{align*}
 \mathcal H^\text{in}_{\posq} &= \{\cl(h^{\text{in}}_{\posq,\sigma})\mid \sigma\in S_{\kmo}\}\cup\{\cl(h^{\text{in}}_{\posq,\sigma,\ell})\mid \sigma\in S_{\kmo}, \ell\in [\kmo]\}, \\
\mathcal C^\text{out-crit}_{\posq} &= \{h^{\text{out-crit}}_{\posq,\sigma}\mid \sigma\in S_{\kmo}\}, \\
\mathcal C^\text{restart-crit}_{\posq,t} &= \{h^{\text{restart-crit}}_{\posq,\sigma,t,u,s}\mid \sigma\in S_{\kmo}, u\in [\kmo],s\in[n]\},\\
\crit(\mathcal H^\text{in}_{\posq}) &= \bigcup_{t\in[\kmo]} \mathcal C^\text{restart-crit}_{\posq,t}\cup \mathcal C^\text{out-crit}_{\posq}.
\end{align*}
First note that $h^{\text{out-crit}}_{\posq,\sigma} \subset h^{\text{in}}_{\posq,\sigma}$ and $h^{\text{restart-crit}}_{\posq,\sigma,t,u,s}\subset h^{\text{in}}_{\posq,\sigma,\sigma(t)}$. It holds that $\crit(\mathcal H^\text{in}_{\posq})\subseteq \mathcal H^\text{in}_{\posq}$.
It easily follows from the definitions, that $h^{\text{out-crit}}_{\posq,\sigma}\subset h^{\text{out}}_{\posq,\sigma}$. Furthermore, every $h^{\text{restart-crit}}_{\posq,\sigma,t,u,s}$ can be extended to a homomorphism $g\in \mathcal H^\text{restart}_{(\posa,\posb,\{t\})}$ by defining 

\begin{align*}
	g(x^i_j) &= h^x_{(\posa,\posb,\{t\})}(x^i_j), \\
	g(a^i_{\posa(i)}) &= h^{\text{restart-crit}}_{\posq,\sigma,t,u,s}(a^i_{\posa(i)}) = a^i_{\posb(i),\sigma(i)}\text{, if }i\neq t,\\
	g(a^t_{\posa(t)}) &= a^t_0,\\
	g(a^i_{j}) &= a^i_0\text{, if }j\neq \posa(i),\\
	g(b^i_{j}) &= b^i_{\sigma(t)},\\
	g(y^i_j) &= y^i_0.\\
\end{align*}
This proves statement b) and c) from (iv). It remains to show that $\mathcal H^\text{in}_{\posq}$ is a critical strategy with critical positions $\crit(\mathcal H^\text{in}_{\posq})$.
\begin{claim}
 For all $g\in \mathcal H^\text{in}_{\posq}$ with $|g|\leq \kmo$, either $g\in \crit(\mathcal H^\text{in}_{\posq})$ or for all $z\in V(M_S)$ there exist an $h\in \mathcal H^\text{in}_{\posq}$, such that $g\subseteq h$ and $z\in\dom(h)$.
\end{claim}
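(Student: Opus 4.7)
The plan is to fix $g\in \mathcal H^\text{in}_{\posq}$ with $|g|\leq \kmo$ and recall that, by the definition of $\mathcal H^\text{in}_{\posq}$, there exist $\sigma\in S_{\kmo}$ and (optionally) $\ell\in[\kmo]$ such that $g$ is contained in either $h^{\text{in}}_{\posq,\sigma}$ or $h^{\text{in}}_{\posq,\sigma,\ell}$. Since both mappings agree on $x$, $a^i_j$ (for $j\neq\posa(i)$), $a^i_{\posa(i)}$ (for $i\neq\sigma^{-1}(\ell)$) and on $y$, the decisive structure of $g$ is captured by which vertices from the sets $A^{\bullet}=\{a^i_{\posa(i)}\mid i\in[\kmo]\}$ and $B_S=\{b^i_j\mid i\in[\kmo], j\in[n]\}$ lie in $\dom(g)$. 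The natural division is therefore a case analysis on $|\dom(g)\cap A^{\bullet}|$.

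The first case is $|\dom(g)\cap A^{\bullet}|=\kmo$. Then $g$ uses all $\kmo$ available pebbles on vertices $a^i_{\posa(i)}$ and must coincide with $h^{\text{out-crit}}_{\posq,\sigma}$ for some $\sigma$, hence $g\in\mathcal{C}^{\text{out-crit}}_{\posq}\subseteq\crit(\mathcal H^{\text{in}}_{\posq})$. The second case is $|\dom(g)\cap A^{\bullet}|=\kmo-1$. If $\dom(g)\cap B_S\neq\emptyset$, then $g$ matches $h^{\text{restart-crit}}_{\posq,\sigma,t,u,s}$ for appropriate parameters and is restart-critical. Otherwise $\dom(g)\cap B_S=\emptyset$, and I need to exhibit, for every $z\in V(M_S)$, an extension of $g$ inside $\mathcal H^{\text{in}}_{\posq}$ containing $z$: if $z$ is an $a$-vertex, the total homomorphism $h^{\text{in}}_{\posq,\sigma}$ already extends $g$ (because its restriction to $a$-vertices is fully defined); if $z$ is an $x$, $b$ or $y$ vertex, the mapping $h^{\text{in}}_{\posq,\sigma,\ell}$ does the job, where $\ell$ is chosen to leave the missing index in $A^{\bullet}$ undefined, so $z$ is safely covered by the $b^i_j\mapsto b^i_{0,\ell}$ or $y^i_j\mapsto y^i_0$ rule.

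The third and main case is $|\dom(g)\cap A^{\bullet}|\leq\kmo-2$, where at least two indices $j_1,j_2$ have $a^{j_1}_{\posa(j_1)}, a^{j_2}_{\posa(j_2)}\notin\dom(g)$. For any target $z\neq a^{j_1}_{\posa(j_1)}$ the mapping $h^{\text{in}}_{\posq,\sigma,\ell}$ with $\ell=\sigma(j_1)$ already extends $g$ and covers $z$. The only delicate subcase is $z=a^{j_1}_{\posa(j_1)}$, since this is exactly the vertex left undefined. The remedy is to switch to the permutation $\sigma'$ obtained from $\sigma$ by swapping its values on $j_1$ and $j_2$, so that the new undefined position in $h^{\text{in}}_{\posq,\sigma',\ell}$ is $a^{j_2}_{\posa(j_2)}$ instead of $a^{j_1}_{\posa(j_1)}$; since $j_2\notin\dom(g)\cap A^{\bullet}$, the mapping $h^{\text{in}}_{\posq,\sigma',\ell}$ still extends $g$ and now covers $z$.

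The one routine but slightly tricky obligation is to verify that this swap preserves agreement with $g$ on all $a$-entries that are in $\dom(g)$, which boils down to the observation that $g$ may determine $\sigma$ only on those $i$ for which $a^i_{\posa(i)}\in\dom(g)$, and both $j_1$ and $j_2$ are outside that set by construction. Once this is checked, the claim follows by the above trichotomy, and hence $\mathcal H^{\text{in}}_{\posq}$ is a critical strategy with the announced critical positions, as required for item (iv) of Lemma~\ref{lem:multi}.
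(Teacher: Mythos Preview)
Your proof is correct and follows essentially the same route as the paper's own argument: the same three-way case split on $|\dom(g)\cap\{a^i_{\posa(i)}\}|$, the same identification of the critical positions in Cases~1 and~2, and the same permutation-swap trick in Case~3. One small point of care: in Case~3 the index $\ell$ is already determined by any $b$-vertices present in $g$, so rather than ``choosing $\ell=\sigma(j_1)$'' you should phrase it (as the paper does) as choosing $j_1$ to be $\sigma^{-1}(\ell)$, which is always among the missing indices since $h^{\text{in}}_{\posq,\sigma,\ell}$ is undefined there.
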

\begin{innerproof} As $g$ is a partial homomorphism from $\mathcal H^\text{in}_{\posq}$ (which only contains subsets of $h^{\text{in}}_{\posq,\sigma,\ell}$ and $h^{\text{in}}_{\posq,\sigma}$), we can fix some $\sigma\in S_{\kmo}$ and $\ell\in [\kmo]$ such that $g$ is a subset of the following mapping
 \begin{align*}
 x^i_{\posa(i)} &\mapsto x^i_{\posb(i)}, & x^i_{j} &\mapsto x^i_{0}\text{, if }j\neq\posa(i), \\
 a^i_{\posa(i)} &\mapsto a^i_{\posb(i),\sigma(i)}, & a^i_{j} &\mapsto a^i_{0}\text{, if }j\neq\posa(i), \\
 b^i_j &\mapsto b^i_{0,\ell}, \\
 y^i_j &\mapsto y^i_{0}.
 \end{align*}
Let %
$B_S\defi \{b^i_j\mid i\in[\kmo],j\in[n]\}\subseteq V(M_S)$.

\noindent
\textbf{Case 1: $|\dom(g)\cap \{a^i_{\posa(i)}\mid i\in[\kmo]\}|=\kmo$.} 
In this case, $g=h^{\text{out-crit}}_{\posq,\sigma}$ and hence, $g\in\crit(\mathcal H^\text{in}_{\posq})$.

\noindent
\textbf{Case 2: $|\dom(g)\cap \{a^i_{\posa(i)}\mid i\in[\kmo]\}|=\kmo-1$.} 
If $\dom(g)\cap B_S \neq \emptyset$, then $g=h^{\text{restart-crit}}_{\posq,\sigma,\sigma^{-1}(l),u,s}$ for some $u\in[\kmo]$ and $s\in[n]$. Thus, we can assume that $\dom(g)\cap B_S = \emptyset$ and show for all $z$ that $g$ satisfies the extension property. If $z=a^i_j$, then $h^{\text{in}}_{\posq,\sigma}$ extends $g$. If $z=x^i_j$,$z=b^i_j$ or $z=y^i_j$, then $h^{\text{in}}_{\posq,\sigma,\ell}$ extends $g$.

\noindent
\textbf{Case 3: $|\dom(g)\cap \{a^i_{\posa(i)}\mid i\in[\kmo]\}|\leq \kmo-2$.}
Let $j_1$ and $j_2$ be two distinct indices such that $a^{j_1}_{\posa(j_1)}$, $a^{j_2}_{\posa(j_2)}\notin \dom(g)$. Furthermore, we can without loss of generality assume that $\sigma(j_1) = \ell$. For $z\neq a^{j_1}_{\posa(j_1)}$ the homomorphism $h^{\text{in}}_{\posq,\sigma,\ell}$ extends $g$. If $z=a^{j_1}_{\posa(j_1)}$, then $h^{\text{in}}_{\posq,\sigma',\ell}$ extends $g$, where 
$\sigma' \defi \{(i,\sigma(i))\mid i\in [\kmo]\setminus\{j_1,j_2\}\}\cup \{(j_1,\sigma(j_2)),(j_2,\sigma(j_1))\}.$
\end{innerproof}
\end{IEEEproof}

\newpage
\subsection{The Initialization Gadget}\label{sec:Ainit}

\begin{figure}[htp]
 \centering
  \input{init.tex}
 \caption{The initialization gadget (for $\kmo=3$, $m=4$, $n=5$, $\posa(1)=3$, $\posa(2)=1$, $\posb(3)=5$, $\posb(1)=2$, $\posb(2)=4$, $\posb(3)=3$.)} \label{fig:init}
\end{figure}

At the beginning of the game we want that Spoiler can reach the start configuration $\alpha^{-1}(0)$ on $\xnode$, which is the pebble position $\{(\xnode_1^1,\xnode_1^1),\ldots,(\xnode^k_1,\xnode^k_1)\}$. To ensure this, we introduce an initialization gadget and identify its output vertices $y^i_j$ with the block of $\xnode^i_j$ vertices. The main property is that Spoiler can force the start configuration on the output of the gadget. Another additional property is that from any position on the output of that gadget Duplicator does not lose. This property causes the main difficulties and is needed because other positions than the start position occur on the $\xnode$ vertices during the course of the game.
In other applications one might need to initialize the game with other configurations than $\alpha^{-1}(0)$. For this, we define the initialization gadget more generally for every valid configuration $\posq$.

The initialization gadget {\upshape INIT$^{\posq}$} is built out of two switches $M^{1}$ and $M^{2}$, vertices $z$ in Spoiler's graph and $z_1$, $z_2$ in Duplicator's graph. The three vertices $z,z_1,z_2$ share one unique vertex color. 
Additionally, there are output boundary vertices $y^i_j$ of the usual form. The vertices $z,z_1,z_2$ and the boundary vertices are connected to $M^1$ and $M^2$ as shown in Figure \ref{fig:init} for a specific valid configuration $\posq=(\posa,\posb,\emptyset)$. Lemma \ref{lem:generalInit} (i)--(iii) provides the strategies on {\upshape INIT$^{\posq}$}. The main property is that Spoiler can reach the start position $\posq$ at the boundary (i) and Duplicator has a corresponding counter strategy (ii) in this situation. 
Furthermore, if an arbitrary position occurs at the boundary during the game, Duplicator has a strategy to survive (iii).

\begin{lemma}\textbf{\upshape (A slightly more general version of Lemma~\ref{lem:init})} \hspace*{0.5cm} \label{lem:generalInit}
For every valid configuration $\posq=(\posa,\posb,\emptyset)$ the following holds in the existential $(\kmo+1)$-pebble game on {\upshape INIT$^{\posq}$}:
\begin{enumerate}
 \item[(i)] Spoiler can reach $\posq$ on the output.
 \item[(ii)] There is a winning strategy $\mathcal I^{\text{init}}$ for Duplicator with boundary function $h^y_{\posq}$.
 \item[(iii)] For every (valid or invalid) configuration $\posq'$ there is a critical strategy $\mathcal I^{\text{init}}_{\posq'}$ with boundary function $h^y_{\posq'}$ and $\crit(\mathcal I^{\text{init}}_{\posq'})\subseteq \mathcal I^{\text{init}}$.
\end{enumerate}
\end{lemma}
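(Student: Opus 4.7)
The plan is to use Lemma~\ref{lem:multi} as a black box on the two switches $M^1$ and $M^2$ and to combine the resulting substrategies via the composition operator $\circ$ introduced before Lemma~\ref{lem:multi}. For Spoiler's side~(i), the free pebble is first placed on $z$; by the unique coloring of $\{z,z_1,z_2\}$, Duplicator must answer with $z_1$ or $z_2$. By the construction of the edges incident to $z_1,z_2$ (Figure~\ref{fig:init}), each response forces exactly one switch $M^i$ to have the valid configuration $\posq$ at its input. Spoiler then applies Lemma~\ref{lem:multi}(i) on that switch to transport $\posq$ to the shared output; both responses are handled symmetrically.

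For Duplicator's strategies in (ii) and (iii), I plan to build two families of substrategies by pairing the input, output and restart strategies on the two switches in all the ways that are connectable at the top ($z,z_1,z_2$) and at the bottom (the $y$-block). For $i\in\{1,2\}$, $t\in[\kmo]$ and an arbitrary configuration $\posq'$, set
\begin{align*}
  \mathcal I^{\text{in-}i}_{t,\posq'} &\defi \cl(\{(z,z_i)\})\circ \mathcal H^{\text{in}}_{\posq}\langle M^i\rangle \circ \mathcal H^{\text{restart}}_{(\posa,\posb,\{t\})}\langle M^{3-i}\rangle \circ \cl(h^y_{\posq'}), \\
  \mathcal I^{\text{init-}i} &\defi \cl(\{(z,z_{3-i})\})\circ \mathcal H^{\text{out}}_{\posq}\langle M^i\rangle \circ \mathcal H^{\text{in}}_{\posq}\langle M^{3-i}\rangle \circ \cl(h^y_{\posq}).
\end{align*}
Then $\mathcal I^{\text{init}}_{\posq'}\defi \bigcup_{i,t}\mathcal I^{\text{in-}i}_{t,\posq'}$ is the candidate strategy for (iii), and $\mathcal I^{\text{init}}\defi \mathcal I^{\text{init-1}}\cup\mathcal I^{\text{init-2}}\cup\mathcal I^{\text{init}}_{\posq}$ is the candidate winning strategy for (ii). Each summand is a critical strategy with the prescribed boundary function, because $\circ$ unions the critical positions and concatenates the boundary functions, and the data on the shared $y$-block is supplied by the last factor.

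The core verification is that every critical position of every summand is absorbed as a non-critical position in some other summand. Critical positions of $\mathcal I^{\text{in-}i}_{t,\posq'}$ stem solely from the $\mathcal H^{\text{in}}_{\posq}\langle M^i\rangle$ factor and split, by Lemma~\ref{lem:multi}(iv)(a), into restart critical and output critical positions. By Lemma~\ref{lem:multi}(iv)(b), each restart critical position on $M^i$ lies, as a non-critical position, inside $\mathcal H^{\text{restart}}_{(\posa,\posb,\{t'\})}\langle M^i\rangle$ for some $t'$, which is precisely the $M^i$-factor of $\mathcal I^{\text{in-}(3-i)}_{t',\posq'}\subseteq \mathcal I^{\text{init}}_{\posq'}$. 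By Lemma~\ref{lem:multi}(iv)(c), each output critical position lies non-critically inside $\mathcal H^{\text{out}}_{\posq}\langle M^i\rangle$, which is the $M^i$-factor of $\mathcal I^{\text{init-}i}\subseteq \mathcal I^{\text{init}}$. This establishes (iii) once specialized to an arbitrary $\posq'$ and (ii) after observing that critical positions of $\mathcal I^{\text{init-}i}$ come only from $\mathcal H^{\text{in}}_{\posq}\langle M^{3-i}\rangle$ and are absorbed, by the same mechanism, either into $\mathcal I^{\text{init-}(3-i)}$ (output-critical case) or into $\mathcal I^{\text{init}}_{\posq}$ (restart-critical case).

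The main obstacle will be the careful but routine bookkeeping that the four families of substrategies are pairwise connectable on every shared boundary, that the $\circ$-compositions are well-defined critical strategies, and that the claimed boundary functions fall out of the last factor. Some additional care is needed because $z,z_1,z_2$ sit outside the switches, so the leading factor $\cl(\{(z,z_i)\})$ must be treated as a one-edge auxiliary gadget in order to apply the composition rule literally, as noted in the accompanying remark in the main text. Once these points are settled, the absorption containments reduce entirely to unfolding Lemma~\ref{lem:multi}(iv)(b)(c), which was tailored precisely so that a critical position arising from an input strategy on one switch reappears as a non-critical position in either a restart or an output strategy on the same switch.
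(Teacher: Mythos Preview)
Your proposal is correct and follows essentially the same approach as the paper: the same definitions of $\mathcal I^{\text{in-}i}_{t,\posq'}$, $\mathcal I^{\text{init-}i}$, $\mathcal I^{\text{init}}_{\posq'}$ and $\mathcal I^{\text{init}}$, and the same absorption argument via Lemma~\ref{lem:multi}(iv)(b)(c). You even anticipate the paper's caveat about treating the $z$-edges as an auxiliary gadget to apply the $\circ$-operator literally.
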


Spoiler's strategy is quite simple. First he pebbles $z$. Duplicator has to answer with either $z_1$ or $z_2$. Then Spoiler can reach $\{(x^{i}_{\posa(i)},x^{i}_{\posb(i)})\mid i\in [\kmo]\}$ by pebbling through either $M^1$ or $M^2$. To construct the strategies for Duplicator, we can combine the strategies of the switches $M^1$ and $M^2$ such that she plays an input strategy on one switch and a restart or output strategy on the other switch. Assume that Spoiler reaches a critical position on the switch where Duplicator plays the input strategy, say $M^1$. Duplicator can now flip the strategies such that she plays a restart or output strategy on $M^1$, depending on which kind of critical position Spoiler has reached, and an input strategy on $M^2$. 

\begin{IEEEproof}[Proof of Lemma \ref{lem:generalInit}]
 We start with developing the strategy for Spoiler (i). First, Spoiler pebbles $z$. Duplicator has to response with either $z_{1}$ or $z_{2}$. Depending on Duplicator's choice, Spoiler can reach either $\{(a^{i}_{\posa(i)},a^{i}_{\posb(i)})\mid i\in[\kmo]\}$ or $\{(b^{i}_{\posa(i)},b^{i}_{\posb(i)})\mid i\in[\kmo]\}$. By Lemma \ref{lem:multi}.(i) Spoiler reaches $\{(c^{i}_{\posa(i)},c^{i}_{\posb(i)})\mid i\in[\kmo]\}$ ($\{(d^{i}_{\posa(i)},d^{i}_{\posb(i)})\mid i\in[\kmo]\}$) and from there he can reach the position $\{(y^{i}_{\posa(i)},y^{i}_{\posb(i)})\mid i\in[\kmo]\}$.
For Duplicator's strategies we start with a discussion of possible moves outside of the switches. At the top of the gadget Duplicator can map $z$ to $z_1$ and is then forced to answer with $h^a_{\posq}$ at the input of $M^1$ and for some $R\subseteq[\kmo]$ with $h^b_{(\posa,\posb,R)}$ at the input of $M^2$. On the other hand, Duplicator can map $z$ to $z_2$ and play according to $h^a_{(\posa,\posb,R)}$ and $h^b_{\posq}$.
At the bottom of the switch the following three combinations define partial homomorphisms for all configurations $\posq'$:
\begin{align*}  
h^c_{\boldsymbol 0}\cup h^d_{\boldsymbol 0}\cup h^y_{\posq'} \\
h^c_{\posq}\cup h^d_{\boldsymbol 0}\cup h^y_{\posq} \\
h^c_{\boldsymbol 0}\cup h^d_{\posq}\cup h^y_{\posq}
\end{align*}
Now we can combine these partial strategies with the strategies on the switches described in Lemma \ref{lem:multi}. In strategy $\mathcal I^{\text{in-$i$}}_{t,\posq'}$ Duplicator plays an input strategy on switch $i$, a restart strategy on the other switch and according to an arbitrary configuration $\posq'$ on the $y$-block. These strategies were combined to the critical strategy $\mathcal I^{\text{init}}_{\posq'}$ described in (iii).
\begin{align*}
\mathcal I^{\text{in-1}}_{t,\posq'} &\defi  
\cl(\{(z,z_{1})\}) \circ 
\mathcal H^{\text{in}}_{\posq}\langle M^{1}\rangle \circ 
\mathcal H^{\text{restart}}_{(\posa,\posb,\{t\})}\langle M^{2}\rangle \circ 
\cl(h^y_{\posq'}) \\
\mathcal I^{\text{in-2}}_{t,\posq'} &\defi  
\cl(\{(z,z_{2})\}) \circ  
\mathcal H^{\text{restart}}_{(\posa,\posb,\{t\})}\langle M^{1}\rangle \circ 
\mathcal H^{\text{in}}_{\posq}\langle M^{2}\rangle \circ 
\cl(h^y_{\posq'}) \\
\mathcal I^{\text{init}}_{\posq'} &\defi 
\bigcup_{t\in [\kmo]} (\mathcal I^{\text{in-1}}_{t,\posq'}\cup \mathcal I^{\text{in-2}}_{t,\posq'})
\end{align*}
All critical positions of $\mathcal I^{\text{in-$i$}}_{t,\posq'}$ are restart or output critical positions on the switch $M^i$. By Lemma \ref{lem:multi}.(iv).(b) every restart critical position of $\mathcal I^{\text{in-1}}_{t,\posq'}$ is contained in one of the strategies $\mathcal I^{\text{in-2}}_{t,\posq'}$ as non-critical position. Hence, the only critical positions $\crit(\mathcal I^{\text{init}}_{\posq'})$ of the combined strategy are output critical positions on the switches. These output critical positions will be contained in the strategies $\mathcal I^{\text{init-$i$}}$ where Duplicator plays an output strategy on switch $i$. Together with $\mathcal I^{\text{init}}_{\posq}$ they form the winning strategy $\mathcal I^{\text{init}}$ from (ii).
\begin{align*}  
\mathcal I^{\text{init-1}} &\defi \cl(\{(z,z_{2})\}) \circ \mathcal H^{\text{out}}_{\posq}\langle M^{1}\rangle \circ \mathcal H^{\text{in}}_{\posq}\langle M^{2}\rangle \circ \cl(h^y_{\posq}) \\
\mathcal I^{\text{init-2}} &\defi \cl(\{(z,z_{1})\}) \circ \mathcal H^{\text{in}}_{\posq}\langle M^{1}\rangle \circ \mathcal H^{\text{out}}_{\posq}\langle M^{2}\rangle \circ \cl(h^y_{\posq}) \\
\mathcal I^{\text{init}} &\defi \mathcal I^{\text{init-1}} \cup \mathcal I^{\text{init-2}} \cup  \mathcal I^{\text{init}}_{\posq}
 \end{align*}
 $\mathcal I^{\text{init}}$ is a union of critical strategies with boundary function $h^y_{\posq}$.
To prove that $\mathcal I^{\text{init}}$ is indeed a winning strategy on the gadget, we show that every critical position of one strategy is contained as non-critical position in another strategy. 
Critical positions are inside the input strategy $\mathcal H^{\text{in}}_{\posq}$ on one of the switches. 
By Lemma \ref{lem:multi}.(iv) they are either contained in an output or restart strategy on the corresponding switch. Hence, all restart critical positions on $M^1$ and $M^2$ are contained in $\mathcal I^{\text{init}}_{\posq}$ and all output critical positions on $M^1$ ($M^2$) are contained in $\mathcal I^{\text{init-1}}$ ($\mathcal I^{\text{init-2}}$). 
Recall the notation $\mathcal{\widehat{S}} \defi \mathcal S\setminus \crit(\mathcal S)$, by Lemma \ref{lem:multi}.(iv) we get:
\begin{align*}
\crit(\mathcal I^{\text{in-2}}_{R,\posq'}) &= \crit(\mathcal I^{\text{init-1}}) 
= \crit(\mathcal H^{\text{in}}_{\posq}\langle M^{2}\rangle)\\
&\subseteq \mathcal H^\text{out}_{\posq}\langle M^{2}\rangle \cup \bigcup_{t\in[\kmo]} \mathcal H^\text{restart}_{(\posq,\{t\})}\langle M^{2}\rangle \\
&\subseteq \mathcal {\widehat I}^{\text{init-2}} \cup \bigcup_{t\in[\kmo]} \mathcal{\widehat I}^{\text{in-1}}_{\{t\},\posq}, \\
\crit(\mathcal I^{\text{in-1}}_{R,\posq'}) &= \crit(\mathcal I^{\text{init-2}}) 
= \crit(\mathcal H^{\text{in}}_{\posq}\langle M^{1}\rangle) \\
&\subseteq \mathcal H^\text{out}_{\posq}\langle M^{1}\rangle \cup \bigcup_{t\in[\kmo]} \mathcal H^\text{restart}_{(\posq,\{t\})}\langle M^{1}\rangle \\
&\subseteq \mathcal {\widehat I}^{\text{init-1}} \cup \bigcup_{t\in[\kmo]} \mathcal{\widehat I}^{\text{in-2}}_{\{t\},\posq}. 
\end{align*}
Hence, $\crit(\mathcal I^{\text{init}}_{\posq'})\subseteq \mathcal I^{\text{init}}$ and $\mathcal I^{\text{init}}$ is a winning strategy.
\end{IEEEproof}

\end{document}